\newcommand{\Expect}{\mathbb{E}}
\begin{document}

\title{Cost-Effective Federated Learning in\\

Mobile Edge Networks}

\author{Bing Luo,~\IEEEmembership{Member,~IEEE,}
        Xiang Li,~\IEEEmembership{Student~Member,~IEEE,}
        Shiqiang Wang,~\IEEEmembership{Member,~IEEE,}\\
        Jianwei Huang,~\IEEEmembership{Fellow,~IEEE,}
        Leandros Tassiulas,~\IEEEmembership{Fellow,~IEEE}

\thanks{
Bing Luo  is with Shenzhen Institute of Artificial Intelligence and Robotics for Society, The Chinese University of Hong Kong, Shenzhen, China, and the Department of Electrical Engineering and Institute for Network Science, Yale University, USA. (e-mail: luobing@cuhk.edu.cn)}
\thanks{Xiang Li and Jianwei Huang (corresponding author) are with the School of Science and Engineering, The Chinese University of Hong Kong, Shenzhen,  China, and the Shenzhen Institute of Artificial Intelligence and Robotics for Society, Shenzhen, China. (e-mail:lixiang@cuhk.edu.cn; jianweihuang@cuhk.edu.cn)}
\thanks{Shiqiang Wang  is with IBM T. J. Watson Research Center, Yorktown Heights, NY, USA. (e-mail: shiqiang.wang@ieee.org)}
\thanks{Leandros Tassiulas  is with the Department of Electrical Engineering and Institute for Network Science, Yale University, USA. (e-mail: leandros.tassiulas@yale.edu)}
\thanks{The research of Bing Luo was supported by the AIRS-Yale Joint Postdoctoral Fellowship. The research of Xiang Li and Jianwei Huang was supported by the Shenzhen Science and Technology Program (JCYJ20210324120011032), Shenzhen Institute of Artificial Intelligence and Robotics for Society, and the Presidential Fund from the Chinese University of Hong Kong, Shenzhen. The research of
Leandros Tassiulas was supported by the NSF  CNS-2112562 AI Institute for Edge Computing Leveraging Next Generation Networks (Athena) and the ONR  N00014-19-1-2566. This paper was presented in part at the IEEE INFOCOM, Virtual Conference, 2021~\cite{luo2020cost}.}
}

\maketitle
\begin{abstract}
Federated learning (FL) is a distributed learning paradigm that enables a large number of mobile devices to collaboratively learn a model under the coordination of a central server 
without sharing their raw data. 
Despite its practical efficiency and effectiveness, 
{the 
iterative on-device learning process (e.g., local computations and global communications with the server)} 
incurs a considerable cost in terms of learning time and energy consumption, 
which depends crucially on the number of selected clients and the number of local iterations in each training round. 
In this paper, 
we analyze how to 
design adaptive FL {in mobile edge networks} that optimally chooses
these essential control variables  
to minimize the 
total cost  
while ensuring convergence. 
We establish the analytical
 relationship  between  the  
total   cost  and  the control variables with the convergence upper bound. To efficiently solve the cost minimization problem,  
we develop a low-cost sampling-based algorithm to learn the   convergence related unknown parameters. 
We derive important solution properties that effectively identify the 
design principles 
for different optimization metrics.  
Practically, we  evaluate our theoretical results  both in a simulated environment and on a hardware prototype. Experimental evidence verifies our derived properties and demonstrates that our proposed solution 
achieves near-optimal performance for different optimization metrics for  various datasets 
and heterogeneous system and statistical settings. 

\end{abstract}
\begin{IEEEkeywords}
Federated learning,  mobile edge networks, cost analysis, scheduling, optimization algorithm.
\end{IEEEkeywords}
\IEEEpeerreviewmaketitle

\section{Introduction}

With the rapid advancement of Internet of Things (IoT) and social networking applications, there is an exponential increase of data generated at network edge devices, such as smartphones, IoT devices, and sensors \cite{chiang2016fog}. These valuable data provide highly useful information for the prediction, classification,  and other intelligent applications, which can improve our daily lives \cite{shalev2014understanding}. To analyze and exploit the large amount of data, standard machine learning (ML) techniques normally require collecting the training data {in a central server}. 
However, such centralized data collection and training can be quite challenging to perform due to limited 
communication bandwidth  
and data privacy concerns \cite{mao2017survey,park2019wireless}. 

To tackle this challenge, 
\emph{Federated learning (FL)} has emerged as an attractive distributed learning paradigm, which
enables many clients\footnote{Depending on the type of clients, FL can be categorized into cross-device FL and cross-silo FL (clients are companies or organizations, etc.) \cite{kairouz2019advances}. This paper focuses on the former and we use ``device'' and ``client'' interchangeably.}
to collaboratively train a  model under the coordination of a central server, while keeping the training data decentralized and private\cite{mcmahan2017communication,kairouz2019advances}. 
In FL settings, 
the training data are in general massively distributed over a large number of clients, 
and the communications between the server and clients are typically operated at lower rates compared to datacenter settings. 
These unique features necessitate  FL algorithms to 
perform \emph{multiple local iterations} in parallel on \emph{a fraction of randomly sampled clients} and then aggregate the resulting model update via the central server periodically \cite{mcmahan2017communication}.\footnote{{Model compression is also an effective approach for improving FL communication efficiency. While orthogonal to the focus of this work, standard compression approaches such as sketched or random masking and model pruning \cite{jiang2020pruning,kairouz2019advances} can be used together to further reduce the cost.}}

{FL has demonstrated its effectiveness 
in various \emph{statistically heterogeneous} settings, 
e.g., unbalanced and {non-independent and identically distributed (non-i.i.d.)} data  \cite{bonawitz2019towards, 
li2019convergence,9099064,9165960}. 
Nevertheless, an efficient deployment of FL in mobile edge networks also needs to consider \emph{system heterogeneity}.
{This is because in mobile edge environment, the system bandwidth is limited and shared by all connected mobile devices with potential mutual interference. %
 Moreover, the selected devices may have different computational capabilities and dynamic wireless channel conditions due to mobility and channel fading, thus an efficient  scheduling strategy in FL should adapt to system heterogeneity.}}  

Because model training and information transmission for on-device FL can be both time and energy consuming,
it is necessary and important to analyze the \emph{cost} that is incurred for completing a given FL task. 
In  general,  the  cost  of  FL  includes multiple components such as  learning  time  
and  energy consumption \cite{tran2019federated}. 
The importance of different cost components depends on the characteristics of FL systems and applications. 
For example, in a solar-based sensor network, energy consumption is the major concern for the sensors to participate in FL tasks, whereas in a multi-agent search-and-rescue task where the goal is to collaboratively learn an unknown map, achieving timely result would be the first priority. 
Therefore, a cost-effective FL design needs to \emph{jointly optimize various cost components} (e.g., learning  time and  energy consumption) \emph{for different preferences}.

A way of optimizing the cost is to adapt control variables in the FL process to achieve a properly defined objective. For example, some existing works have considered the adaptation of communication interval (i.e., the number of local iterations between two 
global aggregation rounds) for communication-efficient FL with convergence guarantees \cite{wang2019adaptive,wang2018adaptive}. 
However, a limitation in these works is that they only adapt a single control variable (i.e., communication interval) in the FL process and ignore other essential aspects, such as the number of participating clients in each round, which can have a significant impact on the energy consumption.
\begin{figure}[!t]
	\centering
	\includegraphics[width=9cm,height=6.2cm]{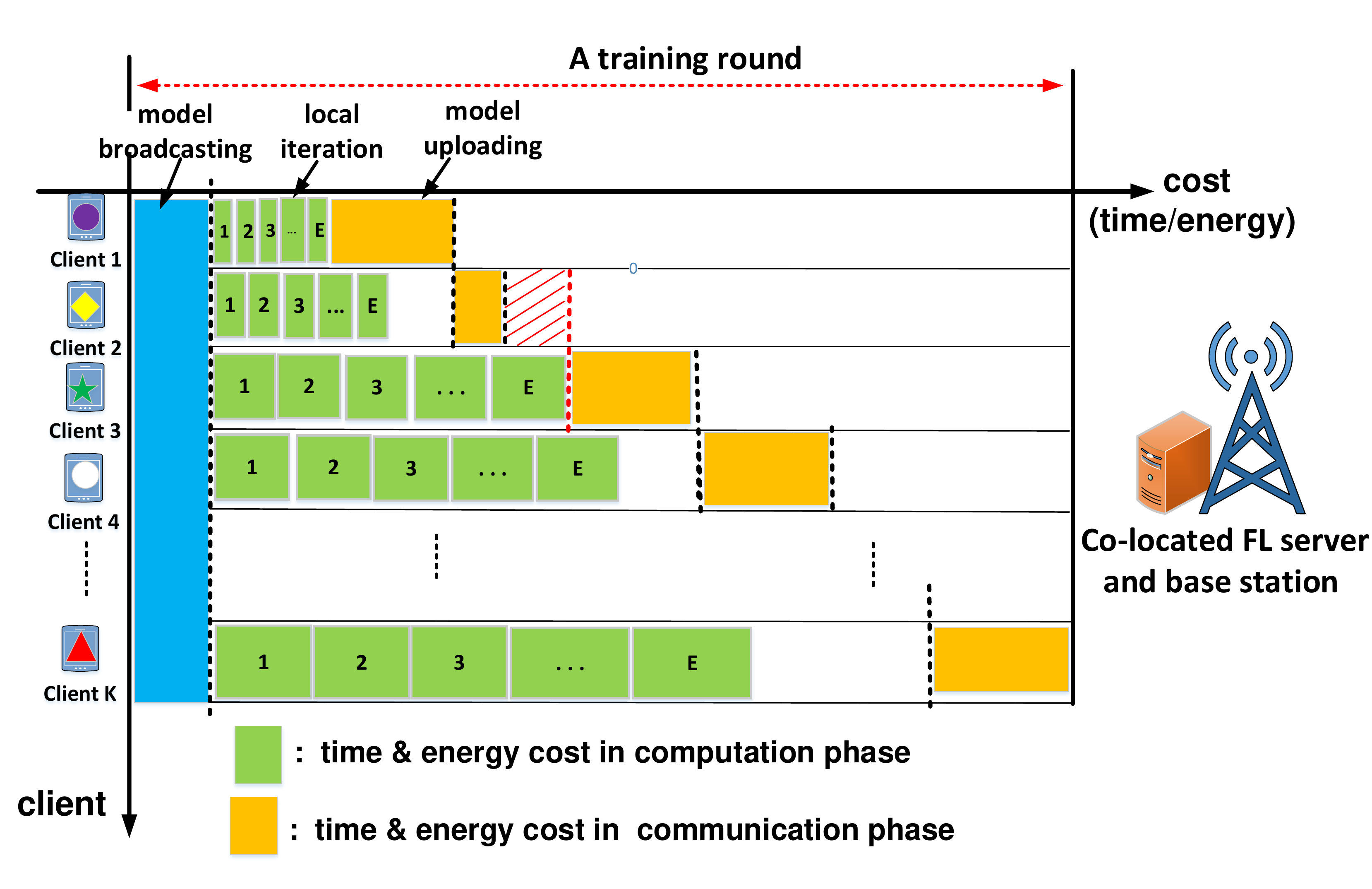}
 	\caption{A heterogeneous federated learning training round in  mobile network, where $K$ sampled clients performs $E$ steps of local iteration and communicates with the BS with time-sharing.}
	\label{fig:intro}
\end{figure}

In this paper,  we consider a \emph{multivariate} control problem for wireless FL with convergence guarantees. To minimize the expected cost, 
we develop an algorithm that adapts these control variables in the FL process to achieve our goal. 
Compared to the univariate setting in existing works, our \emph{multivariate} control problem is much more challenging due to the following reasons:  \emph{1)}~The choices of control variables are tightly coupled due to system and statistical heterogeneity. \emph{2)}~The relationship between the control variables and the learning convergence rate has only been captured by an upper bound with unknown coefficients in the literature. \emph{3)} Our cost objective includes multiple components (e.g., time and energy) which can have different importance depending on the 
application scenario, whereas existing works often consider a single optimization objective, e.g., minimizing the communication overhead.
In light of the above discussion, we state the key contributions of this work as follows:

\begin{itemize}
    \item  \emph{Cost-effective Wireless FL Design:}  
This work, for the first time, 
analyzes how to design adaptive FL  that optimally chooses the number of participating clients (${K}$) and the number of local iterations (${E}$)  to minimize the 
\emph{total} cost in mobile edge networks. 
Considering the wireless bandwidth limitation and interference, we propose a new scheduling scheme, as illustrated in Fig.~1, which characterizes both computation and communication heterogeneity. 

\item \emph{Optimization Algorithm:} We establish the analytical relationship between the 
    total cost, 
control variables, heterogeneous system parameters, and convergence upper bound, based on which we formulate and analyze the optimization problem for total cost minimization. 
When facing the challenging issue of estimating the  unknown coefficients in the convergence bound, 
we develop an effective sampling-based algorithm to learn these parameters with a small estimation overhead. 
    \item {\emph{Theoretical Properties
     :} We  obtain  important analytical properties  
     that effectively identify the design principles of $K$ and $E$ for  
     different 
     optimization metrics and heterogeneous system parameters. 
     Notably,  
     the choice of $K$ leads to an interesting trade-off between learning time reduction and energy saving, with a relatively large $K$ favoring the former while $K=1$ always benefiting the latter. 
{In contrast, the total cost, no matter emphasizing learning time or energy consumption, always first decreases and then increases in $E$.} 
We also show how heterogeneous system parameters in terms of computation and communication affect the optimal $K$ and $E$.}

    \item \emph{Simulation and Experimentation:} We evaluate our theoretical results 
    with real and synthetic heterogeneous datasets, both in a simulated wireless cellular network and on a WiFi-based hardware prototype. 
Experimental results demonstrate that our proposed optimization algorithm {provides near-optimal solution for different optimization metrics, 
and verify our theoretical findings for design principles and solution properties. 
}

\end{itemize}

The rest of the paper is organized as follows. We first present the related work in Section~\ref{sec:related}. Then, we present the system model and problem formulation in  Section~\ref{sec:systemModel}. In Section~\ref{sec:optimizationProblem}, we analyze the cost minimization problem and present an algorithm to solve it. We provide theoretical analysis on the solution properties in Section~\ref{sec:property}.
Experimentation results are given in Section~\ref{sec:experimentation} and the
conclusion is presented in Section~\ref{sec:conclusion}.

\begin{table*}[!tbp]
\label{relate_work}
\caption{{Summary of related work for FL cost minimization with control parameters}}
 \resizebox{\textwidth}{35mm}{\begin{tabular}{c||c|c||c|c|c|c}
\toprule[1.2pt]
\multicolumn{1}{c||}{\multirow{2}{*}{{reference}}} & \multicolumn{2}{c||}{{optimization goal}}                            & \multicolumn{4}{c}{{control parameters}}                                                                                                                     \\ \cline{2-7} 
\multicolumn{1}{c||}{} 
& {training time} & {energy consumption} 
& {local iteration number} & {sampled client number} & {mobile clients scheduling} & {resource allocation (CPU/power/bandwidth)} \\ \hline
[13]                                                 &          \checkmark            &              \checkmark                           &                                      &                            &      \checkmark     &      \checkmark                                   \\
\hline
[14]                                                 &          \checkmark            &              \checkmark                           &                             \checkmark           &                            &         &                                       \\
\hline
[15]                                                &   \checkmark                  &                                         &               \checkmark                       &                                     &   &                                   \\
\hline[27]                                                                   & \checkmark    &            &                         &                                  &  \checkmark  &     \checkmark                                           \\ \hline
[28]                                                                   & \checkmark &                                        &                                   & \checkmark  &                                    &                                           \\ \hline
[29]                                                &                  \checkmark  &     &                                                                     &   \checkmark   & \checkmark                                    &                                           \\ \hline
[30]                                                                    & \checkmark    &                                     &                                  &   & \checkmark                                    &                                 \checkmark           \\\hline
[31]                                                                    & \checkmark    &                                    &                                  &   & \checkmark                                    &            \checkmark                                \\\hline
[33]                                                                    & \checkmark    &                                    &                                  &   & \checkmark                                    &                                        \\\hline
[34]                                                                    &     &   \checkmark                                  &                                 &    & \checkmark                                    &       \checkmark                               \\\hline
[35]                                                                      &    &   \checkmark                                   &                                     &                                   &  \checkmark &            \checkmark                \\\hline
[36]                                                &                    & \checkmark    &                                                                          &                                   & \checkmark  &             \checkmark                     \\\hline
[37]                                                &                    & \checkmark    &                                                                          &                                  &  \checkmark   &             \checkmark              \\
\hline
[38]&                                  \checkmark                                   &    &                                                                     \checkmark       &                                &     &                              \\
\hline
[39]                                                &               \checkmark     &                                        &     \checkmark                                  &                                     &     &                         \\
\hline
[41]                                                &               \checkmark     &                                       &     \checkmark                                  &        \checkmark                               &     &                         \\
\hline
\textbf{this work}                                                &    \textbf{\checkmark}                &       \textbf{\checkmark}                                     &               \textbf{\checkmark}                       &    \textbf{\checkmark}                        &    \textbf{\checkmark}        &                                      \\
\midrule[1.2pt]
\end{tabular}}
\end{table*}

\section{Related Work}
\label{sec:related}
FL was first proposed in \cite{mcmahan2017communication}, which demonstrated FL's effectiveness of collaboratively learning a model without collecting users' data.
The de facto FL algorithm 
is federated averaging (FedAvg), which performs multiple local iterations in parallel 
 on  a subset of 
 devices in each round.
 A system-level FL framework \cite{bonawitz2019towards}  demonstrates the empirical success of  FedAvg in mobile devices. 
Recently, a convergence bound of FedAvg was established in \cite{li2019convergence}. 
{Other related distributed optimization algorithms are mostly analyzed for homogeneous and i.i.d. datasets (e.g.,  \cite{yu2018parallel,stich2018local,wang2018cooperative}) and full client participation (e.g., \cite{khaled2019first,smith2017federated}), which do not capture the essence of on-device FL.}  
{{Some extensions of FedAvg considered aspects such as adding a proximal term~\cite{li2018federated}, using gradient descent acceleration~\cite{liu2020accelerating},  
 variance reduction (e.g., \cite{karimireddy2020scaffold,liang2019variance}),  {or fine tuning hyper-parameters~\cite{8814558,reddi2020adaptive}}}. 
While novel insights are provided in these works, they did not consider optimization for cost/resource efficiency.}

{Literature in FL cost optimization mainly focused on learning time and on-device energy consumption in mobile edge networks.
The optimization of learning time was studied in \cite{   chen2020convergence,nguyen2020fast,nishio2019client,shi2020device, wadu2020federated,wang2020optimizing, yang2019scheduling,jiang2020pruning}, and
joint optimization for learning time and energy consumption was considered in \cite{mo2020energy,zeng2020energy,yang2019energy,luo2020hfel}.  These works considered resource (e.g., transmission power, communication bandwidth, and CPU frequency) allocation (e.g., \cite{ chen2020convergence,nguyen2020fast,mo2020energy,zeng2020energy,yang2019energy,luo2020hfel}), cost-aware client selection (e.g., \cite{nishio2019client, shi2020device}), client scheduling (e.g., \cite{wadu2020federated, wang2020optimizing,yang2019scheduling}), 
and model pruning\cite{jiang2020pruning} for \emph{pre-specified} (i.e., non-optimized) design parameters ($K$ and $E$ in our case) of the FL algorithm.}  

The optimization of a single design parameter $E$ 
 was studied in~\cite{wang2019adaptive,wang2018adaptive,han2020adaptive,luping2019cmfl,hsieh2017gaia}, most of which assume full client participation and can be infeasible for large-scale on-device FL. A very recent work in \cite{jin2020resource} considered the optimization of both $E$ and client selection for additive per-client costs. {However, the cost of learning time in our problem is non-additive on a per-client basis, since different clients perform local model updates in parallel. In addition, the convergence bound used in~\cite{jin2020resource} (also~\cite{tran2019federated}) is for a primal-dual optimization algorithm, which is different from the commonly used FedAvg algorithm. 
The challenge in optimizing both $K$ and $E$ for cost minimization of FedAvg, 
which also distinguishes our work from the above, is the need 
\emph{to analytically connect the total cost with the control variables as well as with the convergence rate}.} {The comparison of our work with the above related works is illustrated in Table~\ref{relate_work}, where we note that our optimization design is orthogonal to most works on resource allocation in the last column in Table~\ref{relate_work} and can be used together with those techniques to further reduce the cost.}

In addition, most existing work on FL are based on simulations, whereas we implement our algorithm in an actual hardware prototype with resource-constrained devices.

\section{
System Model and Problem Formulation}
\label{sec:systemModel}

We start by summarizing the basics of FL and its de facto algorithm FedAvg. Then, we introduce the scheduling strategy when applying FL in mobile edge networks with system heterogeneity. Finally, we present the cost optimization problem for FL tasks. We summarize all key notations in this paper in Table~II.

\subsection{Federated Learning}
Consider a scenario with a large number of mobile
clients that have data for training a machine learning  model. 
Due to data privacy 
concern, it is not desirable for clients to disclose and send their raw 
data to a high-performance data center. FL is a decentralized learning framework that aims to resolve this problem. 
Mathematically, FL is the following distributed optimization problem:
\begin{equation}
\label{gl_ob}
\min_{\mathbf{w}}  F\left( \mathbf{w} \right) :=\sum\nolimits_{k = 1}^N{p_k}{F_k}\left( \mathbf{w} \right)
\end{equation}
where the objective $F\left( \mathbf{w} \right)$ is also known as the global loss function, $\mathbf{w}$ is the model parameter vector, $N$ is the total number of devices, and $p_k>0$ is the weight of the $k$-th device such that $\sum\nolimits_{k = 1}^N p_k=1$. 
Suppose the $k$-th device has $n_k$ training data samples ($\mathbf{x}_{k, 1}, \cdots, \mathbf{x}_{k, n_{k}}$), and the total number of training data samples across $N$ devices is $n :=\sum\nolimits_{k \!=\! 1}^N n_k$, then we have $p_k=\frac{n_k}{n}$.
The local loss function of client $k$ is
\begin{equation}
\label{lo_ob}
{F_k}\left( \mathbf{w} \right) := \frac{1}{{{n_k}}}\sum\limits_{j =1}^{n_k} {{f}\left( \mathbf{w}; \mathbf{x}_{k,j} \right)},
\end{equation}
where $f(\cdot)$ represents a per-sample loss function, e.g., mean square error or cross entropy applied to the output of a model with parameter $\mathbf{w}$ and input data sample $\mathbf{x}_{k,j}$ \cite{wang2019adaptive}. 

FedAvg (Algorithm~1) was proposed in \cite{mcmahan2017communication} to solve \eqref{gl_ob}. In each \emph{round}~$r$, a subset of randomly selected clients $\mathcal{K}^{r}$ run $E$ steps\footnote{$E$ is originally defined as epochs of SGD in \cite{mcmahan2017communication}. In this paper we denote $E$ as the number of local iterations for theoretical analysis.} of stochastic gradient decent (SGD) on~\eqref{lo_ob} in parallel, {where $\mathcal{K}^{r} \subseteq \{1,2,...,N\}$}. Then, the updated model parameters of these $\left|\mathcal{K}^{r}\right|$ clients are sent to and aggregated by the server. This process repeats for many rounds until the global loss converges.

FL has demonstrated its effectiveness in tackling with statistical heterogeneity, e.g., unbalanced and non-i.i.d. data distribution. Meanwhile, applying it in practical mobile edge networks also needs to address clients' computational and communication heterogeneity. Therefore, an efficient FL design requires \emph{joint consideration of statistical and system heterogeneity}.

\begin{table}[!t]
 \label{keynotation}
	\centering
	\caption{Summary of all key notations}
	\begin{tabular}{l|l}
		\toprule[1pt] 
		$F\left( \mathbf{w} \right)$ & Global loss function \\
	    	${F_k}\left( \mathbf{w} \right)$ & Local loss function  \\
		$\mathbf{w}^*$ & Optimal model parameter that minimizes $F\left( \mathbf{w} \right)$\\
		$\mathbf{w}_R$ & Final model parameter after $R$ rounds\\
			$\epsilon$ & Desired precision with $\Expect[F(\mathbf{w}_R)]-F^{*} \le \epsilon$\\
			$R$ & Final round number for achieving $\epsilon$ \\
					$r$ & Round number index \\
		
		$\mathcal{K}^{r}$ & Randomly selected clients in round $r$  \\
		$K$ & Number of selected clients with $K\!:=\! \left| \mathcal{K}^{r}\!\right|\!$ \\
		$E$ & Number of local iteration steps\\
		$T^{r}$ &  Round time in round $r$ \\
		 $e^{r}$ &  Energy consumption in round $r$  \\
		$t_{k,p}$ &  Com\underline{p}utation  time of client $k$ for one iteration \\ 
			$e_{k,p}$ & Com\underline{p}utation  energy of client $k$ for one iteration\\
		$t_{k,m}^{r}$  & Co\underline{m}munication time of client $k$ in round $r$ \\
				$e_{k,m}^{r}$  & Co\underline{m}munication energy of client $k$ in round $r$\\
		  $\overline{t}_{k,m}$ & Expected	$t_{k,m}^{r}$  in $R$ rounds
		  \\
		   $\overline{e}_{k,m}$ & Expected  	$e_{k,m}^{r}$ in $R$ rounds
		   \\
			 $t_p$ & Expected per-device per-iteration computation time \\
			 $e_p$ & Expected per-device per-iteration computation energy\\
	  $t_m$ & Expected per-device per-round  communication time \\
		 $e_m$ & Expected per-device per-round  communication energy\\
		 $T_\textnormal{tot}$  & Total learning time after $R$ rounds \\
		 $e_\textnormal{tot}$  & Total energy consumption after $R$ rounds\\
	$T_k^{r}$  & Time after scheduling client $k$ in round $r$ \\
		$C_\textnormal{tot}$  & Balanced total cost  \\
		$\gamma$  & {Normalized price factor} \\
		$A_0$, $B_0$ & Unknown constants in 
		convergence bound\\
		$F_a$, $F_b$ & Pre-defined global loss for estimation $A_0$,  $B_0$  \\
		$R_a$, $R_b$ & Pre-defined round number for estimation $A_0$, $B_0$ \\
\bottomrule[1pt]
	\end{tabular}%
	\label{tabl}%
\end{table}%

\begin{algorithm}[t]
\small
	\caption{Federated  Learning Algorithm}
	\label{alg:fedavg}
	\KwIn{$K$, $E$, precision $\epsilon$, initial model $\mathbf{w_0}$, initial round index $r=0$}
	\KwOut{Final model parameter $\mathbf{w}_R$}
	\While{$r \le R$
	}{	Server randomly selects a subset of clients  $\mathcal{K}^{r}$ and broadcast the current global model parameter $\mathbf{w}_r$ to the selected clients\label{alg:fedavgStep1}\tcp*{Communication}
		
		Each selected client $k \in \mathcal{K}^{r} $ in parallel updates $\mathbf{w}_r$ by running $E$ steps  of SGD  on  \eqref{lo_ob} to compute a new model $\mathbf{w}_r^{(k)}$\label{alg:fedavgStep2}\tcp*{Computation}
		
		Each selected client $k \in \mathcal{K}^{r}$   sends back the updated model $\mathbf{w}_r^{(k)}$ to the server\label{alg:fedavgStep3}\tcp*{Communication}
		
		Server computes the new global model parameter $\mathbf{w}_{r+1} \leftarrow \frac{\sum_{k \in \mathcal{K}^{r}} p_k \mathbf{w}_r^{(k)}}{\sum_{k \in \mathcal{K}^{r}} p_k}  $\label{alg:fedavgStep4}\tcp*{Aggregation}
		
		$r \leftarrow r+1$;
	}
\end{algorithm}

\subsection{Deploying Federated Learning in Mobile Edge Networks}
In mobile edge networks, due to system bandwidth limitation and wireless interference, participating clients are usually scheduled in a time-sharing (TS) \cite{tran2019federated,mo2020energy} or frequency-sharing (FS) \cite{shi2020device, chen2020convergence,yang2019energy} protocols.\footnote{{We consider cellular network based one-hop communication between the server and the clients, since its topology is well-aligned with that in mainstream FL community. For FL in a decentralized topology, e.g., in wireless ad-hoc networks, the communications could be multi-hop, which is beyond our focus in this work.} Although leveraging analog aggregation techniques in wireless communications \cite{zhu2019broadband, yang2020federated, amiri2020machine} can increase the communication efficiency, it may require stringent synchronization and additional superposition code design.} 
{However, 
the selected heterogeneous clients may have diverse
  communication  and  computation capabilities, which results in different per-round training time under different scheduling strategies, especially for clients with dynamic wireless channels.}  This is because the per-round time depends on the slowest client (known as straggler), as 
  the server needs to collect all updates from the sampled clients before performing global aggregation. 

{In this paper, as shown in Fig. 1, we propose a new TS based FL model which works efficiently with  heterogeneous computation and communication time. We note that the empirical results in Section~\ref{sec:experimentation} demonstrate that our proposed TS is superior to existing FS schemes in \cite{zhu2019broadband, yang2020federated, amiri2020machine}. {The main reason is that, in those FS schemes, the frequency allocation for the sampled clients in each round is \emph{static}, which causes a waste in bandwidth resources. This is because, due to computational heterogeneity, 
clients who complete computation early can be allocated with more bandwidth for transmission before the next client completing computation, instead of always sharing the entire bandwidth static with other slower clients.}} In the following, we first describe our system model and then present the proposed TS protocol.  

\subsubsection{\textbf{System Model}} 
Similar to existing works \cite{mcmahan2017communication, li2019convergence, li2018federated}, 
we sample $K$ clients in each round $r$ (i.e., $K := \left| \mathcal{K}^{r} \right|,{ 1\le K \le N}$), where the sampling is uniform at random (without replacement) out of all $N$ clients. 
Similar to existing studies on wireless FL (e.g., \cite{ shi2020device, chen2020convergence,yang2019energy}), we assume that a particular device has the same  com{p}utation cost (time and energy) over multiple rounds, whereas the communication  cost 
varies between rounds due to wireless dynamics. 
We do not consider the server's downlink cost for model broadcasting and model aggregation, as we mainly focus on the performance bottleneck of the  battery-constrained edge devices.

\subsubsection{\textbf{Proposed Time-sharing Protocol}}
Consider a general round $r$, where the server (co-located with the base station) first broadcasts the current global model to the randomly selected $\mathcal{K}^{r}$ clients. 
Then, each of the selected clients performs $E$ steps of local iterations \emph{in parallel}, where we denote 
$t_{k,p}$ as the com\underline{p}utation time for client $k$ to perform one local iteration. 
In the communication phase, however, the selected clients \emph{sequentially} upload their model to the server in different time slots, where we denote $t_{k,m}^{r}$ as the co\underline{m}munication time for client $k$ to upload the model parameter in round $r$.\footnote{We note that $t_{k,m}^{r}$ can be calculated by $t_{k,m}^{r}=\left.{\Omega} \middle/ {B\log \left( {1 + \frac{p_k\bar{h}_k^{r}}{N_0}}\right)}\right.$, where  $\Omega$ is the size of model parameter, $B$ is the bandwidth, $p_k$ is the transmission power (we do not consider power control in this work, though it can be incorporated in future work), 
$\bar{h}_k^{r}$ is the average channel gain
of the client $k$ during the FL training time in round $r$ and $N_0$ is the  white noise power.}
 
Due to the difference between $t_{k,p}$ and $t_{k,m}^{r}$ among the selected $K$ clients in $\mathcal{K}^{r}$, the  time of round $r$  depends on the \emph{scheduling order} of the $K$ clients.
We present the optimal TS scheduling with the following theorem, which achieves the minimum time for a particular round $r$  with the selected clients $\mathcal{K}^{r}$. {We note that the proposed scheduling scheme mainly considers the total learning time for the sampled clients rather than their energy consumption. This is because the total energy consumption is the sum of all sampled clients' energy  cost,  which is independent of the scheduling order of the $K$ clients.}   

\newtheorem{theorem}{\textbf{Theorem}}
\begin{theorem}
\label{theorem1}
\textbf{(Optimal time-sharing scheduling)}
For any sampled clients set $\mathcal{K}^{r}$ in round $r$,
without loss of generality, we assume that  $\mathcal{K}^{r}$ is ordered based on $\{t_{k,p}: \forall k \in \mathcal{K}^{r}\}$, such that $t_{1,p}\le  \ldots \le t_{k,p}\le \ldots \le t_{K,p}$. Then, sequentially scheduling the ordered $K$ clients yields the minimum time of round $r$, compared to any other scheduling sequence. Under this scheduling sequence, denoting $T_k^{r}$ as the time after scheduling client $k$, we have 
\begin{equation}
\label{tdd}
       T_k^{r}=\max \left\{ {{t_{k,p}E}, {T_{k-1}^{r}}} \right\} + {t_{k,m}^{r}},\   k\in \left\{ {1, \ldots K} \right\},
\end{equation}
where $T_K^{r}$ is the entire time of round $r$ after scheduling all $K$ clients, and we define $T_0^{r}:=0$ and $T^{r} := T_K^{r}$ for convenience. 
\end{theorem}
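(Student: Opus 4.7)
The plan is to prove Theorem~1 in two parts: first establishing the recursion in \eqref{tdd}, then proving optimality of the sorted order via a pairwise-exchange argument.

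For the recursion, I would argue directly from the proposed TS protocol. Under any fixed scheduling sequence, client $k$ can begin transmitting its local update only after both (i)~it has finished its $E$ local iterations, which completes at time $t_{k,p}E$ (local computation runs in parallel with the other clients and starts at time $0$ of the round), and (ii)~the previous client $k-1$ has vacated the uplink channel, which happens at time $T_{k-1}^{r}$. Hence client $k$'s upload starts at $\max\{t_{k,p}E,\,T_{k-1}^{r}\}$ and finishes $t_{k,m}^{r}$ later, which is exactly \eqref{tdd}. With $T_0^{r}:=0$, iterating this for $k=1,\dots,K$ gives $T^{r}=T_K^{r}$ for every scheduling sequence; only the indexing of $t_{k,p}$ and $t_{k,m}^{r}$ depends on the order.

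For optimality, I would use an adjacent-swap argument. Take any scheduling sequence $\sigma$ and suppose two adjacent clients at positions $i,i+1$ violate the sort, i.e.\ $t_{\sigma(i),p}>t_{\sigma(i+1),p}$. Let $a:=T_{i-1}^{r}$ (unaffected by the swap), and abbreviate $p_1:=t_{\sigma(i),p}E$, $p_2:=t_{\sigma(i+1),p}E$, $m_1:=t_{\sigma(i),m}^{r}$, $m_2:=t_{\sigma(i+1),m}^{r}$, with $p_1>p_2$. Before the swap, $T_i^{r}=\max\{p_1,a\}+m_1\ge p_1\ge p_2$, so the inner maximum at step $i+1$ collapses, giving $T_{i+1}^{r}=\max\{p_1,a\}+m_1+m_2$. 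After the swap, $T_{i+1}^{r,\mathrm{new}}=\max\bigl\{p_1,\ \max\{p_2,a\}+m_2\bigr\}+m_1$. A short case split on which term dominates the outer $\max$ shows $\max\bigl\{p_1,\,\max\{p_2,a\}+m_2\bigr\}\le\max\{p_1,a\}+m_2$: if $p_1$ dominates, this is $p_1\le\max\{p_1,a\}+m_2$; otherwise, it is $\max\{p_2,a\}\le\max\{p_1,a\}$, which holds since $p_2<p_1$. Therefore $T_{i+1}^{r,\mathrm{new}}\le T_{i+1}^{r}$.

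Finally, since the completion times of positions $i+2,\dots,K$ depend on the earlier schedule only through $T_{i+1}^{r}$ (their own $t_{k,p}$ and $t_{k,m}^{r}$ are untouched), a simple induction on $j\ge i+1$ using the monotonicity of $\max\{\cdot,\cdot\}$ in its second argument gives $T_j^{r,\mathrm{new}}\le T_j^{r}$, and in particular $T_K^{r,\mathrm{new}}\le T_K^{r}$. Finitely many such adjacent swaps (bubble-sort style) transform any schedule into the one that orders clients by non-decreasing $t_{k,p}$ without ever increasing the makespan, proving that the sorted schedule attains the minimum round time.

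The main obstacle is the two-line case analysis inside the exchange step, where the nested maxima must be handled carefully; everything else is bookkeeping. A minor subtlety is breaking ties when $t_{\sigma(i),p}=t_{\sigma(i+1),p}$, but these pairs can simply be declared already in order, so the bubble-sort procedure still terminates and the optimality statement is unaffected.
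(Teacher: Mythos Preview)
Your proof is correct and follows essentially the same adjacent-swap (exchange) argument as the paper's Appendix~\ref{theorem1app}. Your version is in fact more complete: you explicitly derive the recursion~\eqref{tdd} and carefully propagate the inequality $T_{i+1}^{r,\mathrm{new}}\le T_{i+1}^{r}$ to the final makespan via monotonicity, whereas the paper's proof only compares the local completion times after the swap and leaves the downstream propagation implicit.
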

We give the proof in Appendix \ref{theorem1app}. 
Compared to existing TS scheduling strategy in \cite{tran2019federated} where communication from the clients to the server will not begin until  all sampled clients complete local iterations, the scheduling policy specified  in  Theorem~\ref{theorem1} enables each client to upload its result to the server as soon as it finishes the computation when the wireless channel is available,  
which potentially reduces system waiting time and thus the round time $T^{r}$. 
The \emph{max} function in \eqref{tdd} characterizes whether the channel is vacant when a certain client completes its local iteration, e.g., the red shadow time slots in Fig.~1.

\subsection{Cost Analysis of Federated Learning in Mobile Edge Networks}
Despite the effectiveness and efficiency of FL,  practitioners need to take into account the cost of completing FL tasks. The total cost of FL, according to Fig.~1, involves  \emph{learning time}  and  \emph{energy consumption}, both of which are consumed during local computation  and global communication in each round.  

\subsubsection{\textbf{Time Cost}}
Based on the optimal TS scheduling in Theorem \ref{theorem1}, the per-round time after scheduling the selected $K$ 
 clients is $T^{r}$, 
which depends on the number of sampled clients $K$ and the number of location computation steps $E$.  
The total learning time $T_\textnormal{tot}$ after $R$ rounds 
is
\begin{equation}
\label{Ttot}
T_\textnormal{tot}(K,E,R)=\sum\nolimits_{r=1}^{R}T^{r}.
\end{equation}

\subsubsection{\textbf{Energy Cost}}
Similar to the notation for time cost, 
by denoting $e_k$ as the per-round energy consumption  for client $k$ to complete the computation and communication, we have
\begin{equation}
\label{energyk}
e_{k}^{r}=e_{k,p}  E+ e_{k,m}^{r},
\end{equation}
where $e_{k,p}$ is the com\underline{p}utation energy for client $k$ to perform one step local iteration in each round, and $e_{k,m}^{r}$ is client $k$'s  com\underline{m}unication energy in round $r$ due to wireless dynamics. 

Unlike the 
per-round time cost, the energy cost $e^{r}$ in each round $r$ depends on the \emph{sum} energy consumption of the selected $K$ clients. 
Therefore, the total energy cost $e_\textnormal{tot}$ after $R$ rounds 
can be expressed as
\begin{equation}
\label{etot}
e_\textnormal{tot}(K,E,R)=\sum\nolimits_{r=1}^{R}\sum\nolimits_{k \in \mathcal{K}^{r}}e_{k}^{r}.
\end{equation}

\subsection{Problem Formulation}
Considering the difference of the two cost metrics, the optimal solutions of $K$, $E$ and $R$ generally do not achieve the common goal for minimizing both $T_\textnormal{tot}$ and $e_\textnormal{tot}$. 
To strike the balance between learning time and energy consumption, we introduce a weight $\gamma \in \left[0, 1\right]$ and optimize the balanced  cost function in the following form: 
\begin{equation}
\label{homocost}
    C_\textnormal{tot}(K,E,R)= \gamma  e_\textnormal{tot}(K,E,R)+\left(1-\gamma\right)
    T_\textnormal{tot}(K,E,R), 
    \end{equation}
{where $1-\gamma$  and $\gamma$ can be interpreted as the \emph{normalized price} of the two costs, i.e., how much monetary cost for one unit of time and one unit of energy, respectively. The value of $\gamma$ can be adjusted for different preferences.} 
For example, we can set $\gamma=0$ when all clients are plugged in and energy consumption is not a major concern, whereas $\gamma=1$ when devices are solar-based sensors where saving the devices' energy is the priority.   

Our goal is to minimize the {expected} total cost while ensuring convergence, 
which translates into this problem:
\begin{equation}\begin{array}{cl}
\label{ob1}
\!\!\!\!\!\!\!\!\textbf{P1:}\quad\quad \min_{E, K, R} & \Expect[C_\textnormal{tot}(E,K,R)] \\
\quad\quad \text { s.t. } & \Expect[F(\mathbf{w}_R)]-F^{*} \le \epsilon,\\
& K,E,R \in \mathbb{Z}^{+}, \  \text{and}\ \  1 \le K \le N. 
\end{array}\end{equation}
{where $\Expect[F(\mathbf{w}_R)]$ is the expected loss after $R$ rounds, 
$F^*$ is the (true and unknown) minimum value of $F$, and $\epsilon$ is the desired precision. {We note that the expectation in Problem \textbf{P1} takes over three source of randomness, where the first two  randomness come from the client sampling and data sampling in SGD in each round, and the third randomness comes  from the varying communication cost in each round due to dynamic wireless channel conditions.}}

{Solving Problem  \textbf{P1} is challenging in two aspects. First, it is difficult to find an \emph{exact analytical expression} to relate  $K$, $E$ and $R$ with $C_\textnormal{tot}$, especially due to the \emph{
maximum} function in $T_\textnormal{tot}$. 
Second, it is generally impossible to obtain an exact analytical relationship
to connect $K$, $E$ and $R$  with the convergence constraint. 
In the following section, we propose an algorithm that approximately solves Problem  \textbf{P1}, which we later show with extensive experiments that the proposed solution can achieve a  near-optimal performance of Problem  \textbf{P1}.} 

\section{Cost-Effective Optimization Algorithm}
\label{sec:optimizationProblem}

This section shows how to approximately solve Problem  \textbf{P1}. 
{We first formulate an 
alternative problem that  
includes an approximate 
analytical relationship between the expected cost $\Expect[C_\textnormal{tot}]$, the convergence constraint, and the control variables $E$, $K$ and $R$. 
Then, we show that this new optimization problem 
can be efficiently solved after estimating 
unknown parameters associated with the convergence bound, and we propose a sampling-based algorithm to learn these unknown parameters.}
\subsection{Approximate Solution to Problem  \textbf{P1}}
\subsubsection{\textbf{Analytical Expression of}  $\Expect[e_\textnormal{tot}]$} We first 
analytically establish the expected energy cost $\Expect[e_\textnormal{tot}]$ with $K$ and $E$.
\newtheorem{lemma}{{Lemma}}
\begin{lemma}
\label{lemma:expect_e_tot}
The expectation of $e_\textnormal{tot}$ in \eqref{etot} can be expressed as
\begin{equation}
\label{het_energy}
   \Expect[e_\textnormal{tot}(K,E,R)]=  K\left(e_{p}E+ e_{m}\right)R,
\end{equation}
where $e_p\!:=\!\frac{\sum_{k=1}^{N} e_{k,p}}{N}$ and $e_m\!:=\!\frac{\sum_{k=1}^{N}\sum_{r=1}^{R}{e}_{k,m}^{r}}{NR}$ denote the average per-device energy consumption for one local iteration and one round of communication, respectively.
\end{lemma}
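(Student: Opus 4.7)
The plan is to compute the expectation by unfolding the definition of $e_\textnormal{tot}$ in \eqref{etot}, substituting the per-client per-round decomposition $e_k^r = e_{k,p}E + e_{k,m}^r$ from \eqref{energyk}, and then carefully handling the two sources of randomness: the uniform-without-replacement sampling of $\mathcal{K}^r$ from $\{1,\ldots,N\}$, and (implicitly) the wireless fluctuations that determine $e_{k,m}^r$. Concretely, I would first write
\[
e_\textnormal{tot}(K,E,R) \;=\; \sum_{r=1}^{R}\sum_{k\in \mathcal{K}^r}\bigl(e_{k,p}E + e_{k,m}^r\bigr)
\;=\; \sum_{r=1}^{R}\sum_{k=1}^{N}\mathbf{1}\{k\in\mathcal{K}^r\}\bigl(e_{k,p}E + e_{k,m}^r\bigr),
\]
which converts the sum over the random index set into an indicator form that linearity of expectation can act on cleanly.

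Next I would take expectations term-by-term. Since $\mathcal{K}^r$ is drawn uniformly at random without replacement from $N$ clients with $|\mathcal{K}^r|=K$, the marginal inclusion probability satisfies $\Pr[k\in\mathcal{K}^r]=K/N$ for every $k$ and every $r$. Assuming (as in the paper's model) that $e_{k,p}$ is deterministic and that the sampling is independent of the channel realizations that determine $e_{k,m}^r$, I get
\[
\Expect\!\left[\sum_{k\in\mathcal{K}^r}e_{k,p}E\right] = \frac{K}{N}\sum_{k=1}^{N}e_{k,p}E = KE\,e_p,
\qquad
\Expect\!\left[\sum_{k\in\mathcal{K}^r}e_{k,m}^r\right] = \frac{K}{N}\sum_{k=1}^{N}\Expect[e_{k,m}^r].
\]
Summing over $r=1,\ldots,R$ and using $e_m=\frac{1}{NR}\sum_{k=1}^{N}\sum_{r=1}^{R}\Expect[e_{k,m}^r]$ (interpreting the definition in the lemma as the time-averaged expected communication energy), the second piece collapses to $KR\,e_m$. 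Combining yields $\Expect[e_\textnormal{tot}]=K(e_pE+e_m)R$.

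Finally, I would briefly justify why no ``straggler-style'' coupling appears here, unlike the time-cost case: the energy cost per round is a \emph{sum} over the sampled clients rather than a maximum, so linearity of expectation applies directly and the scheduling order from Theorem~\ref{theorem1} is irrelevant. The main (mild) obstacle is being careful about what randomness the expectation in Problem \textbf{P1} covers: one must verify that the client-sampling randomness is independent of the wireless randomness in $e_{k,m}^r$, so that the inclusion probability $K/N$ factors out of $\Expect[\mathbf{1}\{k\in\mathcal{K}^r\}\,e_{k,m}^r]$. Given the independent uniform sampling assumption stated in Section~III, this is immediate, and the rest is bookkeeping.
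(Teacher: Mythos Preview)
Your proposal is correct and follows essentially the same approach as the paper: both rely on the uniform-sampling inclusion probability $\Pr[k\in\mathcal{K}^r]=K/N$ (equivalently, each client is sampled in $KR/N$ rounds in expectation) combined with linearity of expectation applied to the additive per-round energy $e_{k,p}E+e_{k,m}^r$. Your version is simply more explicit, using indicator variables and spelling out the independence between client sampling and the wireless randomness in $e_{k,m}^r$, whereas the paper compresses this into a two-sentence sketch.
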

\begin{proof}
Given that all devices are sampled uniformly at random in each round and each round $K$ out of $N$ clients are sampled, thus for $R$ rounds, each device will be sampled in $\frac{KR}{N}$ rounds in expectation. Based on the result in \eqref{energyk}  that each device $k$ consumes $e_{k,p}  E+ {e}_{k,m}^{r}$ energy in each round, 
summing up $e_{k,p}  E+ {e}_{k,m}^{r}$ over all $N$ clients and over $\frac{KR}{N}$ rounds lead to \eqref{het_energy}.
\end{proof}

\subsubsection{\textbf{Analytical Expression of} $\Expect[T_\textnormal{tot}]$} 

{Next, we give an approximate expression of 
the expected time cost $\Expect[T_\textnormal{tot}]$ with $K$ and $E$. 
{In the following, we first show how to approximate the  per-round  time  $T^r$ in \eqref{tdd}, based on which we formulate a combinatorial expression of expected $T_\textnormal{tot}$ in \eqref{Ttot}. Then, we further approximate the expectation of $T_\textnormal{tot}$ for analytical tractability.}} 

{Considering that the communication time is usually the bottleneck in wireless FL settings, 
it is most likely that the maximum in \eqref{tdd}  is equal to $T_{k-1}^{r}$ for any $k  \in \{2, \ldots, K\} $, e.g., the red shadow time slot in Fig.~\ref{fig:intro} rarely happens. In other words, the communication channel will be usually occupied after the fastest client (e.g., $k=1$ in the $K$ clients reordered as in Theorem~\ref{theorem1}) finishes computation. 
Therefore, 
$T^{r}$ in \eqref{tdd} can be approximated by 
\begin{equation}
\label{app_tr_wireless}
       T^{r}\approx
      {t_{1,p}}E+\sum_{k=1}^K{t_{k,m}^{r}}.
\end{equation}}
Note that the ordering of $\{t_{k,p}: \forall k \in \mathcal{K}^{r}\}$ varies across different rounds due to random sampling, i.e., the client with index $k=1$ may be different in different rounds. With a slight abuse of notation, we use $t_{i,p}$ to denote the $i$-th fastest client (in terms of computation) out of all $N$ clients (i.e., before sampling), 
such that
\begin{equation}
\label{reorder}t_{1,p} \leq t_{2,p} \leq \ldots \leq t_{i,p}  \leq \ldots \leq t_{N,p}.
\end{equation}

\begin{lemma}
\label{lemma:expect_t_tot}
With the reordered clients 
as in \eqref{reorder} and the approximate  $T^{r}$ as in \eqref{app_tr_wireless}, the expectation of $T_\textnormal{tot}$ in \eqref{Ttot} can be expressed as\footnote{The notation of $C_{N}^{K}$ is also noted as $\binom{N}{K}$ which represents the combination number of choosing $K$ out of $N$ without replacement.} 
\begin{equation}
\label{avgtime}
\Expect[T_\textnormal{tot}(K,E,R)]\approx\left(\frac{\sum\nolimits_{i=1}^{N-i+1}{C_{N-i}^{i-1}}t_{i,p} }{C_{N}^{K}}E
+t_mK\right)R,
\end{equation}
where $t_m:=\frac{\sum_{i=1}^{N}\sum_{r=1}^{R}{t}_{i,m}^r}{NR}$ is the average per-device time cost for one round of communication. 
\end{lemma}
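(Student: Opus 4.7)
The plan is to take expectations term-by-term in the approximate per-round expression \eqref{app_tr_wireless} for $T^r$, then sum over the $R$ rounds. Starting from
\[
T^r \approx t_{1,p}E + \sum_{k=1}^{K} t_{k,m}^r,
\]
where $t_{1,p}$ denotes the minimum computation time \emph{within} the sampled set $\mathcal{K}^r$ (after reordering as in Theorem~\ref{theorem1}), linearity of expectation gives $\Expect[T^r] \approx E\cdot \Expect[t_{1,p}] + \Expect[\sum_{k\in\mathcal{K}^r} t_{k,m}^r]$, and then \eqref{Ttot} yields $\Expect[T_\textnormal{tot}] = \sum_{r=1}^R \Expect[T^r]$.

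For the computation term, the key step is to compute $\Expect[t_{1,p}]$ with respect to the uniform random sampling (without replacement) of $K$ out of the $N$ clients ordered as in \eqref{reorder}. The minimum equals $t_{i,p}$ exactly when the $i$-th client (under the global ordering) is sampled \emph{and} none of the faster clients $1,\dots,i-1$ is sampled; the remaining $K-1$ sampled clients must then come from the $N-i$ strictly slower indices. A straightforward counting argument gives
\[
\Pr\!\bigl(\text{fastest sampled client is } i\bigr) \;=\; \frac{\binom{N-i}{K-1}}{\binom{N}{K}},
\]
valid for $i=1,\dots,N-K+1$ (and zero otherwise, since $\binom{N-i}{K-1}=0$ when $N-i<K-1$). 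Summing $t_{i,p}$ against this mass produces exactly the first term $\tfrac{\sum_{i=1}^{N-K+1}\binom{N-i}{K-1} t_{i,p}}{\binom{N}{K}}\cdot E$ stated in the lemma (matching the summation in the lemma with its upper limit being $N-K+1$).

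For the communication term, I use the fact that each client is sampled in round $r$ with probability $K/N$, so that by linearity
\[
\Expect\!\Bigl[\sum\nolimits_{k\in\mathcal{K}^r} t_{k,m}^r\Bigr] \;=\; \frac{K}{N}\sum_{i=1}^{N} t_{i,m}^r .
\]
Summing this over $r=1,\dots,R$ and invoking the definition $t_m := \frac{\sum_{i=1}^{N}\sum_{r=1}^{R} t_{i,m}^r}{NR}$ collapses the double sum into $K R\, t_m$. Combining with the computation part (which is identically distributed across rounds since the sampling distribution is stationary) and factoring out $R$ produces the claimed expression.

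The main obstacle is really the combinatorial step, i.e., correctly identifying the distribution of the in-sample minimum under sampling without replacement and keeping track of the index range so that the binomial coefficients are well-defined. Everything else is routine linearity of expectation and bookkeeping. I would also remark that the ``$\approx$'' in the lemma comes entirely from the upstream approximation \eqref{app_tr_wireless}; once that is adopted, all subsequent steps are exact equalities, so no additional approximation is introduced in the present proof.
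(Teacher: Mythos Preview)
Your proposal is correct and follows essentially the same route as the paper's proof: the paper also computes $\Pr(\text{fastest sampled client is }i)=\binom{N-i}{K-1}/\binom{N}{K}$ to obtain the computation term, then uses the marginal sampling probability $K/N$ together with linearity to get $\Expect[\sum_{k\in\mathcal{K}^r} t_{k,m}^r]=t_m K$, and finally multiplies by $R$. The only cosmetic difference is that the paper invokes independence of computation and communication times before splitting the expectation, whereas you (more cleanly) just use linearity of expectation, which suffices.
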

We give the full proof in Appendix \ref{lemma2app}. 
The basic idea is to show the expectation of $T^{r}$ in \eqref{app_tr_wireless} is 
$\Expect[T^{r}] \approx \frac{\sum\nolimits_{i=1}^{N-K+1}{C_{N-i}^{K-1}}t_{i,p} }{C_{N}^{K}}E+t_mK,$
where $\frac{\sum\nolimits_{i=1}^{N-K+1}{C_{N-i}^{K-1}}t_{i,p} }{C_{N}^{K}}E$ 
represents that the probability of client $i$ being the first 
scheduling client is
$\frac{{C_{N-i}^{K-1}}}{C_{N}^{K}}$, and the second term $t_mK$ is derived similar to $e_m$ in \emph{Lemma} 1. In particular, the multiplication of $K$ in $t_mK$ characterizes the scheduling property of wireless communications with limited bandwidth.

However,  $\Expect[T_\textnormal{tot}]$ in \eqref{avgtime} is still hard to analyze due to the various combinatorial terms with respect to control variable $K$. For analytical tractability, similar to how we derive \eqref{het_energy}, we define an approximation of $\Expect[T_\textnormal{tot}]$ as
\begin{equation}
    \label{appro_t_tot}
    \tilde{\Expect}[T_\textnormal{tot}(K,E,R)] :=\left(t_pE+t_mK\right)R,
\end{equation}
where $t_p:=\frac{\sum_{i=1}^{N} t_{i,p}}{N}$ is the average per-device time cost for one local iteration. 
The approximation $\tilde{\Expect}[T_\textnormal{tot}]$ in \eqref{appro_t_tot} is equivalent to $\Expect[T_\textnormal{tot}]$ in \eqref{avgtime} in the following two cases.

\noindent\emph{Case 1}: For clients with homogeneous computation capabilities, e.g., $t_{p}=t_{i,p}$, $\forall i \in \{1, \ldots, N\}$, along with the recursive property of  $C_{m}^{n}\!+C_{m}^{n-1}\!=\!C_{m+1}^{n}$, 
we have 
\begin{equation}
\notag
\label{avgtimehomo}
\begin{array}{cl}
\Expect[{T_\textnormal{tot}}(K,E,R)]&=\left(t_pE\frac{\sum\nolimits_{i=1}^{N-K+1}{C_{N-i}^{K-1}}}{C_{N}^{K}}+t_mK\right)
R\\
&=\left(t_p E+t_mK\right)R\\ &=\tilde{\Expect}[T_\textnormal{tot}(K,E,R)].
\end{array}
\end{equation}
\noindent\emph{Case 2}: For clients with heterogeneous computation capabilities with
 $K=1$, we have 
\begin{equation}
\notag
\label{avgtimek=1}
\begin{array}{cl}
\Expect[{T_\textnormal{tot}}(K=1,E,R)]&=\left(\frac{\sum\nolimits_{i=1}^{N}{t_{i,p}E}}{N}+t_mK\right)R\\
&=\left(t_p E+t_mK\right) R\\
&=\tilde{\Expect}[T_\textnormal{tot}(K=1,E,R)].
\end{array}
\end{equation}

{In the following cost analysis, we use $\tilde{\Expect}[T_\textnormal{tot}(K,E,R)]$ in \eqref{appro_t_tot} as the approximate expected value of the original total learning time in \eqref{Ttot}.}

\subsubsection{\textbf{Analytical
Relationship Between $\Expect[C_\textnormal{tot}]$ and Convergence}}
Based on 
$\Expect[e_\textnormal{tot}]$ in \eqref{het_energy} and the approximation $\tilde{\Expect}[T_\textnormal{tot}(K,E,R)]$ in \eqref{appro_t_tot}, 
we formulate an approximate objective function of Problem  \textbf{P1} as
\begin{equation}
\label{obj_fun} 
\tilde{\Expect}[C_\textnormal{tot}]=  \left(\gamma K\left( e_pE+ e_m\right)+\left(1-\gamma\right)\left(t_pE+t_mK\right)\right)R.
\end{equation}
{To connect 
$\tilde{\Expect}[C_\textnormal{tot}]$ with the $\epsilon$-convergence constraint in \eqref{ob1} as well as to characterize the heterogeneous data, we utilize the 
convergence result \cite{li2019convergence}}\footnote{{We use the convergence result in \cite{li2019convergence} because both \cite{li2019convergence} and our work analyze the FedAvg algorithm, and the convergence result in \cite{li2019convergence} includes both  control variables $E$ and $K$ for cost minimization. We also note that the convergence result in \cite{li2019convergence} requires clients to decay the learning rate throughout the training process, which we show later in Section VI.}}: 
\begin{equation}
    \label{upperbound1}
    \Expect[F(\mathbf{w}_R)]\!-\!F^{*}\le \frac{1}{ER}\left(A_0+B_0\left(1+
 \frac{N-K}{K(N\!-\!1)}\right) E^2\right),
\end{equation} 
where $A_0$ and 
$B_0$ 
are loss function related constants characterizing the statistical  heterogeneity of non-i.i.d. data.
By letting the upper bound satisfy the convergence constraint,\footnote{We note that optimization using upper bound as an approximation has also been adopted in \cite{wang2019adaptive} and resource allocation based literature \cite{tran2019federated, chen2020convergence,yang2019energy}.} we approximate Problem \textbf{P1} as
\begin{equation}\begin{array}{cl}
\label{ob2}
\!\!\!\!\!\!\textbf{P2:} \  \min_{E, K, R} & \!\!\!  \left(\gamma K\left( e_pE+ e_m\right)+\left(1\!-\!\gamma\right)\left(t_pE+t_mK\right)\right)R \\
\quad \text { s.t. } &\!\!\! \frac{1}{ER}\left(A_0+B_0\left(1+
 \frac{N-K}{K(N-1)}\right) E^2\right) \le \epsilon\\
& K,E,R \in \mathbb{Z}^{+}, \  \text{and}\ \  1 \le K \le N.
\end{array}\end{equation}
Combining with 
\eqref{upperbound1}, we can see that Problem  \textbf{P2} is more constrained than Problem \textbf{P1}, as any feasible solution  of  \textbf{P2} is also feasible for \textbf{P1}, but not vice versa.

Problem \textbf{P2}, however, is a non-linear constrained \emph{integer} optimization problem, which is still difficult to solve in general.   Therefore, we relax $K$, $E$ and $R$ as continuous variables for theoretical analysis, which are rounded back to integer variables\footnote{{For two integer variables we have four rounding combinations of $\left(\left \lceil{K}\right \rceil, \left \lceil{E}\right \rceil \right)$, $\left(\left \lceil{K}\right \rceil, \left \lfloor{E}\right \rfloor \right)$, $\left(\left \lfloor{K}\right \rfloor, \left \lceil{E}\right \rceil \right)$, and $\left(\left \lfloor{K}\right \rfloor, \left \lfloor{E}\right \rfloor \right)$.}} later. For the relaxed problem, if any feasible solution $E^\prime, K^\prime$, and $R^\prime$ satisfies the $\epsilon$-constraint in Problem \textbf{P2} with inequality, we can always decrease this $R^\prime$ to some $R^{\prime\prime}$ ($R^{\prime\prime} < R^\prime$), which satisfies the constraint with equality but reduces the objective function value. Hence, for optimal $R$, the $\epsilon$-constraint is always satisfied with equality, and we can obtain $R$ from this equality as 
\begin{equation}
R = \frac{1}{\epsilon E}\left(A_0+B_0\left(1+
 \frac{N-K}{K(N-1)}\right) E^2\right).
 \label{eq:R_solution}
\end{equation}
By 
substituting \eqref{eq:R_solution} into problem  \textbf{P2}, we obtain 
\begin{equation}
\label{ob3}
\begin{array}{cl}
\!\!\!\textbf{P3:} \ 
\min_{E, K}  & \!\!\!\!\!\left(\left(1\!-\!\gamma\right)\left(t_pE+\!t_mK\right)\!+\!\gamma K \left(e_p E\!+\!e_m\right)\right) \\&\cdot \frac{\left(\!A_0+B_0\left(\! 1+
 \frac{N-K}{K(N-1)}\!\right) E^2\right)}{\epsilon E}  \\
\!\!\!\!\!\quad\text {s.t.} &  \!\!\! {E}\ge 1, \  \text{and}\ \  1 \le K \le N, 
\end{array}
\end{equation}

In the following, we solve Problem \textbf{P3} as an approximation of the original problem \textbf{P1}. 
Our empirical results in Section~\ref{sec:experimentation} demonstrate that the solution obtained from solving  \textbf{P3} 
achieves \emph{near-optimal performance} of the original problem \textbf{P1}. 
For ease of analysis, we incorporate $\epsilon$ in the constants $A_0$ and $B_0$.

\subsection{Solving the Approximate Optimization Problem  \textbf{P3}}
In this subsection, we first characterize some properties of the optimization problem \textbf{P3}. Then, we propose a sampling-based algorithm to learn the problem-related unknown parameters $A_0$ and $B_0$, based on which the  solution $K^*$ and $E^*$ (of Problem \textbf{P3}) can be efficiently computed. The overall algorithm for obtaining $K^*$ and $E^*$ is given in Algorithm~\ref{alg:optimalSolution}.

\begin{algorithm}[t]
\small	\caption{Cost-effective design of $K$ and $E$}
\label{alg:optimalSolution}
	\KwIn{{$N$, $\gamma$, $t_p$, $t_m$, $e_p$, $e_m$, {loss $F_a$ and $F_b$,  $\mathbf{w}_0$, number of sampled pairs $M$}}, stopping condition $\epsilon_0$}
	\KwOut{{$K^*$ and $E^*$}}

\For{$i=1,2, \ldots, M$ \label{alg:optimalSolution:startEstimation}}{
Empirically choose $(K_i$, $E_i)$ and run Algorithm~1;

Record $R_{i,a}$ and $R_{i,b}$ when $F_a$ and $F_b$ are reached;
}

Calculate average $\frac{A_0}{B_0}$ using \eqref{A0B05}; \label{alg:optimalSolution:endEstimation}

Choose a feasible 
$z_0 \leftarrow \left(K_0, E_0\right)$ and set $j \leftarrow 0$; \label{alg:optimalSolution:startOptimization}

\While{$\Vert z_{j} - z_{j-1}\Vert > \epsilon_0$}{ 
Substitute $E_{j}$, $\frac{A_0}{B_0}$, $N$, $\gamma$, $t_p$, $t_m$, $e_p$, $e_m$ into  \eqref{ob3} 
and derive $K^\prime$;
 
 
$K_{j+1} \leftarrow \arg\min_{K \in [1, N]} |K - K'|$; \label{alg:optimalSolution:projectionK}

Substitute $K_{j+1}$,  $\frac{A_0}{B_0}$, $N$, $\gamma$, $t_p$, $t_m$, $e_p$, $e_m$ into \eqref{ob3} 
and derive $E^\prime$;


$E_{j+1} \leftarrow \arg\min_{E \geq 1} |E - E'|$; \label{alg:optimalSolution:projectionE}

$z_{j+1} \leftarrow \left(K_{j+1}, E_{j+1}\right)$ and $j \leftarrow j+1$;
}

Substitute four rounding combinations of $\left(\left \lceil{K_j}\right \rceil, \left \lceil{E_j}\right \rceil \right)$, $\left(\left \lceil{K_j}\right \rceil, \left \lfloor{E_j}\right \rfloor \right)$, $\left(\left \lfloor{K_j}\right \rfloor, \left \lceil{E_j}\right \rceil \right)$, and $\left(\left \lfloor{K_j}\right \rfloor, \left \lfloor{E_j}\right \rfloor \right)$ into the objective function of problem  \textbf{P3}, and set the pair with the minimum value as $\left(K^*, E^*\right)$ \label{alg:optimalSolution:rounding}

\Return $\left(K^*, E^*\right)$
\label{alg:optimalSolution:endOptimization}



\end{algorithm}

\subsubsection{\textbf{Characterizing Problem  \textbf{P3}}}

\begin{theorem}
\label{theorem:biconvex}
Problem \textbf{P3} 
is strictly biconvex \cite{gorski2007biconvex}. 
\end{theorem}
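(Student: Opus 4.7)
The plan is to verify the two defining requirements of biconvexity directly: that the feasible region is a product of convex sets, and that for each fixed value of one variable the objective is a strictly convex function of the other. Since the continuous relaxation of the feasible set is $\{(K,E) : 1 \le K \le N,\ E \ge 1\}$, it is trivially a Cartesian product of intervals, so everything reduces to checking convexity of slices.

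The key technical step is to expand the objective into a manageable form. Writing $\phi(K,E):=(1-\gamma)(t_p E + t_m K)+\gamma K(e_p E+e_m)$ and simplifying $1+\frac{N-K}{K(N-1)} = a + b/K$ with $a=\frac{N-2}{N-1}\ge 0$ and $b=\frac{N}{N-1}>0$, the objective of \textbf{P3} becomes a nonnegative linear combination of the monomial-like terms
\[
1,\ K,\ \tfrac{1}{E},\ E,\ E^2,\ \tfrac{K}{E},\ KE,\ KE^2,\ \tfrac{E^2}{K}.
\]
Collecting by powers of $E$ with $K$ fixed gives the form $g(E)=\alpha_0+\alpha_{-1}/E+\alpha_1 E+\alpha_2 E^2$, where $\alpha_{-1}\propto c_K A_0 K>0$ (with $c_K:=(1-\gamma)t_m+\gamma e_m$) and $\alpha_2>0$ as long as either $(1-\gamma)t_p$ or $\gamma e_p$ is positive. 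Then $\partial^2 g/\partial E^2 = 2\alpha_2 + 2\alpha_{-1}/E^3 > 0$ on $E\ge 1$, giving strict convexity in $E$ for every fixed $K\in[1,N]$.

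Collecting by powers of $K$ with $E$ fixed gives the form $g(K)=\beta_0+\beta_1 K+\beta_{-1}/K$, where the $1/K$ term arises only from the product of the $(1-\gamma)t_p E$ part of $\phi$ with the $E^2/K$ part of the bound, so $\beta_{-1}\propto (1-\gamma)t_p\, b B_0 E^2$. Then $\partial^2 g/\partial K^2 = 2\beta_{-1}/K^3 > 0$ on $K\in[1,N]$, giving strict convexity in $K$. The main obstacle, and the only place where care is needed, is the coefficient $\beta_{-1}$: the $1/K$ dependence of the bound must couple with a term of $\phi$ that is linear in $E$ and independent of $K$ for a nontrivial $1/K$ contribution to survive. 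This is exactly the $(1-\gamma)t_p E$ term, so strict convexity in $K$ relies on $\gamma<1$; the pure-energy corner case $\gamma=1$ would collapse $g(\cdot,E)$ to an affine function of $K$ (still convex, but not strictly), and should be flagged as a boundary case of the claim. Beyond that, the proof is a bookkeeping exercise on the expansion above.
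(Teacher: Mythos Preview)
Your proposal is correct and follows essentially the same route as the paper: the paper also computes the two partial second derivatives directly, obtaining $\partial^2 \tilde{\Expect}[C_\textnormal{tot}]/\partial K^2 = 2(1-\gamma)B_0 N t_p E^2 / ((N-1)K^3)$ and $\partial^2 \tilde{\Expect}[C_\textnormal{tot}]/\partial E^2 = 2B_0\bigl(1+\tfrac{N-K}{K(N-1)}\bigr)\bigl((1-\gamma)t_p+\gamma K e_p\bigr) + 2A_0 K\bigl((1-\gamma)t_m+\gamma e_m\bigr)/E^3$, which are precisely your $2\beta_{-1}/K^3$ and $2\alpha_2 + 2\alpha_{-1}/E^3$. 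Your flagging of the $\gamma=1$ boundary case (where the $K$-slice degenerates to an affine function) is a valid observation that the paper's proof does not address.
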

\begin{proof}
For any $E \ge 1$, we have
{\small
 \begin{equation*}
\dfrac{\partial^2 \tilde{\Expect}[C_\textnormal{tot}]}{\partial^2 K}=\dfrac{2(1-\gamma)B_{0} Nt_{p} E^{2}}{(N-1) K^{3}}>0.\end{equation*} }
Similarly, for any $1\le K \le N$, we have
{\small
\begin{equation}
\begin{array}{cl}
\dfrac{\partial^2 \tilde{\Expect}[C_\textnormal{tot}]}{\partial^2 E}=2 \left( \left(1-\gamma\right)t_p+\gamma K e_p\right) B_{0}\left(1+\frac{N-K}{K(N-1)}\right)
\\\qquad +
\dfrac{2A_{0}\left( (1-\gamma)Kt_m+\gamma Ke_m\right) }{E^{3}}>0
\end{array}
\end{equation}}%
Since the domain of $K$ and $E$ is convex,  we conclude that Problem \textbf{P3} is strictly biconvex.
\end{proof}

The  biconvex property allows many efficient algorithms, such as \emph{Alternate Convex Search} (ACS) approach, to a achieve a guaranteed local optima\cite{gorski2007biconvex}. 
E.g., we could iteratively solve $K$ and $E$ in the
objective function  \eqref{ob3} until we achieve the converged $K^\ast$ and $E^\ast$. This optimization process corresponds to Lines~\ref{alg:optimalSolution:startOptimization}--\ref{alg:optimalSolution:endOptimization} of Algorithm~\ref{alg:optimalSolution}, where Lines~\ref{alg:optimalSolution:projectionK} and \ref{alg:optimalSolution:projectionE} ensure that the solution is taken within the feasibility region, and Line~\ref{alg:optimalSolution:rounding} rounds the continuous values of $K$ and $E$ to integer values.

\subsubsection{\textbf{Estimation of Parameters $\frac{A_0}{B_0}$}}
Solving the objective function \eqref{ob3} is still nontrivial because it    
includes unknown parameters $A_0$ and $B_0$, which can only be determined during the learning process.\footnote{We assume that 
$t_p$, $t_m$, $t_m$ and $e_m$ can be measured offline.} In fact, the optimal $K$ and $E$ in \eqref{ob3} only depend on the value of $\frac{A_0}{B_0}$ as we could divide $B_0$ and incorporate it in $\epsilon$. 
In the following, we propose a sampling-based algorithm to estimate $\frac{A_0}{B_0}$, and show that the overhead for estimation is marginal.

The basic idea is to sample different combinations of $\left(K, E\right)$ and use the upper bound in \eqref{upperbound1} to approximate $F(\mathbf{w}_R)\!-\!F^{*}$. 
{Specifically, we empirically sample\footnote{{Our sampling criteria is to cover diverse combinations of  $\left(K,E\right)$.}}} 
a pair $\left(K_i, E_i\right)$ and run Algorithm~1 with an initial model $\mathbf{w}_0\!=\!\mathbf{0}$ until it reaches two pre-defined global losses $F_a := F(\mathbf{w}_{R_{i,a}})$ and $F_b := F(\mathbf{w}_{R_{i,b}})$ ($F_b<F_a$), where $R_{i,a}$ and ${R_{i,b}}$ are the executed round numbers for reaching losses $F_a$ and $F_b$. The pre-defined losses $F_a$ and $F_b$ can be set to a relatively high value, to keep a small estimation overhead, but they cannot be too high either as it would cause low estimation accuracy. 
Then, we have
\begin{equation}
\begin{cases}
    \label{A0B01}
    R_{i,a} \approx d + \dfrac{A_0 + B_0\left(1+
 \frac{N-K_i}{K_i(N-1)}\right) E_i^2}{E_i\left(F_a-F^{*}\right)},\\
    {R_{i,b}} \approx d + \dfrac{A_0 + B_0\left(1+
 \frac{N-K_i}{K_i(N-1)}\right) E_i^2}{E_i\left(F_b-F^{*}\right)}.
 \end{cases}
\end{equation}
from \eqref{upperbound1}, where $d$ captures a constant error of using the upper bound to approximate $F(\mathbf{w}_R)\!-\!F^{*}$. 
Based on \eqref{A0B01}, we have
 \begin{equation}
    \label{A0B03}
    {R_{i,b}}-R_{i,a} \approx 
  \frac{\Delta}{E_i}  \left(    {A_0\! +\! B_0\left(1\!+\!
 \frac{N-K_i}{K_i(N\!-\!1)}\right)\! E_i^2}\right),
\end{equation}
where $\Delta :=\frac{1}{F_b\!-\!F^{*}}\!-\!\frac{1}{F_a\!-\!F^{*}}$. Similarly, sampling another pair of ($K_j$, $E_j$) and performing the above process gives us another executed round numbers $R_{j,a}$ and $R_{j,b}$. 
Thus, we have
 \begin{equation}
    \label{A0B05}
    \frac{E_i\left(R_{i,b} -R_{i,a}\right)}{E_j\left(R_{j,b}-R_{j,a}\right)} \approx 
    \frac{ {A_0 + B_0\left(1+
 \frac{N-K_i}{K_i(N-1)}\right) E_i^2}}{{A_0 + B_0\left(1+
 \frac{N-K_j}{K_j(N-1)}\right) E_j^2}}.
\end{equation}
We can obtain $\frac{A_0}{B_0}$ from \eqref{A0B05} (note that the variables except for $\frac{A_0}{B_0}$ are known).
In practice, 
we may sample several different pairs of $\left(K_i, E_i\right)$ 
to obtain an averaged estimation of $\frac{A_0}{B_0}$. This estimation process is given in Lines~\ref{alg:optimalSolution:startEstimation}--\ref{alg:optimalSolution:endEstimation} of Algorithm~\ref{alg:optimalSolution}.


\textbf{Estimation overhead}:
{The main overhead used for estimation $\frac{A_0}{B_0}$ comes from the additional iterations with different sampling pairs ($K_i$, $E_i$). For each pair, $K_i$ clients conduct $R_{i,b}$ rounds of $E_i$ step local iterations for reaching the lower pre-defined loss $F_b$. Hence, for $M$ sampling pairs, the number of iteration $I_{est}$  used for estimation  is
\begin{equation} 
I_{est}=\sum_{i=1}^MK_iE_iR_{i,b}. 
\end{equation}
For training with obtained ($K^*$, $E^*$), then according to (15), 
the total iteration number $I_{tot}$ that used for reaching for reaching $\epsilon=F_R-F^{*}$ precision is 
\begin{equation} 
I_{tot}=K^*E^*R\approx \frac{K^*\left[{A_0 + B_0\left(1+\frac{N-K^\ast}{K^\ast(N-1)}\right)  ({E^\ast})^2}\right]}{\epsilon}. 
\end{equation}
Therefore, the estimation overhead ratio can be expressed as \begin{equation} 
\label{overhead}\frac{I_{est}}{I_{tot}}\approx \epsilon \cdot \frac{ \sum_{i=1}^MK_iE_iR_{i,b}}{K^*\left[{A_0 + B_0\left(1+\frac{N-K^\ast}{K^\ast(N-1)}\right) ({E^\ast})^2}\right]}, 
\end{equation}
where the right hand side of \eqref{overhead} except for $\epsilon$ is bounded by some constant value, and thus the overhead ratio will be marginal for a high error precision with $\epsilon$ being small.}

\section{Design Principles and Solution Properties for Cost Minimization}
\label{sec:property}

 We theoretically analyze the solution properties 
  for different optimization metrics and heterogeneous system parameters. The analysis  not only provides insightful design principles but also gives alternative ways of solving  Problem  \textbf{P3} 
 more efficiently.  Our empirical results in  Section~\ref{sec:experimentation} 
 show that these properties derived for Problem  \textbf{P3} \emph{are still valid for the original} problem  \textbf{P1}. 
 For the ease of presentation, we consider continuous $K$ 
 and $E$ 
 (i.e., before rounding) in this section.

\subsection{Solution Property of $K$ for Minimizing $\tilde{\Expect}[C_\textnormal{tot}]$}
We first show how the normalized price factor $\gamma$ affects the optimal solution of $K$. This provides useful design principles for choosing the number of participants for different optimization metrics, e.g., learning time or energy minimization.   
\begin{theorem}
\label{theorem:t_tot}
For any fixed value of $E$,  $K^{\ast}$ decreases in $\gamma$. In particular, when $\gamma=1$, $\tilde{\Expect}[C_\textnormal{tot}]$ is a strictly increasing function in $K$, thus $K^\ast=1$. 
\end{theorem}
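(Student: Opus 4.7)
The plan is to substitute the closed-form $R$ from \eqref{eq:R_solution} into the objective, reduce the problem to a single-variable optimization in $K$ with $\gamma$ as a parameter, and then apply the first-order condition to obtain $K^{\ast}(\gamma)$ in closed form. A convenient first step is to rewrite
\begin{equation*}
R(K)=P+\frac{Q}{K}, \quad P:=\frac{A_0+B_0 E^{2}\frac{N-2}{N-1}}{\epsilon E},\quad Q:=\frac{B_0 E N}{\epsilon(N-1)},
\end{equation*}
where $P,Q>0$ for $N\geq 2$. Using this form, the objective in \eqref{ob3} splits additively as $\tilde{\Expect}[C_\textnormal{tot}]=\gamma\,A(K)+(1-\gamma)\,B(K)$, where $A(K):=(e_p E+e_m)(PK+Q)$ is linear and strictly increasing in $K$, while $B(K):=(t_p E+t_m K)(P+Q/K)$ is the same biconvex slice already known (from Theorem~\ref{theorem:biconvex}) to be strictly convex in $K$.

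I would dispatch the $\gamma=1$ case first, since it is immediate: the objective collapses to $A(K)$, which is strictly linearly increasing in $K$, hence the minimizer over $[1,N]$ is $K^{\ast}=1$. This both establishes the second claim and anchors the monotonicity assertion at the right endpoint.

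Next, for a generic $\gamma\in[0,1)$, I would differentiate and set
\begin{equation*}
\frac{\partial \tilde{\Expect}[C_\textnormal{tot}]}{\partial K}=\gamma(e_p E+e_m)P+(1-\gamma)\!\left(t_m P-\frac{t_p E\,Q}{K^{2}}\right)=0,
\end{equation*}
which solves to $K^{\ast}(\gamma)=\sqrt{f(\gamma)}$ with
\begin{equation*}
f(\gamma):=\frac{(1-\gamma)\,t_p E\,Q}{P\bigl[\gamma(e_p E+e_m)+(1-\gamma)t_m\bigr]}.
\end{equation*}
The monotonicity then follows from a direct log-derivative computation: writing $\alpha:=e_p E+e_m$, a short calculation shows $(\ln f)'(\gamma)=-\alpha/\bigl\{(1-\gamma)[\gamma\alpha+(1-\gamma)t_m]\bigr\}<0$. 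Hence $f$, and therefore $K^{\ast}(\gamma)=\sqrt{f(\gamma)}$, is strictly decreasing on $[0,1)$, and $K^{\ast}(\gamma)\to 0$ as $\gamma\to 1^{-}$, which is consistent with the boundary value $K^{\ast}(1)=1$ obtained above once the feasibility constraint $K\ge 1$ is enforced by projection.

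Finally, I would close the argument by handling the feasibility boundary: whenever the interior formula yields $\sqrt{f(\gamma)}\le 1$, the convexity of $\tilde{\Expect}[C_\textnormal{tot}]$ in $K$ (Theorem~\ref{theorem:biconvex}) forces $K^{\ast}(\gamma)=1$, and since $K^{\ast}(\gamma)$ was shown to be non-increasing strictly inside the feasible region and equals $1$ at $\gamma=1$, the overall map $\gamma\mapsto K^{\ast}(\gamma)$ is non-increasing on $[0,1]$, yielding the first claim. The only mildly delicate step is the log-derivative algebra, where the $(1-\gamma)t_m$ terms cancel cleanly to leave the $-\alpha$ numerator; I would expect this cancellation to be the one spot where a sign error could creep in, so I would double-check by verifying the sign against the endpoint behavior at $\gamma=0$ and $\gamma\to 1^{-}$.
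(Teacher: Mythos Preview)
Your proposal is correct and follows essentially the same route as the paper: compute $\partial\tilde{\Expect}[C_\textnormal{tot}]/\partial K$, read off the $\gamma=1$ case directly from the sign of the derivative, and for $\gamma<1$ solve the first-order condition to obtain $K^{\ast}$ in closed form and observe its monotonicity in $\gamma$. The paper phrases the last step as ``divide by $(1-\gamma)$ and note that $\gamma/(1-\gamma)$ is increasing,'' whereas you compute an explicit log-derivative of $f(\gamma)$; these are equivalent reformulations of the same monotonicity check, and your more careful treatment of the projection onto $K\ge 1$ is a welcome addition the paper leaves implicit.
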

The proof is given in Appendix~\ref{theorem3app}. 
Theorem \ref{theorem:t_tot} shows that sampling fewer devices can reduce the total energy consumption for reaching the target loss  (which would be the main objective as $\gamma$ becomes large). {While this may seem contradictory at the first glance, since a smaller $K$ would result in more rounds to achieve the desired precision $\epsilon$. The key intuition is that the total energy is the sum energy consumption of all selected clients. Although more rounds may cost longer learning time, there are also less number of clients participating in each round, so the total energy consumption can be smaller.} {Particularly, for pure energy minimization task ($\gamma=1$) with $K^*=1$, our proposed sequential ordering scheduling in Theorem~1 reduces to uniformly at random sample one client for transmission in each round, which is  a bit different compared to the Round-Robin scheme, 
because the sequence of our selected clients over several rounds is random not always following the same order.}


Based on Theorem \ref{theorem:t_tot}, 
we derive a trade-off design principle for $K$, generally, with a relatively larger\footnote{We note that  the main difference for reducing time cost between this work and wired FL in \cite{luo2020cost} is that the optimal $K^\ast$ (for $\gamma=0$) in \cite{luo2020cost} is always $N$, where it is usually some value between $[1,N]$ in this paper.  
} $K$ favoring learning time reduction (small $\gamma$) and a smaller $K$  favoring  energy saving (large $\gamma$).  
For a given $\gamma$, the optimal $K^{\ast}$ achieves the best balance between learning time  minimization  and energy consumption  minimization.

Next, we show how heterogeneous system parameters in terms of computation and communication affect the optimal solution of $K$. This shows which directions $K$ and $E$ should change to, when the system environment changes. 
\begin{theorem}
\label{corollary:t_tot_K}
For any fixed value of $E$, when $0<\gamma<1$, $K^\ast$ increases in $t_p$ and decreases in $t_m$, $e_p$ and $e_m$. When $\gamma=0$, $K^\ast$ increases with $\frac{t_p}{t_m}$; whereas when $\gamma=1$, $K^{\ast}$ is independent of $e_m$ and $e_p$. 
\end{theorem}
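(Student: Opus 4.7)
The plan is to fix $E$, collapse the objective of Problem \textbf{P3} into a simple bilinear-plus-reciprocal expression in $K$, and then read the comparative statics directly off the closed-form unconstrained minimizer. First I would rewrite the convergence factor using the identity
\[
1+\frac{N-K}{K(N-1)}=\frac{N-2}{N-1}+\frac{N}{(N-1)K},
\]
so that the bracket in \eqref{ob3} has the form $P+Q/K$, with $P:=A_0+B_0 E^2(N-2)/(N-1)$ and $Q:=B_0 E^2 N/(N-1)$ independent of $K$. Similarly I would collect the cost factor as $\alpha+\beta K$, where $\alpha:=(1-\gamma)t_p E$ and $\beta:=(1-\gamma)t_m+\gamma(e_p E+e_m)$. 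Up to the positive multiplier $1/(\epsilon E)$ the objective becomes
\[
f(K)=(\alpha+\beta K)(P+Q/K)=\alpha P+\beta Q+\beta P K+\alpha Q/K.
\]

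Next I would invoke Theorem~\ref{theorem:biconvex} to conclude that $f$ is strictly convex in $K$, so its first-order condition $f'(K)=\beta P-\alpha Q/K^2=0$ yields the unique unconstrained minimizer
\[
K^{\dagger}=\sqrt{\frac{\alpha Q}{\beta P}}
=\sqrt{\frac{(1-\gamma)\,t_p\,B_0 E^3 N/(N-1)}{\bigl[(1-\gamma)t_m+\gamma(e_p E+e_m)\bigr]\bigl[A_0+B_0 E^2(N-2)/(N-1)\bigr]}},
\]
with the constrained optimum obtained by projection, $K^{\ast}=\min\{N,\max\{1,K^{\dagger}\}\}$.

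Then I would read off monotonicity directly. For $0<\gamma<1$, the numerator under the square root is strictly increasing in $t_p$, while the denominator is strictly increasing in each of $t_m$, $e_p$, and $e_m$; hence $K^{\dagger}$, and whenever interior $K^{\ast}$, is increasing in $t_p$ and decreasing in $t_m$, $e_p$, and $e_m$. When $\gamma=0$ we have $\alpha=t_p E$ and $\beta=t_m$, so the $E$- and $A_0,B_0$-dependent pieces cancel out from the dependence on $(t_p,t_m)$, leaving $K^{\dagger}\propto\sqrt{t_p/t_m}$, which is what was claimed. When $\gamma=1$ we have $\alpha=0$, so $f'(K)=\beta P>0$ everywhere; $f$ is strictly increasing, $K^{\ast}=1$, and neither $e_p$ nor $e_m$ enters the optimizer, recovering Theorem~\ref{theorem:t_tot} and giving the independence claim.

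The main obstacle will be handling the boundary-projected optimum when $K^{\dagger}\notin[1,N]$: I need to argue that the monotonicity of $K^{\dagger}$ in each parameter still carries over to $K^{\ast}$. Since the clip $x\mapsto\min\{N,\max\{1,x\}\}$ is a monotone nondecreasing map, and since $K^{\dagger}$ depends continuously on the parameters with all radicand ingredients strictly positive in the feasible region, the claimed monotonicity of $K^{\ast}$ holds in the weak sense on the boundary and strictly in the interior. A minor bookkeeping point will be verifying that no parameter perturbation can flip the sign of any factor inside the square root, which is immediate from $A_0,B_0,E,\gamma,1-\gamma\ge 0$ and $N\ge 2$.
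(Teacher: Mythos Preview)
Your proposal is correct and follows essentially the same approach as the paper: both arguments set the first-order condition $\partial\tilde{\Expect}[C_\textnormal{tot}]/\partial K=0$ and read off the monotonicity in each parameter, and both invoke the $\gamma=1$ case of Theorem~\ref{theorem:t_tot} for the independence claim. Your version is more explicit---you factor the objective as $(\alpha+\beta K)(P+Q/K)$ and write the closed-form unconstrained minimizer $K^\dagger=\sqrt{\alpha Q/(\beta P)}$, then argue separately that the clip $x\mapsto\min\{N,\max\{1,x\}\}$ preserves monotonicity---whereas the paper simply asserts the conclusions are ``straightforward'' from the first-order equation and does not discuss the boundary case at all.
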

We give the proof in Appendix~\ref{theorem4app}. 
Intuitively, for general $0\le\gamma<1$,   Theorem~\ref{corollary:t_tot_K} says that when $t_m$ increases (e.g., more clients suffering poor channel conditions or less system bandwidth is available), or $e_m$ and $e_p$ increases (higher average energy costs in communication or computation), 
the optimal strategy to reduce the total cost is to sample fewer clients. Interestingly, when the average computation time $t_p$ increases, we should sample more clients. 
This is because, unlike the per-round communication time accumulated with more clients being sampled, the computation is performed in parallel among the sampled clients, which only slightly increases the per-round time. Sampling more clients can possibly reduce the required total number of rounds, thus bringing down the total learning time. 
When we only focus on energy minimization ($\gamma=1$), the optimal $K$ is independent of $e_m$ and $e_p$, because Theorem \ref{theorem:t_tot} shows that  $K^\ast=1$ in this case.  

\subsection{Solution Property of $E$ for Minimizing $\tilde{\Expect}[C_\textnormal{tot}]$}
We first identify how the normalized price factor $\gamma$ affects the choice of $E$ for different optimization goals, e.g., learning time or energy minimization. 
\begin{theorem}
\label{corollary:t_tot_E1}
For any $0\le\gamma \le1$ and any fixed value of $K$,  as $E$ increases, $\tilde{\Expect}[C_\textnormal{tot}]$ first decreases and then increases. 
\end{theorem}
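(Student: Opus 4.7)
The plan is to reuse the strict convexity in $E$ already established in Theorem~\ref{theorem:biconvex} and combine it with the blow-up of $\tilde{\Expect}[C_\textnormal{tot}]$ at the two ends of the $E$-axis, so the ``decrease then increase'' shape follows from the fact that a strictly convex coercive function on $(0,\infty)$ has a unique interior minimizer.

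First, I would substitute $R$ from \eqref{eq:R_solution} into $\tilde{\Expect}[C_\textnormal{tot}]$ and collect the result as a function of $E$ alone, with $K$ held fixed. Letting $a := (1-\gamma)t_p + \gamma K e_p$, $b := K[(1-\gamma)t_m + \gamma e_m]$, and $\alpha := 1 + \tfrac{N-K}{K(N-1)}$, expanding the product $(aE+b)R$ yields
\[
\tilde{\Expect}[C_\textnormal{tot}](E) \;=\; \tfrac{1}{\epsilon}\Bigl(a A_0 \;+\; a B_0 \alpha\, E^2 \;+\; \tfrac{b A_0}{E} \;+\; b B_0\alpha\, E\Bigr).
\]
Under the standing assumption that $t_p, t_m, e_p, e_m, A_0, B_0 > 0$ and $1 \le K \le N$, each of the four coefficients is strictly positive for every $\gamma \in [0,1]$, so in particular the ``extreme'' terms $bA_0/E$ and $aB_0\alpha E^2$ are genuinely present. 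This compact expression is the workhorse for the remaining two steps.

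Second, I would read both the convexity and the boundary behaviour directly off this expansion. The second derivative $\partial^2 \tilde{\Expect}[C_\textnormal{tot}]/\partial E^2 = \tfrac{1}{\epsilon}\bigl(2 a B_0 \alpha + 2 b A_0 / E^3\bigr) > 0$ coincides with the bound derived in Theorem~\ref{theorem:biconvex}, so $\tilde{\Expect}[C_\textnormal{tot}]$ is strictly convex on $(0,\infty)$. Moreover, $\tilde{\Expect}[C_\textnormal{tot}] \to +\infty$ as $E \to 0^+$ (driven by $bA_0/E$) and $\tilde{\Expect}[C_\textnormal{tot}] \to +\infty$ as $E \to \infty$ (driven by $aB_0\alpha E^2$). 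A strictly convex, coercive function on $(0,\infty)$ admits a unique minimizer $E^\star$, is strictly decreasing on $(0,E^\star)$, and strictly increasing on $(E^\star,\infty)$, which is precisely the ``first decreases and then increases'' claim.

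The main subtlety I would expect to address is that the feasibility region in Problem~\textbf{P3} is $E \ge 1$, whereas the shape claim is about the function on the continuous domain $(0,\infty)$. I would point out that the claim is a property of $\tilde{\Expect}[C_\textnormal{tot}]$ itself; if the unconstrained $E^\star$ happens to lie below $1$, the restriction to $E \ge 1$ will look purely increasing, but the underlying hyperbola-plus-parabola shape is unchanged. This is fully consistent with the continuous-relaxation viewpoint adopted at the start of Section~\ref{sec:property}, so no additional case-splitting on $\gamma$ or the system parameters is needed beyond the positivity check on $a$, $b$, $\alpha$, $A_0$, $B_0$ already carried out above.
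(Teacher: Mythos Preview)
Your proposal is correct and essentially parallels the paper's argument. The paper's proof in Appendix~\ref{theorem5app} computes the first derivative $\partial\tilde{\Expect}[C_\textnormal{tot}]/\partial E$, observes that it is negative for small $E$ (the $-A_0 b/E^2$ term dominates) and positive for large $E$ (the $2aB_0\alpha E$ term dominates), and concludes the decrease--then--increase shape from that sign change; your route via strict convexity (Theorem~\ref{theorem:biconvex}) plus coercivity is the same mechanism repackaged, and arguably tighter since it makes explicit why the sign change happens exactly once.
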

The proof is given in Appendix~\ref{theorem5app}. Theorem~\ref{corollary:t_tot_E1} shows that, unlike the strategy for minimizing energy consumption where $K^\ast$ lies at the boundary, the optimal $E$ should not be set too small nor too large no matter for saving energy consumption or learning time.

Then, the following theorem characterizes how heterogeneous system parameters in computation and communication affect the design principle of optimal $E$. 

\begin{theorem}
\label{corollary:t_tot_E2}
For any fixed value of $E$, when $0<\gamma<1$, $E^\ast$ increases as $t_p$ or $e_p$ decreases. $E^*$ increases in $\frac{t_m}{t_p}$ when $\gamma=0$, and $E^*$ increases in $\frac{e_m}{e_p}$ when $\gamma=1$. 
\end{theorem}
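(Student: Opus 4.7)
The plan is to apply the implicit function theorem to the first-order optimality condition in $E$. I would first substitute $\beta(K):=1+\frac{N-K}{K(N-1)}$, $a:=(1-\gamma)t_p+\gamma K e_p$, and $b:=K\left[(1-\gamma)t_m+\gamma e_m\right]$ into the objective of Problem \textbf{P3}, which rewrites it as $\frac{1}{\epsilon}(a+b/E)(A_0+B_0\beta E^2)$. Expanding and differentiating with respect to $E$, I get the first-order condition
\begin{equation*}
h(E;a,b):=2aB_0\beta E^3+bB_0\beta E^2-bA_0=0,
\end{equation*}
which (by Theorem~\ref{theorem:biconvex}, strict biconvexity) pins down a unique $E^\ast>0$ for each fixed $K$. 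Note also that $\partial h/\partial E=6aB_0\beta E^2+2bB_0\beta E>0$, so the sign of $dE^\ast/dp$ for any parameter $p$ is the opposite of the sign of $\partial h/\partial p$ evaluated at $E^\ast$.

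For the case $0<\gamma<1$, I would vary $t_p$ and $e_p$ one at a time. Since $\partial a/\partial t_p=1-\gamma>0$ and $\partial b/\partial t_p=0$, we get $\partial h/\partial t_p=2(1-\gamma)B_0\beta (E^\ast)^3>0$, hence $dE^\ast/dt_p<0$, i.e.\ $E^\ast$ increases as $t_p$ decreases. An identical computation with $\partial a/\partial e_p=\gamma K>0$ and $\partial b/\partial e_p=0$ gives $dE^\ast/de_p<0$, proving the first claim.

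For $\gamma=0$, setting $\rho:=t_m/t_p$ and dividing the first-order condition by $t_p$ yields $g(E;\rho):=2B_0\beta E^3+\rho K B_0\beta E^2-\rho K A_0=0$, which depends on the computation/communication parameters only through $\rho$. Then $\partial g/\partial\rho=K(B_0\beta (E^\ast)^2-A_0)$. The key step, and the only nontrivial point, is showing this quantity is negative at the optimum; I would rearrange the first-order condition $h(E^\ast)=0$ as $A_0=B_0\beta (E^\ast)^2\bigl(1+2aE^\ast/b\bigr)$, which immediately yields $B_0\beta (E^\ast)^2<A_0$. Hence $\partial g/\partial\rho<0$ and therefore $dE^\ast/d\rho>0$. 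The case $\gamma=1$ is symmetric: dividing the first-order condition by $Ke_p$ produces the same equation with $\rho:=e_m/e_p$, so the identical argument shows $E^\ast$ increases in $e_m/e_p$.

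The main obstacle is the $\gamma=0$ and $\gamma=1$ subcases, where $\partial h/\partial(t_m/t_p)$ or $\partial h/\partial(e_m/e_p)$ does not have an obvious sign from the formula alone. The trick is to feed the first-order condition back into itself to extract the inequality $B_0\beta (E^\ast)^2<A_0$; once this is in hand, the monotonicity in the ratios follows immediately. The remaining bookkeeping (confirming $\beta(K)>0$ for $1\le K\le N$, and that all quantities entering $h$ are positive so that $\partial h/\partial E>0$) is routine.
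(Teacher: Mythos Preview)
Your proposal is correct and follows essentially the same route as the paper: both arguments start from the first-order condition $\partial\tilde{\Expect}[C_\textnormal{tot}]/\partial E=0$ and extract monotonicity in the relevant parameters. The paper's version is terser; it rearranges the FOC as
\[
\frac{(1-\gamma)(2t_pE^3+t_mKE^2)+\gamma K(2e_pE^3+e_mE^2)}{K[(1-\gamma)t_m+\gamma e_m]}=\frac{A_0}{B_0\beta(K)},
\]
so that the right side is independent of $E$ and of the cost parameters, and then reads off monotonicity directly (the left side is increasing in $E$ and in $t_p,e_p$; for $\gamma=0$ it reduces to $2E^3/(K\rho)+E^2$ with $\rho=t_m/t_p$, visibly decreasing in $\rho$). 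Your implicit-function-theorem computation is the same logic made explicit, and your extra step of feeding the FOC back into itself to get $B_0\beta (E^\ast)^2<A_0$ is exactly what the paper's rearrangement achieves automatically by isolating $A_0/(B_0\beta)$ on the right.
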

We give the proof in Appendix~\ref{theorem6app}. 
 Theorem~\ref{corollary:t_tot_E2} says that for any given $K$, when or $t_p$ or $e_p$ decreases,  
the optimal strategy for learning time minimization ($\gamma=0$) or energy consumption  minimization ($\gamma=1$)  is to perform more steps of iterations (i.e., increase $E$) before aggregation. {This is because when computation is cheaper or when communication is more expensive, it is beneficial to perform more computation in each round, as intuition suggests. This theorem provides theoretical evidence for the empirical observations  in \cite{mcmahan2017communication, yu2018parallel, stich2018local,wang2018adaptive}.}

\begin{figure}[!t]
	\centering
	\includegraphics[width=8.5cm,height=5.2cm]{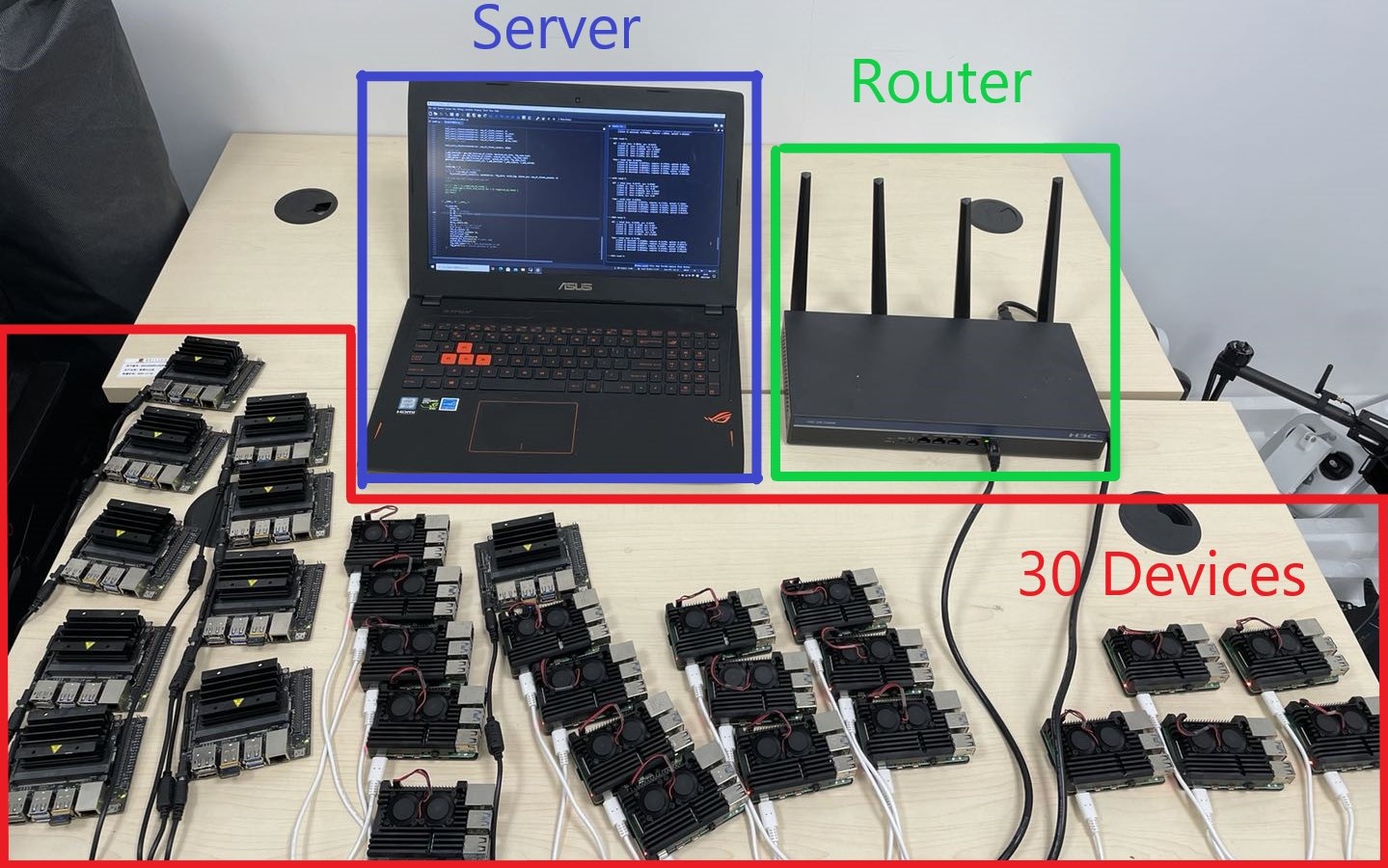}
	\caption{Hardware prototype with the laptop being central server, 20 Raspberry Pi and 10 Jetson Nano being devices. During the FL experiments, the wireless router is placed 2 meters away from all the devices.}
\end{figure}

\section{Experimental Evaluation}
\label{sec:experimentation}
In this section, we evaluate the performance of our proposed scheduling scheme, cost-effective FL algorithm and derived solution properties. 
 We start by presenting the evaluation setup, and then show the experimental results. 

\subsection{Experimental Setup}
\subsubsection{Platforms}
{We conducted experiments both on a networked hardware prototype system and in a simulated environment. 
Our prototype system, as illustrated in Fig.~2,
consists of 30 edge devices with $20$ Raspberry Pis (version 4) and $10$ Jetson Nanos as well as a laptop computer serving as the central server. All devices are interconnected via an enterprise Wi-Fi router, and we developed a TCP-based socket interface for the peer-to-peer connection.\footnote{{To characterize LTE-based wireless networks as in Fig.~1, we manually change the original communication protocol in our TS-based WiFi system, so that all devices start their local training at the same time and without calculating the downlink time.}} 
In the simulation system, we simulated a 
cellular network of an urban microcell consisting of a central BS serving as the server and $N=100$  clients. The BS and server were co-located at the center of the cell with a radius of $2$ km, and the clients were uniformly distributed in the cell.
}
\subsubsection{Datasets and Models} We evaluate our results both on a real dataset and a synthetic dataset. {For the real dataset, following a same setup of \cite{li2020federated}, we adopted the widely used MNIST dataset and EMNIST dataset, which contains gray-scale images of  handwritten digits and characters.} 
For the synthetic dataset, we follow a similar setup to that in \cite{li2018federated}, which generates $60$-dimensional random vectors as input data. The synthetic data is denoted by $Synthetic \  (\alpha, \beta)$ with $\alpha$ and $\beta$ representing the statistical heterogeneity (i.e., how non-i.i.d. the data are). 
We adopt the \emph{convex} {multinomial logistic regression} model for both datasets with model size around $0.03$ MB \cite{li2019convergence}.

\subsubsection{Implementation}
{Based on the above, we consider the following three experimental setups with heterogeneous data distribution}.

\emph{Prototype Setup}: We conduct the first experiment on the prototype system using  
MNIST dataset, where we divide $9,000$ data samples among the $N\!=\!30$ edge devices (20 Raspberry Pis and 10 Jetson Nanos) in a non-i.i.d. fashion, with each device containing a balanced number of $300$ samples of only $2$ digit labels. 

 {\emph{Simulation Setup 1}: We conduct the second experiment in the simulated system using  
     EMNIST dataset, where we divide 48,000 image  samples among $N=100$ clients in a non-i.i.d. fashion, with each client containing a balanced number of 480 samples of only 2 classes. }
    
    \emph{Simulation Setup 2}: We conduct the third experiment in the simulated system using  
    $Synthetic \ (1, 1)$ dataset for statistical heterogeneity. 
We generate $24,517$ data samples and distribute them among $N\!=\!100$ mobile devices in an unbalanced power law distribution,  where the number of samples in each device has a mean of $245$ and standard deviation of $362$.

\subsubsection{Training  Parameters}  For all experiments, we initialize our model with $\mathbf{w}_0=\mathbf{0}$ and SGD batch size $b=64$. In each round, we uniformly sample $K$ devices at random, which run $E$ steps of SGD in parallel.  For all experiments, we use an initial learning rate $\eta_0=0.1$ with decay rate $\frac{{\eta}_0}{1+r}$, where $r$ is communication round index.  We evaluate the aggregated model in each round on the global loss function. 
 Each result is averaged over 50 experiments.
\subsubsection{Heterogeneous System Parameters}
The prototype system allows us to capture real system heterogeneity in terms of communication and computation time, which we measured the average  $t_p = 4.9 \times 10^{-3}$ s with standard deviation $1.43 \times 10^{-3}$ s and $t_m = 0.16$~s with standard deviation $0.03$ s. 
We do not consider the energy cost in the prototype system because it is difficult to measure. 
For the simulated mobile edge system, we   
assume the system bandwidth is 1.8 MHz 
and adopt the standard LTE communications model based on a well-known urban channel model with the mean    per-client throughput  $1.4$~Mbit/s (thus $t_m\approx 0.2$~s) as in \cite{nishio2019client}. 
We generate the computation time and energy consumption for each client using a truncated normal distribution with
mean value of $t_p\!=\!0.5$~s (emulating slow devices), 
$e_p=0.01$~J, and $e_m = 0.02$~J . 
 {According to the definition of $\gamma$, we  
unify the time and energy costs such that one second is equivalent to $1-\gamma$ dollars~(\$) and one Joule is equivalent to $\gamma$ dollars~(\$).} 

\subsection{{Performance Results}}
In this subsection, we first evaluate the performance of our proposed TS scheduling strategy with two existing benchmarks. Then, we compare the performance of the proposed solution ($K^*$, $E^*$)  
 obtained from Algorithm~2 for solving Problem \textbf{P3}, 
 with that 
of the empirical optimal solution ($K_\textnormal{OPT}$, $E_\textnormal{OPT}$) achieved by an exhaustive search on 
the optimal solution of the original problem \textbf{P1}. Finally, we validate our derived design principles and solution properties.
\begin{figure*}[!t]
\centering
\subfigure[Loss with scheduling schemes
]{\label{schedul_lossa}\includegraphics[width=5.4cm,height=4.5cm]{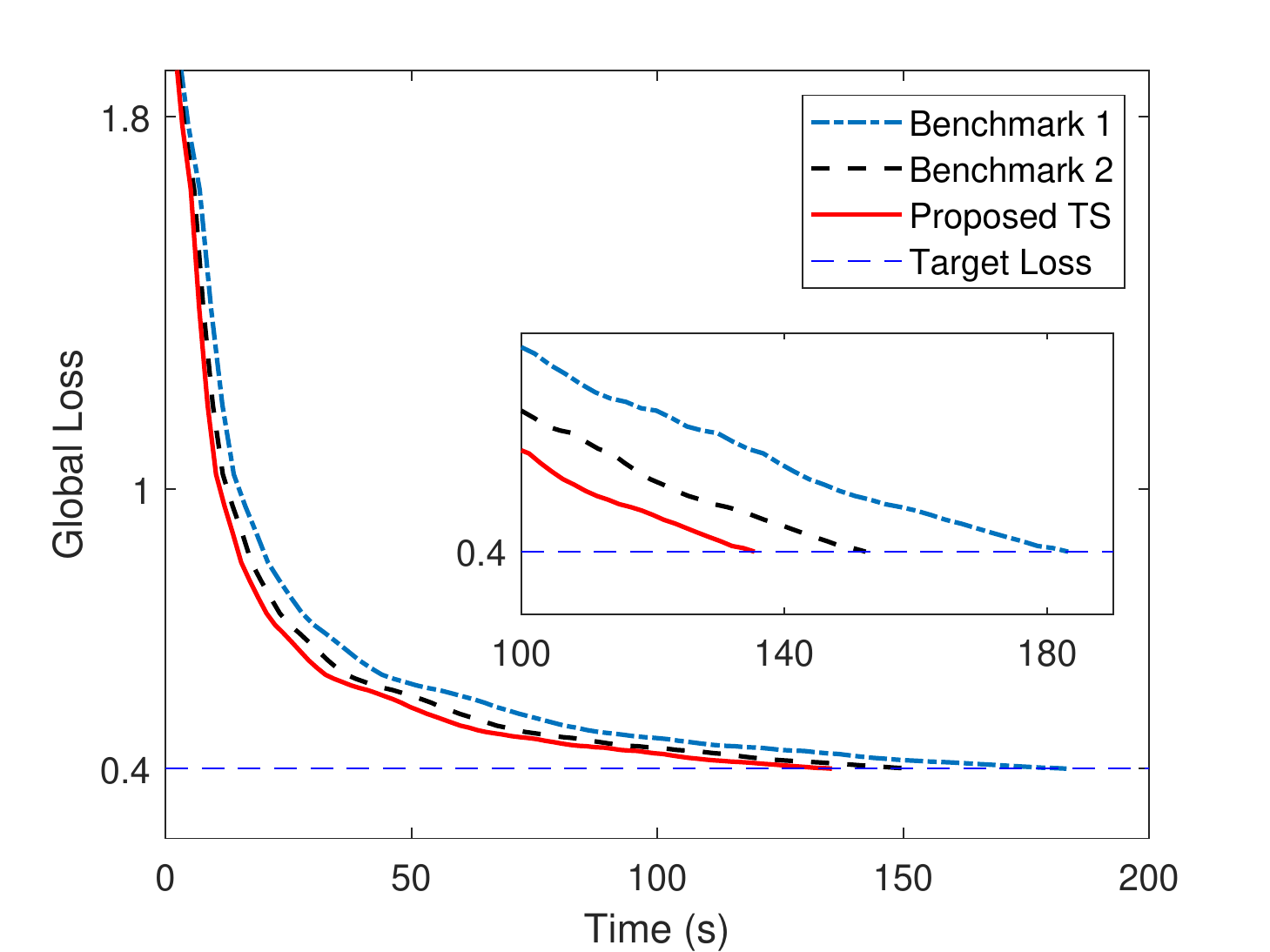}}
\subfigure[$T_\textnormal{tot}$ with scheduling schemes and $E$]{\label{schedule_lossc}\includegraphics[width=5.4cm,height=4.5cm]{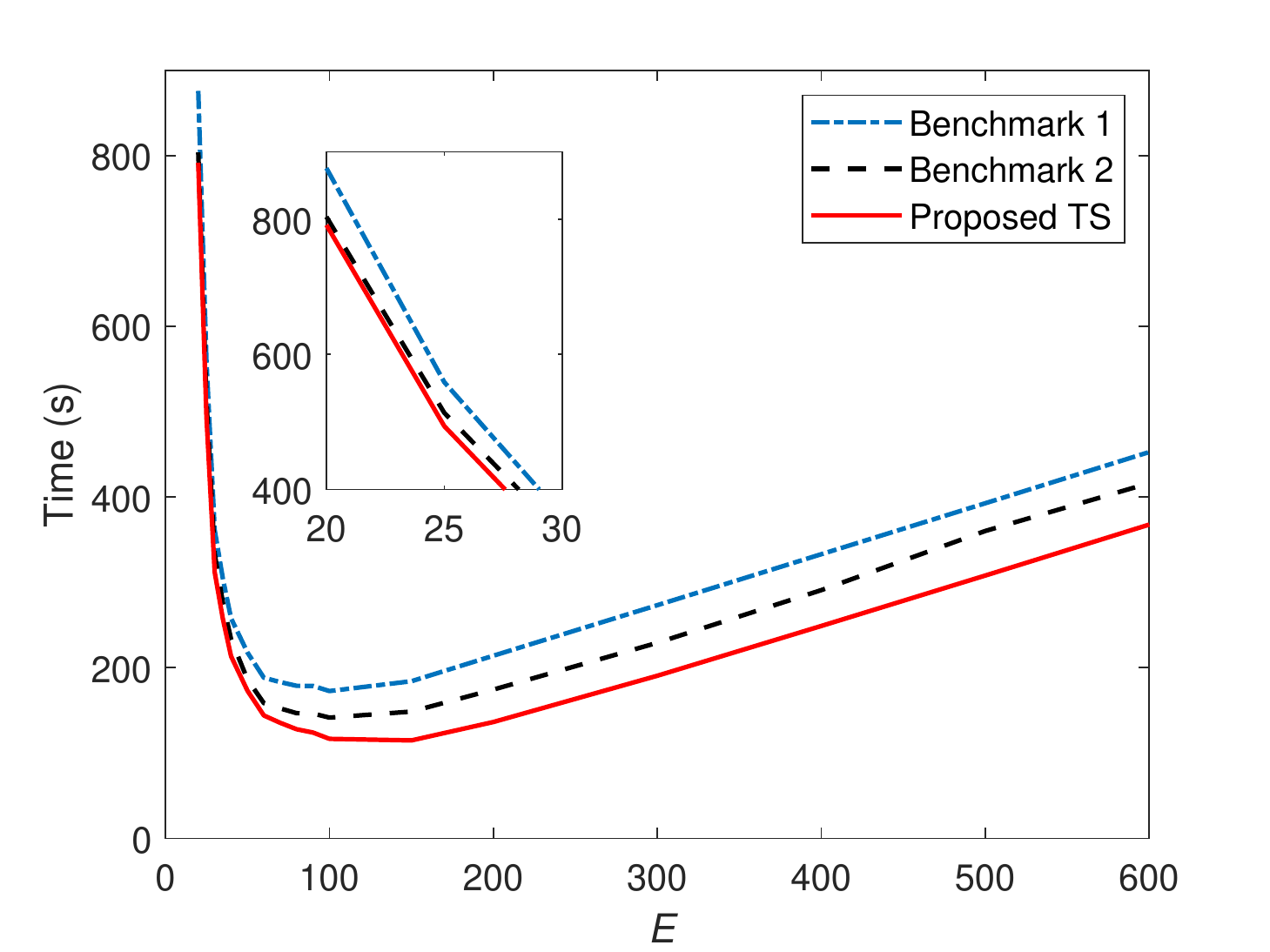}}
\subfigure[$T_\textnormal{tot}$ with scheduling schemes and $K$]{\label{schedule_t_tot}\includegraphics[width=5.4cm,height=4.5cm]{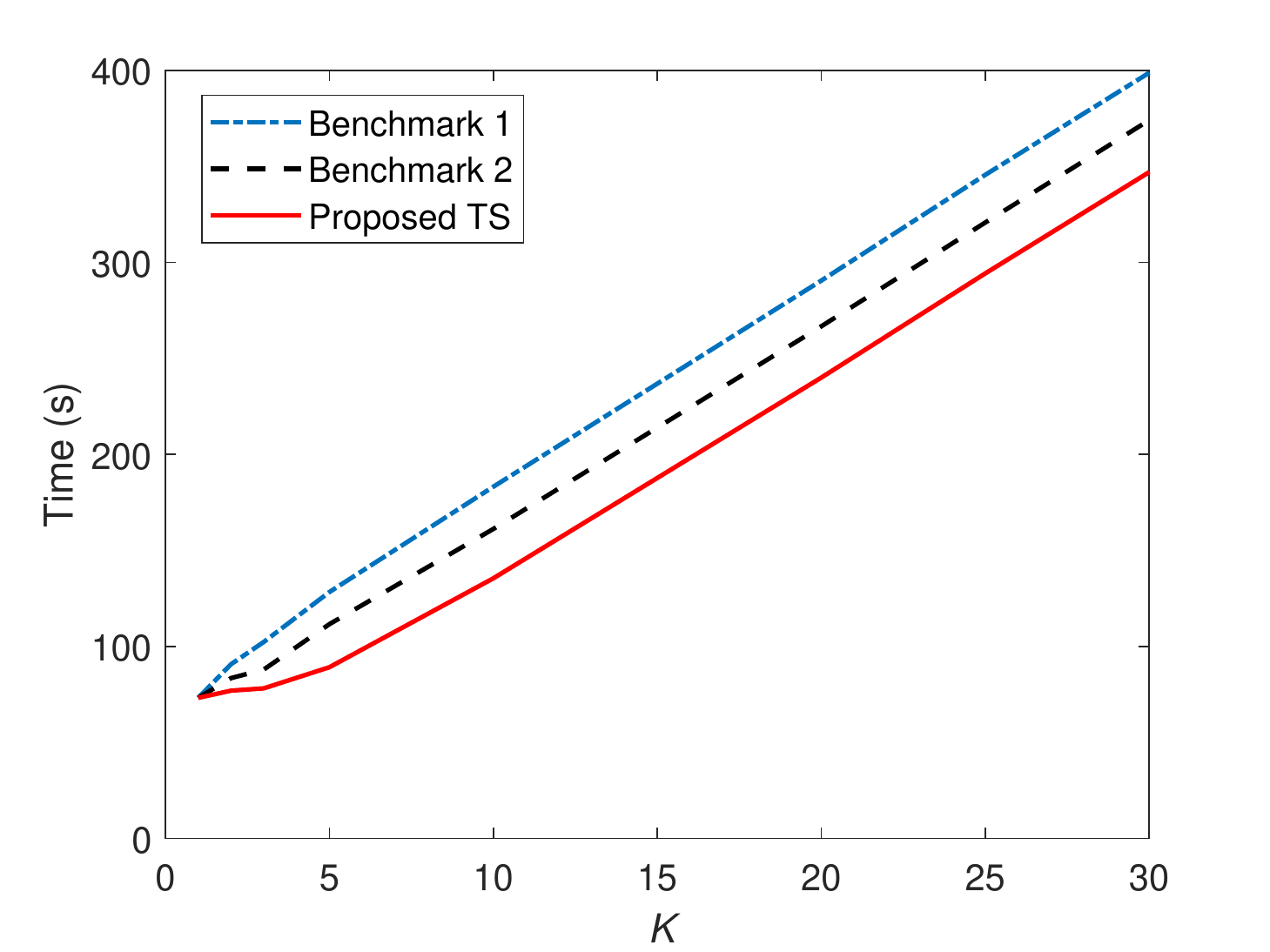}}
\caption{
Total learning time performance of \textbf{Prototype Setup} for different scheduling strategies with logistic regression and MNIST dataset. (a): For a given pair $\left(K=10, E=70\right)$, our proposed scheduling strategy achieves the target loss 0.4 faster than the two benchmark schemes. (b): For fixed $K=10$, our proposed scheme achieve the target loss using shorter time for different $E$. 
(c): For fixed $E=70$, our proposed scheme achieve the target loss using shorter time for different $K$.} 
\label{scheduling}
\end{figure*}

\subsubsection{Scheduling Performance}
Fig.~\ref{scheduling} 
compares the total learning time \textbf{performance of our proposed TS scheduling with the following two benchmarks} in Prototype Setup: 

\begin{itemize}
    \item Benchmark 1: existing TS scheduling strategy in \cite{tran2019federated}, where communication does not begin until all sampled clients complete their local computations.
    \item Benchmark 2: existing FS scheduling strategy in \cite{shi2020device, chen2020convergence,yang2019energy}, where the algorithm allocates the bandwidth at the start of each round for the sampled clients, then the bandwidth allocation remains unchanged throughout the round and every selected client starts its communication using its allocated bandwidth whenever its computation has finished.
\end{itemize}

 
 The main observations are as follows.
\begin{itemize}
  \item Fig.~\ref{scheduling}  demonstrates that \emph{our proposed scheduling strategy  in Theorem \ref{theorem1} 
  achieves the target loss faster than the two benchmarks}, since our method can better utilize system resources and adapt to both computation and communication heterogeneity. 
  \item \emph{The performance gaps between our scheme and the two benchmarks increase in $E$ at first and then remain stable in Fig.~\ref{scheduling}(b)}. The reason for this observation is that, as $E$ increases, the computation times  between different sampled clients are more different,  allowing our scheme to {better utilize the full bandwidth} and thus save per-round time. When $E$ is large enough, the per-round time is only {dominated by computation} time, hence the impact of different scheduling is insignificant as scheduling mainly affects the communication time. 
 
  \item We also observe that \emph{the performance gaps between our scheme and the two benchmarks increase in $K$ at first and then remain stable in Fig.~\ref{scheduling}(c)}. The explanation for this observation 
  is that as $K$ increases, 
  the difference between the maximum and minimum computation time among the selected $K$ clients can be larger, allowing our scheme to {better utilize the computational heterogeneity}, e.g., communication could begin at an early stage instead of waiting for the rest clients. When $K$ is large enough, the per-round time is mainly {dominated by communication} time, and the gain from computational heterogeneity tends to be stable between different scheduling schemes.
 \end{itemize}

\begin{table*}[!t]
 \caption{Number of rounds for reaching estimation loss $F_a$ and $F_b$ for estimation of $\frac{A_0}{B_0}$ for three Setups}
 \label{sample3}
  \centering
  {
\begin{tabular}{c||c||c|c|c|c|c|c||c}
\toprule[1.2pt]
\multirow{3}{*}{\makecell[c]{Prototype Setup\\MNIST}}
& \multirow{3}{*}{\makecell[c]{Estimation loss\\$F_a=0.65$\\\!$F_b=0.55$}}    & \scriptsize{Samples of $\left(K,E\right)$} & $\left(1, 30\right)$ &  $\left(5, 80\right)$ & $\left(10, 40\right)$ & $\left(15, 100\right)$ &$\left(20, 50\right)$  &  \multirow{3}{*}{{\makecell[c]{\textbf{Estimated}\\$\frac{A_0}{B_0}$=36,500}}  } \\ \cline{3-8}
                       &                       & \scriptsize{Rounds to achieve $F_a$} &  50 & 17 & 17 & 13 & 14 &                \\ \cline{3-8}
                       &                       & \scriptsize{Rounds to achieve $F_b$} &   75 & 32 & 30 & 21 & 25  &                 \\ \midrule[1.2pt]

\multirow{3}{*}{\makecell[c]{Simulation Setup 1\\EMNIST }}    & \multirow{3}{*}{\makecell[c]{Estimation loss\\$F_a=1.2$\\$F_b=1.0$}}    & Samples of $\left(K,E\right)$&    $\left(5, 7\right)$&    $\left(10, 10\right)$ &  $\left(20, 20\right)$ & $\left(30, 30\right)$ & $\left(40, 40\right)$  &    \multirow{3}{*}{{\makecell[c]{\textbf{Estimated}\\$\frac{A_0}{B_0}$=960}}  } \\ \cline{3-8}
                           &                       & Rounds to achieve $F_a$  &  17&13&10& 11 &14  &                \\ \cline{3-8}
                       &                       & Rounds to achieve $F_b$  & 29& 20 & 17 & 18& 28 &               \\ \bottomrule[1.2pt]
\multirow{3}{*}{\makecell[c]{Simulation Setup 2\\Synthetic (1,1) }}    & \multirow{3}{*}{\makecell[c]{Estimation loss\\$F_a=1.7$\\$F_b=1.5$}}    & Samples of $\left(K,E\right)$&    $\left(5, 7\right)$&    $\left(10, 10\right)$ &  $\left(20, 20\right)$ & $\left(30, 30\right)$ & $\left(40, 40\right)$  &    \multirow{3}{*}{{\makecell[c]{\textbf{Estimated}\\$\frac{A_0}{B_0}$=1850}}  } \\ \cline{3-8}
                           &                       & Rounds to achieve $F_a$  &  41&28&22& 19 &18  &                \\ \cline{3-8}
                       &                       & Rounds to achieve $F_b$  & 78& 52 & 39 & 34& 31 &               \\ \bottomrule[1.2pt]
\end{tabular}}
\end{table*}
\begin{figure*}[!t]
\centering
\subfigure[Loss with different 
$K$]{\label{hd_lossa}\includegraphics[width=5.4cm,height=4.5cm]{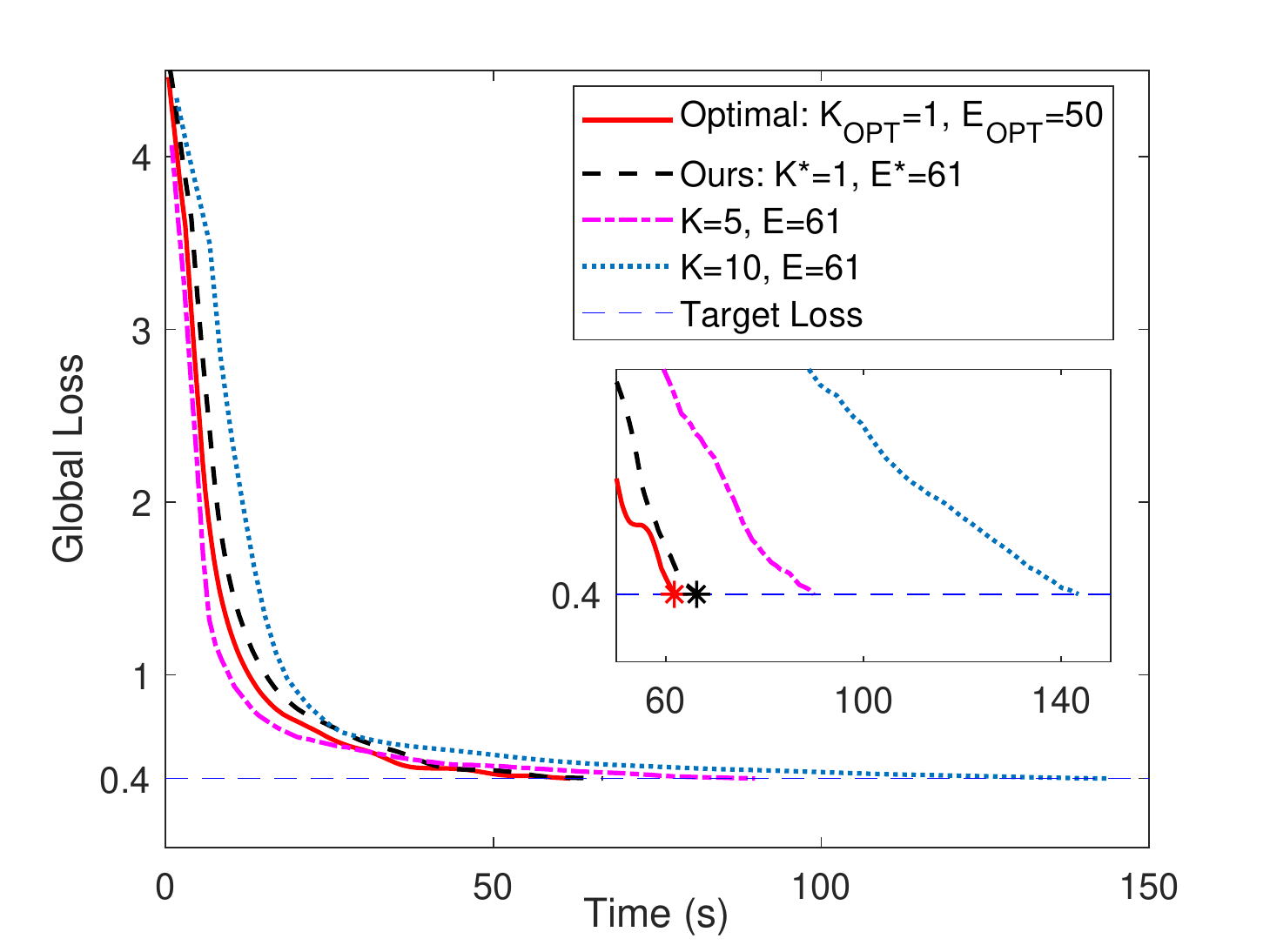}}
\subfigure[Loss with different 
$E$]{\label{hd_lossc}\includegraphics[width=5.4cm,height=4.5cm]{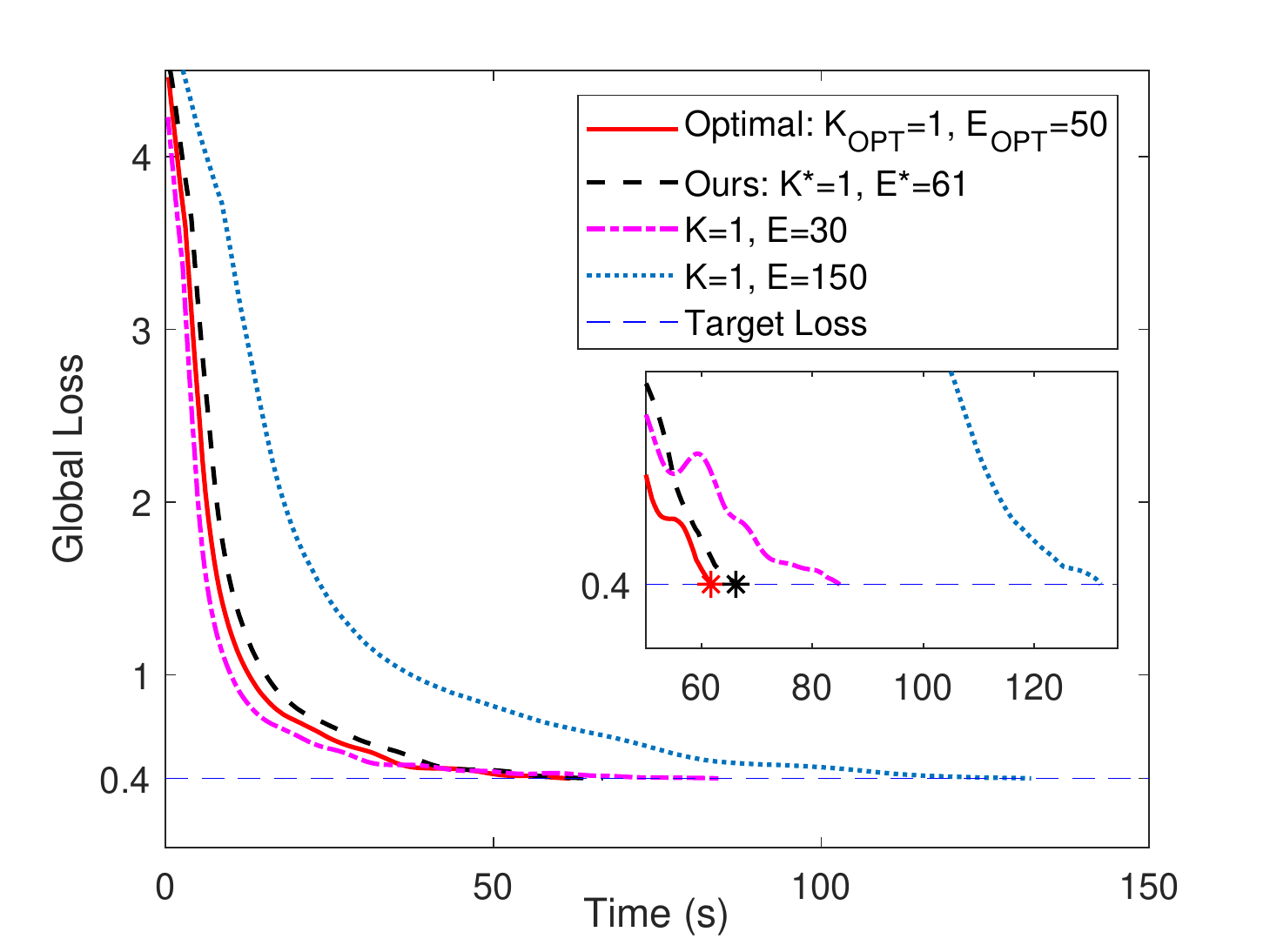}}
\subfigure[$T_\textnormal{tot}$ with different $\left(K, E\right)$ ]{\label{hd_t_tot}\includegraphics[width=5.4cm,height=4.5cm]{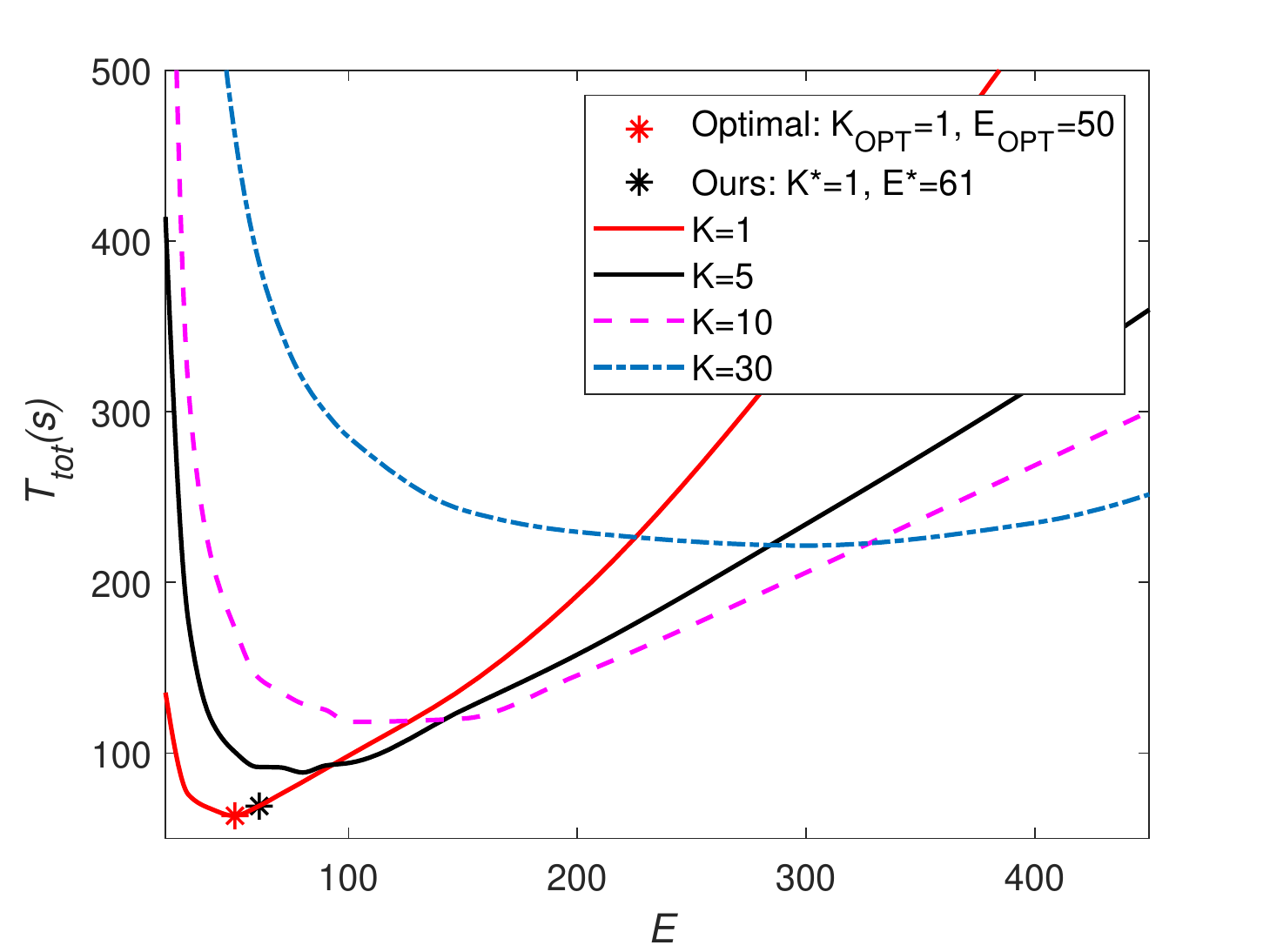}}
\subfigure[Accuracy with different $E$ ]{\label{hd_lossb}\includegraphics[width=5.4cm,height=4.5cm]{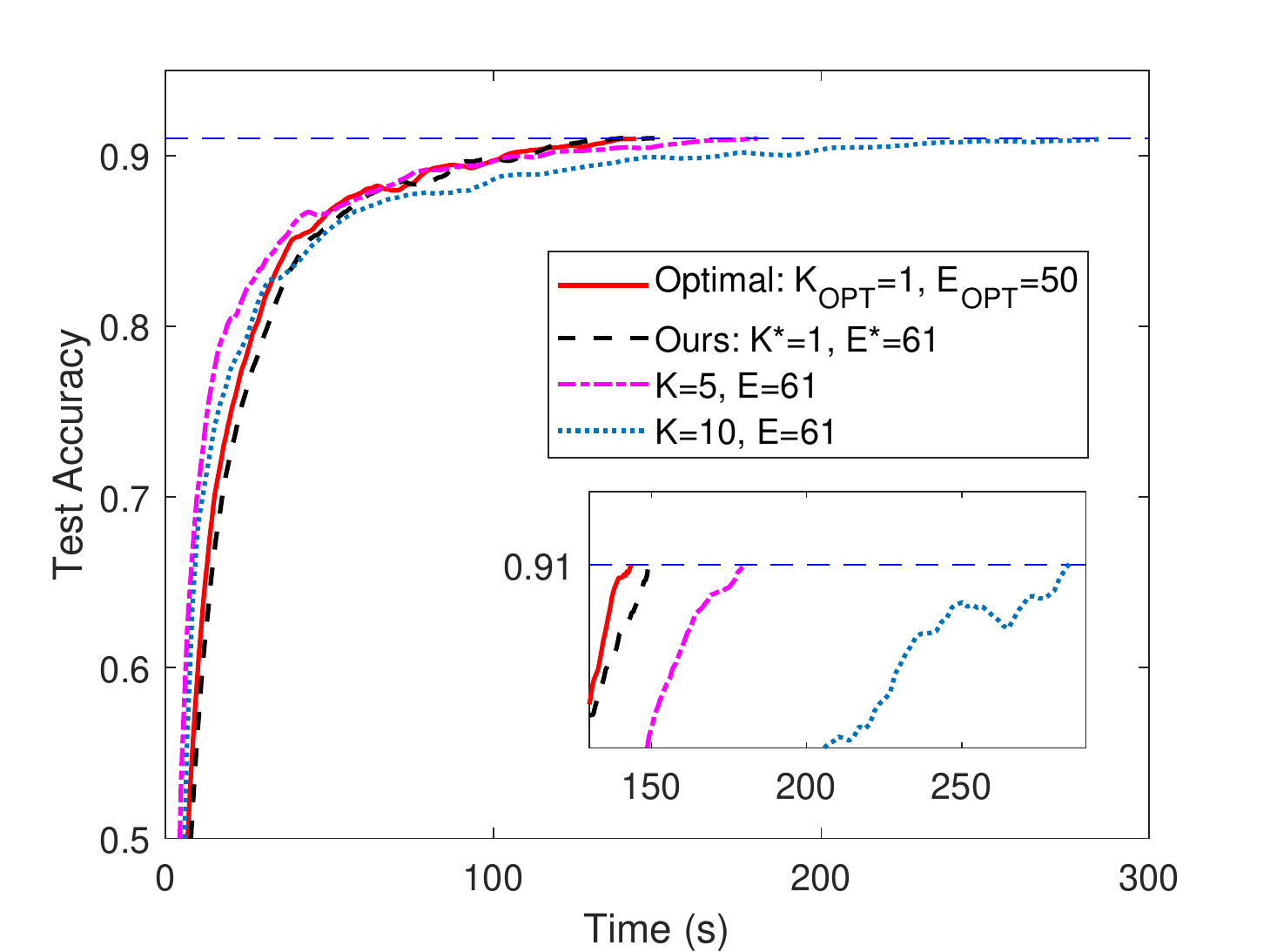}}
\subfigure[Accuracy with different $K$ ]{\label{hd_acc_t}\includegraphics[width=5.4cm,height=4.5cm]{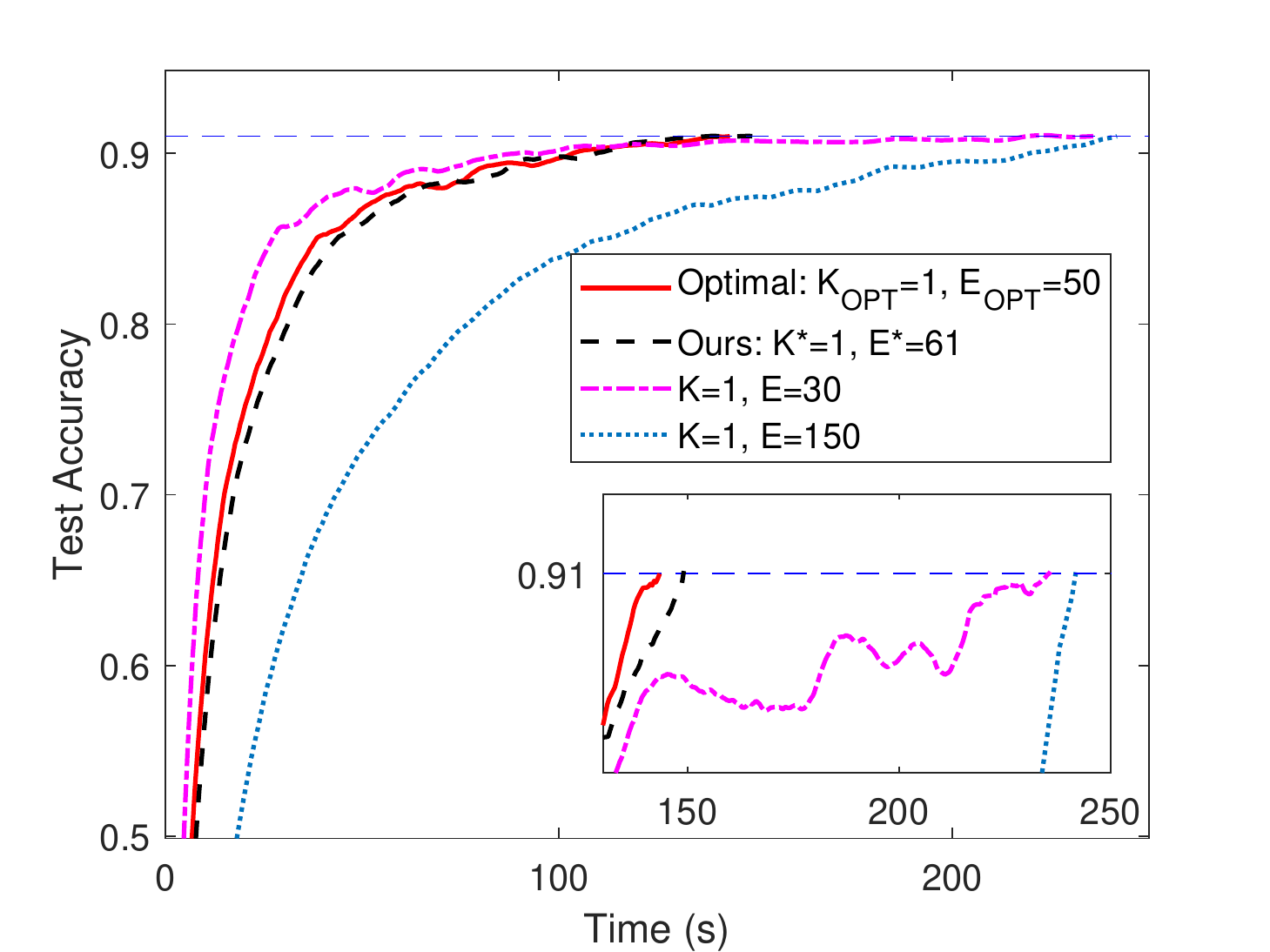}}
\caption{
Training performance of $T_\textnormal{tot}$ for  \textbf{Prototype Setup} with logistic regression and MNIST dataset for $\gamma\!=\!0$. (a)-(c): Our solution achieves the target loss using $66.2$~s compared to the empirical optimum $61.7$~s, 
but faster than 
those with $E$ being too small or too large 
and those with $K$ being large. (d)-(e): 
 Our solution achieves target test accuracy $91\%$  slightly longer than the optimal solution, but faster than 
 the non-optimal values of $\left(K,E\right)$ in (a) and (b).}
\label{hd}
\end{figure*}

\subsubsection{Optimality Performance}
We first present the estimation process and results of $\frac{A_0}{B_0}$  for the three  experiment setups in Table~\ref{sample3}.  Figs.~\ref{hd}--\ref{c_tot} compare the \textbf{performance of our {proposed solution of ($K^*,\ E^*$) with empirical optimal solution}  ($K_\textnormal{OPT},\ E_\textnormal{OPT}$)} {and other} ($K$, $E$) {pairs}\footnote{The benchmark $E$ and $K$ are chosen to be either lager or smaller than the empirical optimal ones for comparison.} for Prototype and  Simulation Setups, respectively.\footnote{We note that ($K^*$, $E^*$) are obtained by estimating the average value of $\frac{A_0}{B_0}$, whose estimation process is summarized in Table~\ref{sample3}. Specifically, we empirically  set  two relatively high target losses $F_a$ and $F_b$ with  a few sampling pairs of $\left(K,E\right)$. Note that due to different learning tasks and statistical heterogeneity, the sampling range of $E$ in Prototype Setup is larger than that in Simulation Setup. Then, we record the corresponding number of rounds for reaching $F_a$ and $F_b$, based on which  we calculate the averaged estimation value of $\frac{A_0}{B_0}$ using  \eqref{A0B05}.} The key observations are as follows. 

\begin{figure*}[!t]
\centering
\subfigure[Loss with different 
$K$]{\label{setup_femnist0}\includegraphics[width=5.4cm,height=4.5cm]{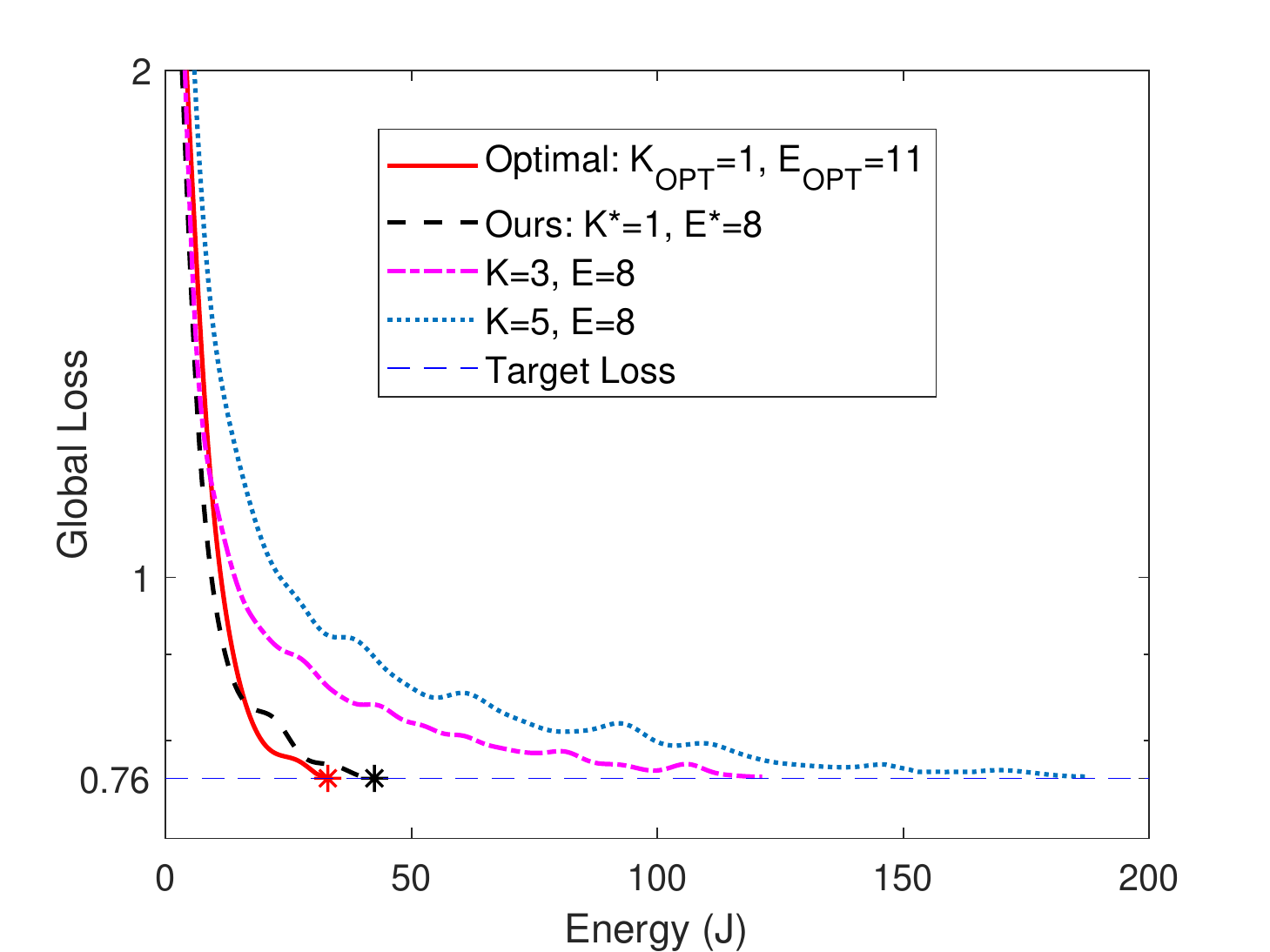}}
\subfigure[Loss with different 
$E$]{\label{setup_femnist1}\includegraphics[width=5.4cm,height=4.5cm]{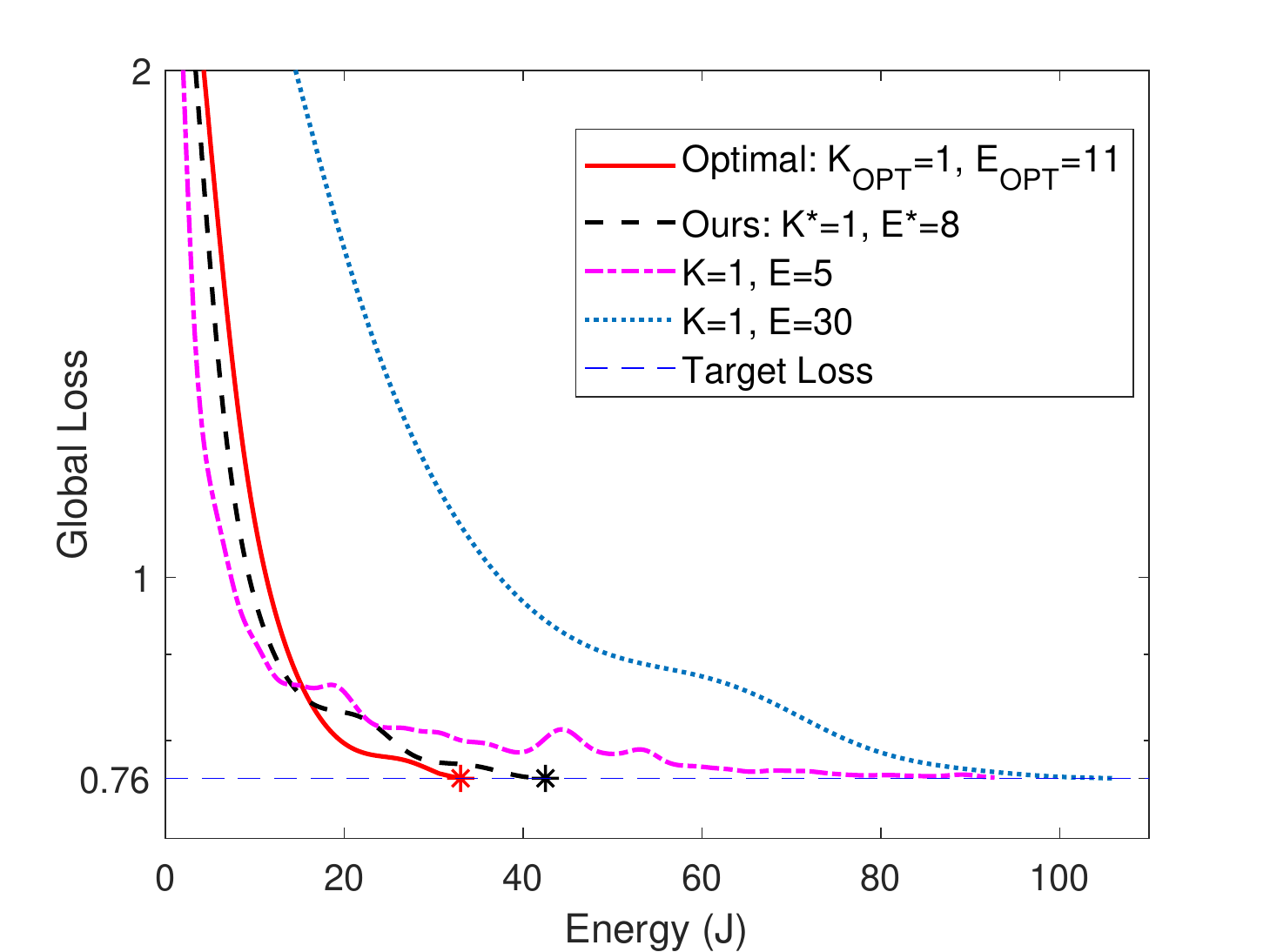}}
\subfigure[$e_\textnormal{tot}$ with different $\left(K, E\right)$ ]{\label{setup_femnist2}\includegraphics[width=5.4cm,height=4.5cm]{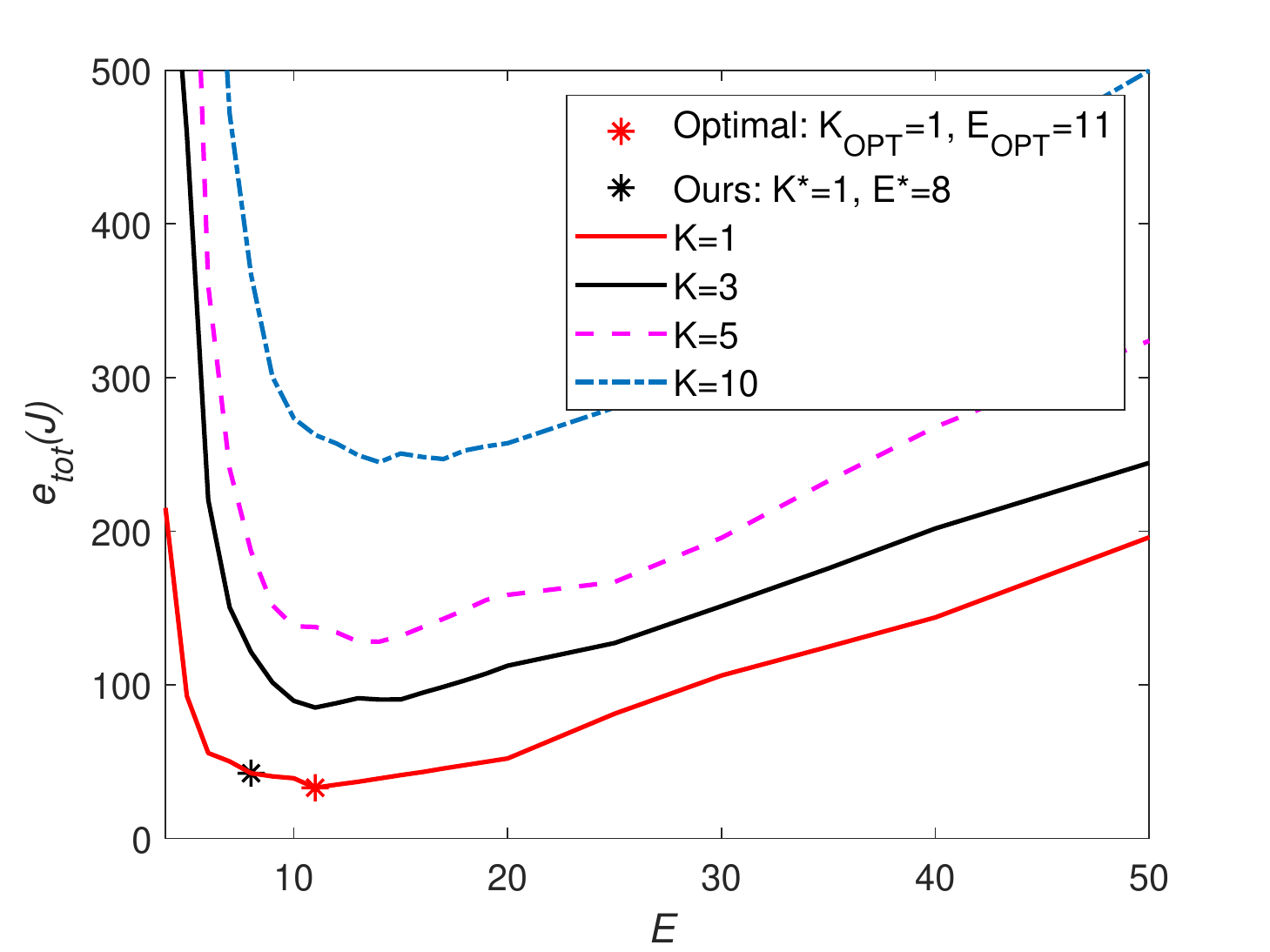}}
\subfigure[Accuracy with different $E$ ]{\label{setup_femnist3}\includegraphics[width=5.4cm,height=4.5cm]{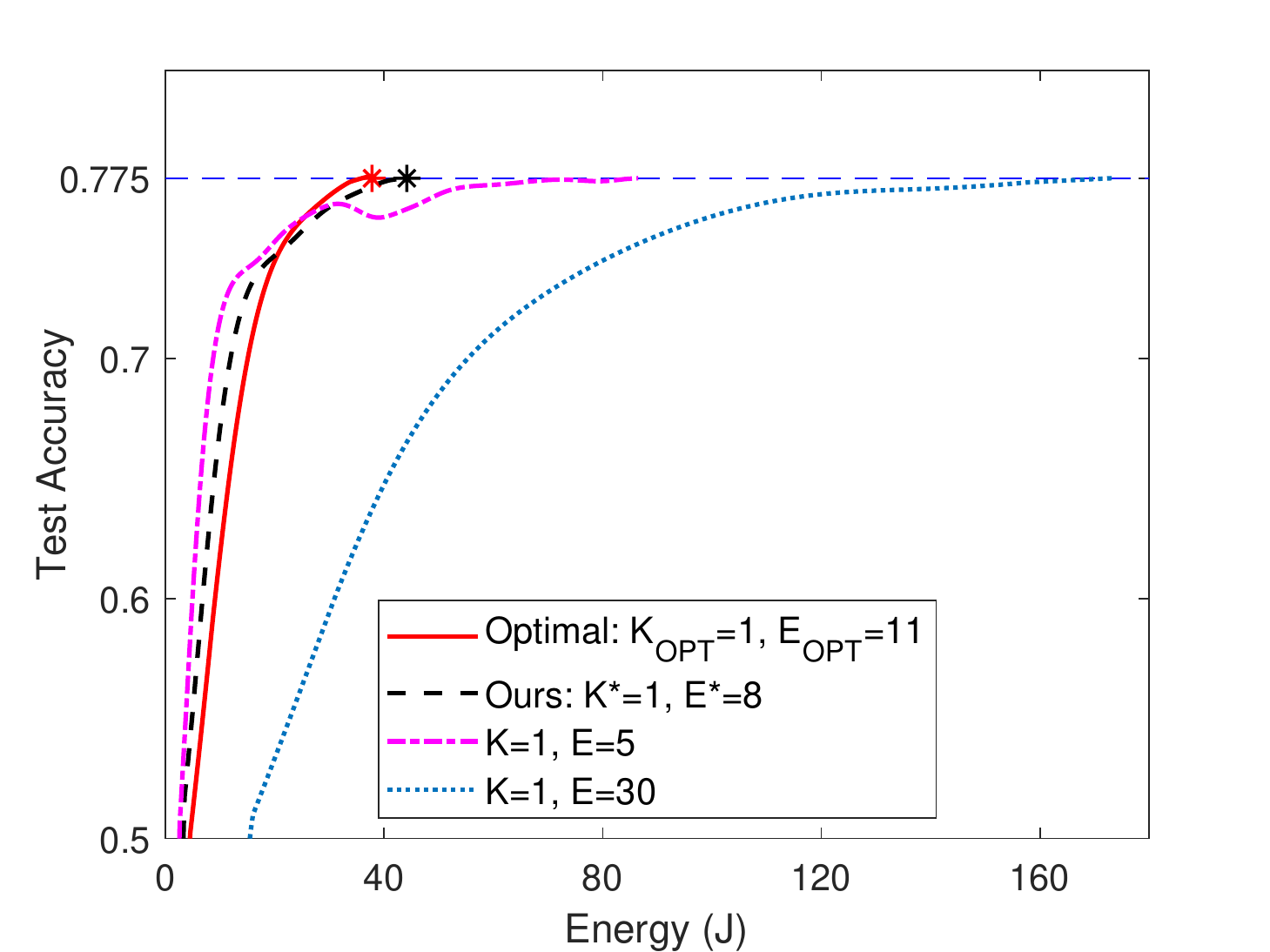}}
\subfigure[Accuracy with different $K$ ]{\label{setup_femnist4}\includegraphics[width=5.4cm,height=4.5cm]{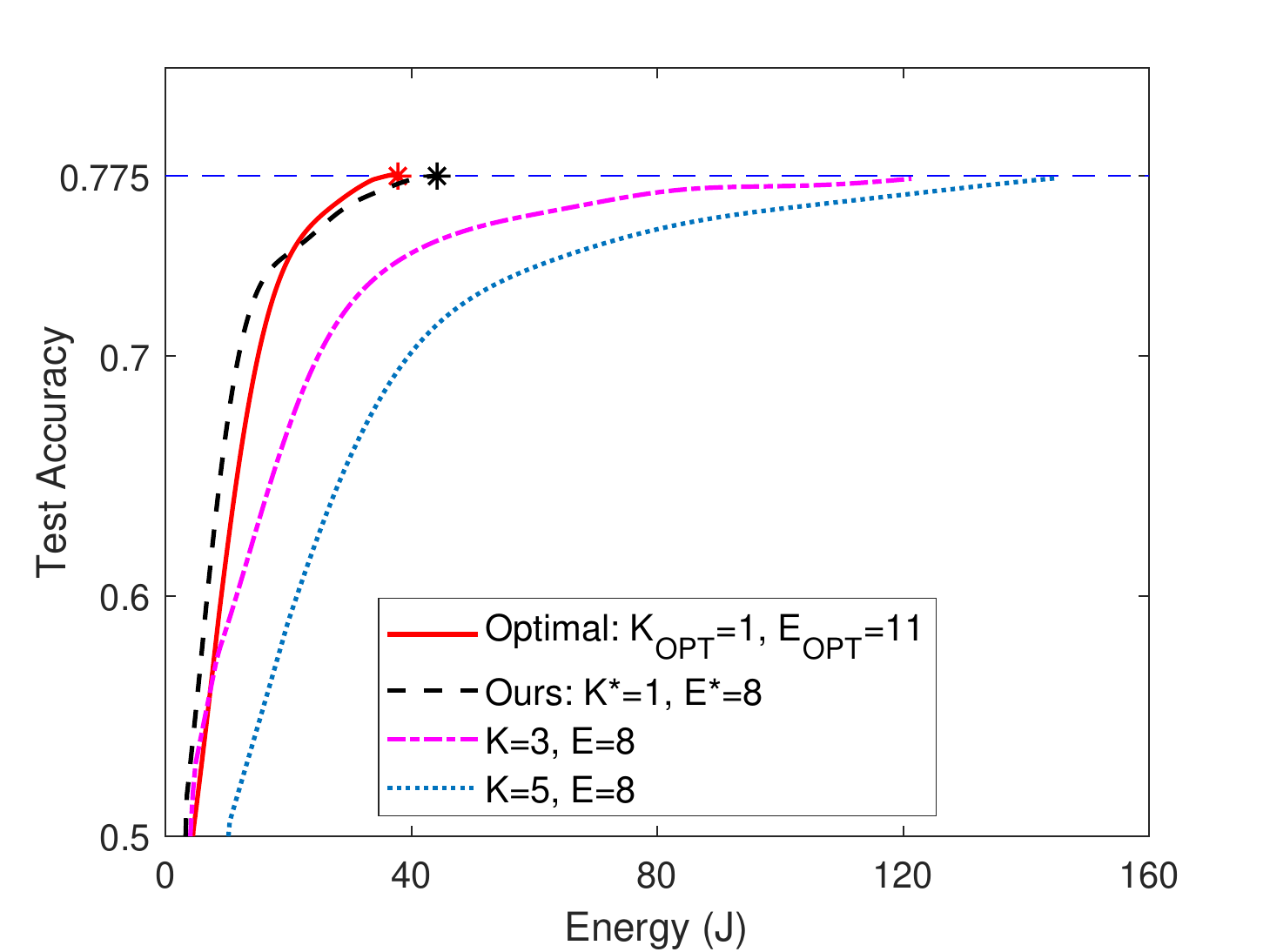}}
\caption{
{Training performance of $e_\textnormal{tot}$ for \textbf{Simulation Setup 1} with logistic regression and EMNIST dataset for $\gamma\!=\!1$. (a)-(c): Our solution achieves the target loss using $42.4$~J, which slightly larger than the empirical optimum $33.7$~J,  
but saves several folds of energy compared to  
those ($E$,$K$) pairs without optimization. 
(d-e): 
 Our solution achieves the target test accuracy $77.5\%$  using slightly more energy than the optimal solution, but saves several folds of energy than 
 the non-optimal values of $\left(K,E\right)$ in (a) and (b).}}
\label{emnist}
\end{figure*}

\begin{itemize}
    \item 
Fig.~\ref{hd} 
shows the {learning time cost} $T_\textnormal{tot}$ 
for reaching the target loss under different  $\left(K, E\right)$  for Prototype Setup.
\footnote{For hardware prototype, we only show the convergence performance with $T_\textnormal{tot}$ 
for $\gamma = 0$.}  
{In particular, \emph{our solution achieves the target loss using $66.2$ s compared to the empirical optimum $61.7$ s}, whereas other ($K$, $E$) pairs {without
optimization may consume a learning time that is several folds more}, as shown in Fig.~\ref{hd}(c).\footnote{The intersections in Fig. 4(c) and Fig. 5(d) show that, for a given $E$, different numbers of $K$ may result in the same total learning time. This is because the trade-off between the total number of rounds for reaching the target loss and the per-round time, as a large $K$ reduces the number of rounds but yields a longer per-round time, whereas
a small $K$ reduces the per-round time but requires more rounds.}}   
 \item {Fig.~\ref{emnist} depicts the energy cost $e_\textnormal{tot}$ for reaching the target loss under different  $\left(K, E\right)$  for Simulation Setup 1.  \emph{Our solution  achieves the target loss and accuracy with the similar energy cost compared to the empirical optimal solution,} and saves several folds of energy compared to those ($K$, $E$) pairs without
optimization.} 

\item Figs.~\ref{c_tot} shows the {total cost $C_\textnormal{tot}$} for reaching the target loss for {different $\gamma$} under different $\left(K,\ E\right)$ for Simulation Setup 2.  
Comparing to other ($K$, $E$) pairs without
optimization in Fig.~\ref{c_tot}(c)-(f), {our proposed solutions incur a  similar total cost as the corresponding empirical optimal ones through the entire range of $\gamma$}. 
\end{itemize}

\begin{figure*}[t]
\centering
\subfigure[Loss with $E$ for $\gamma=0.5$]{
\includegraphics[width=5.34cm,height=4.4cm]{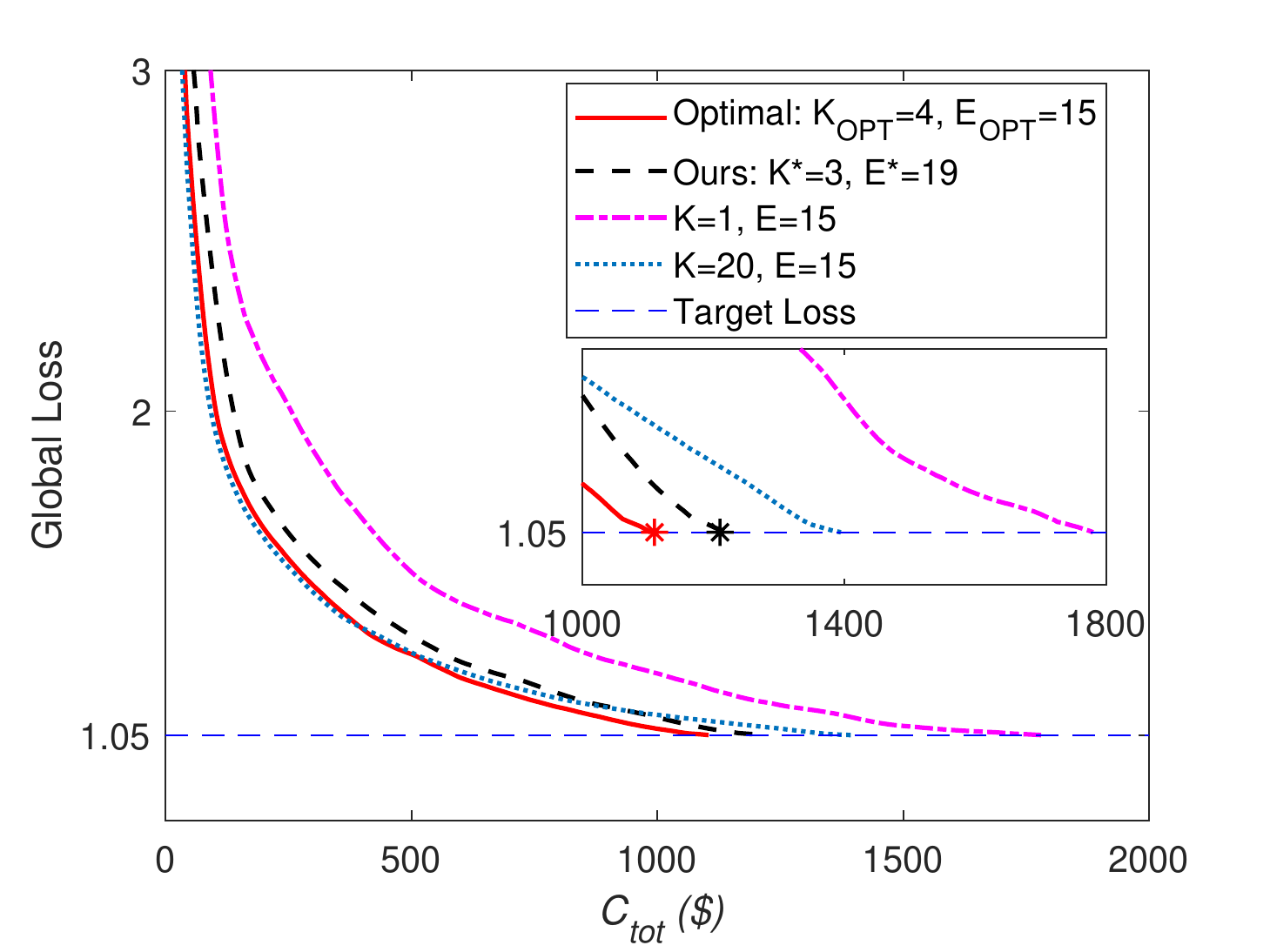}}
\subfigure[Loss with $K$ for $\gamma=0.5$]{\includegraphics[width=5.27cm,height=4.4cm]{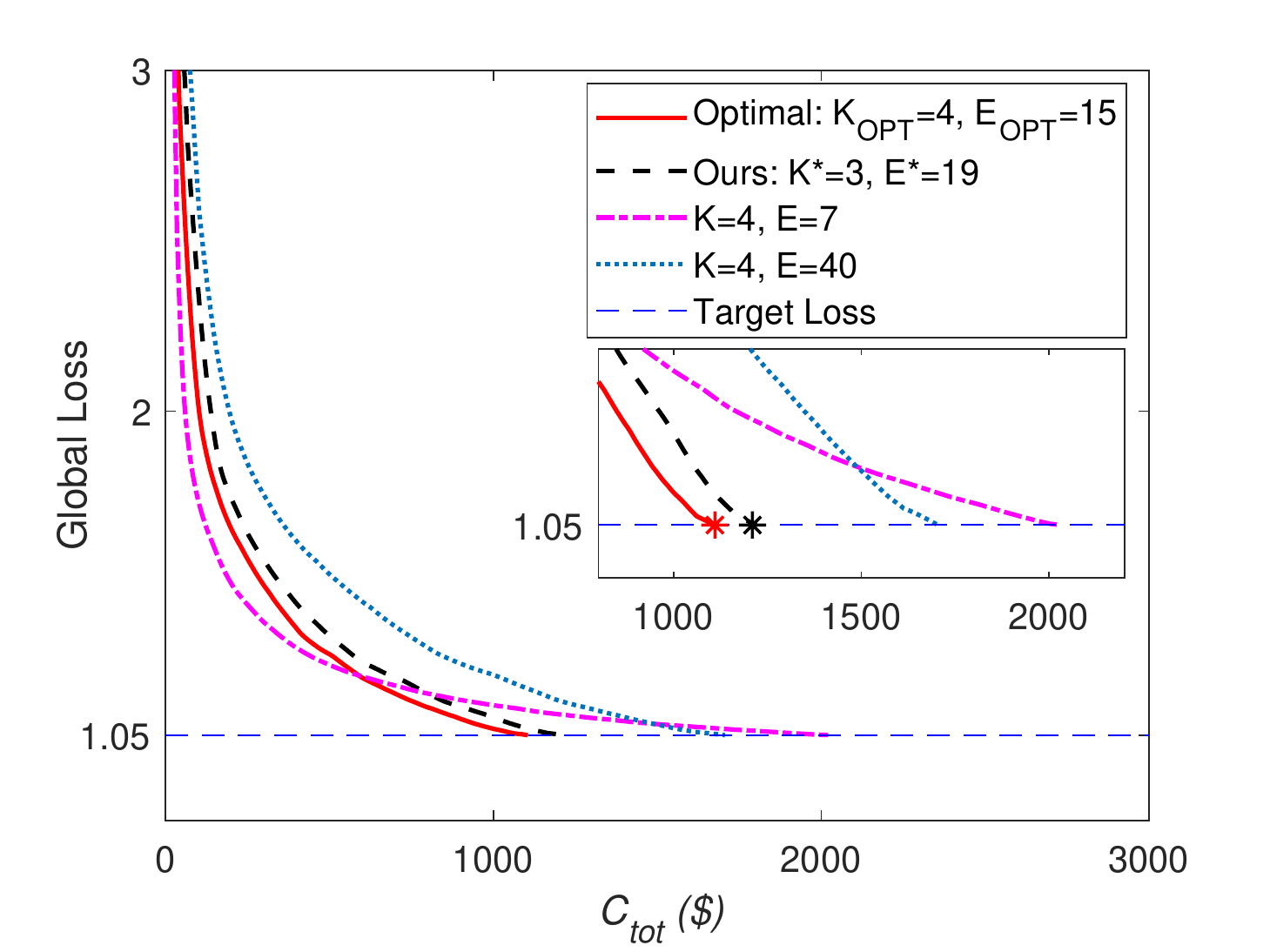}}
\subfigure[\label{ctot_gamma05}$C_\textnormal{tot}$ with $\left(K, E\right)$ for $\gamma=0.5$]{
\includegraphics[width=5.27cm,height=4.4cm]{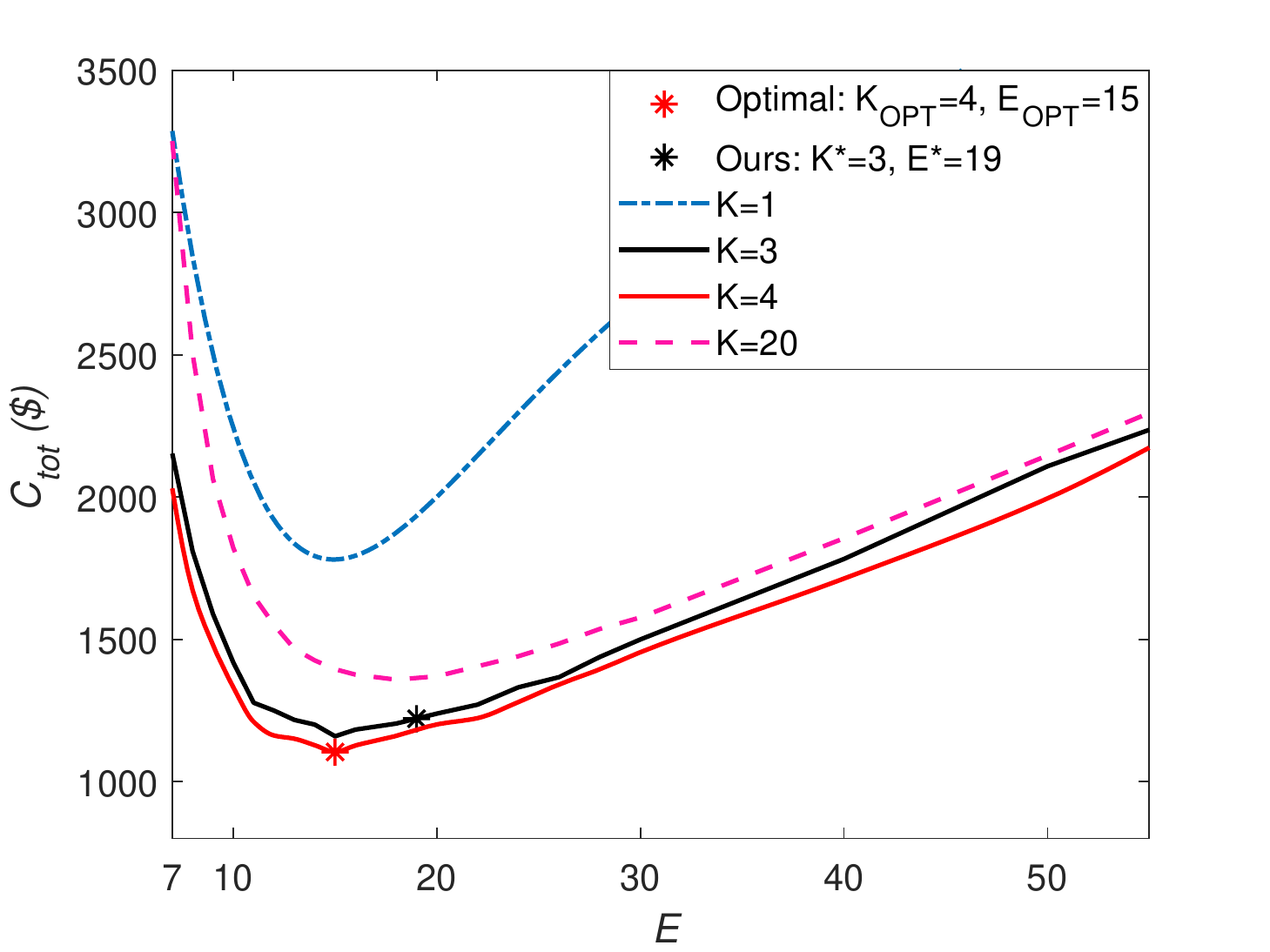}}
\subfigure[\label{ctot_gamma0}$C_\textnormal{tot}$ with $\left(K, E\right)$ for $\gamma=0$]{
\includegraphics[width=5.27cm,height=4.4cm]{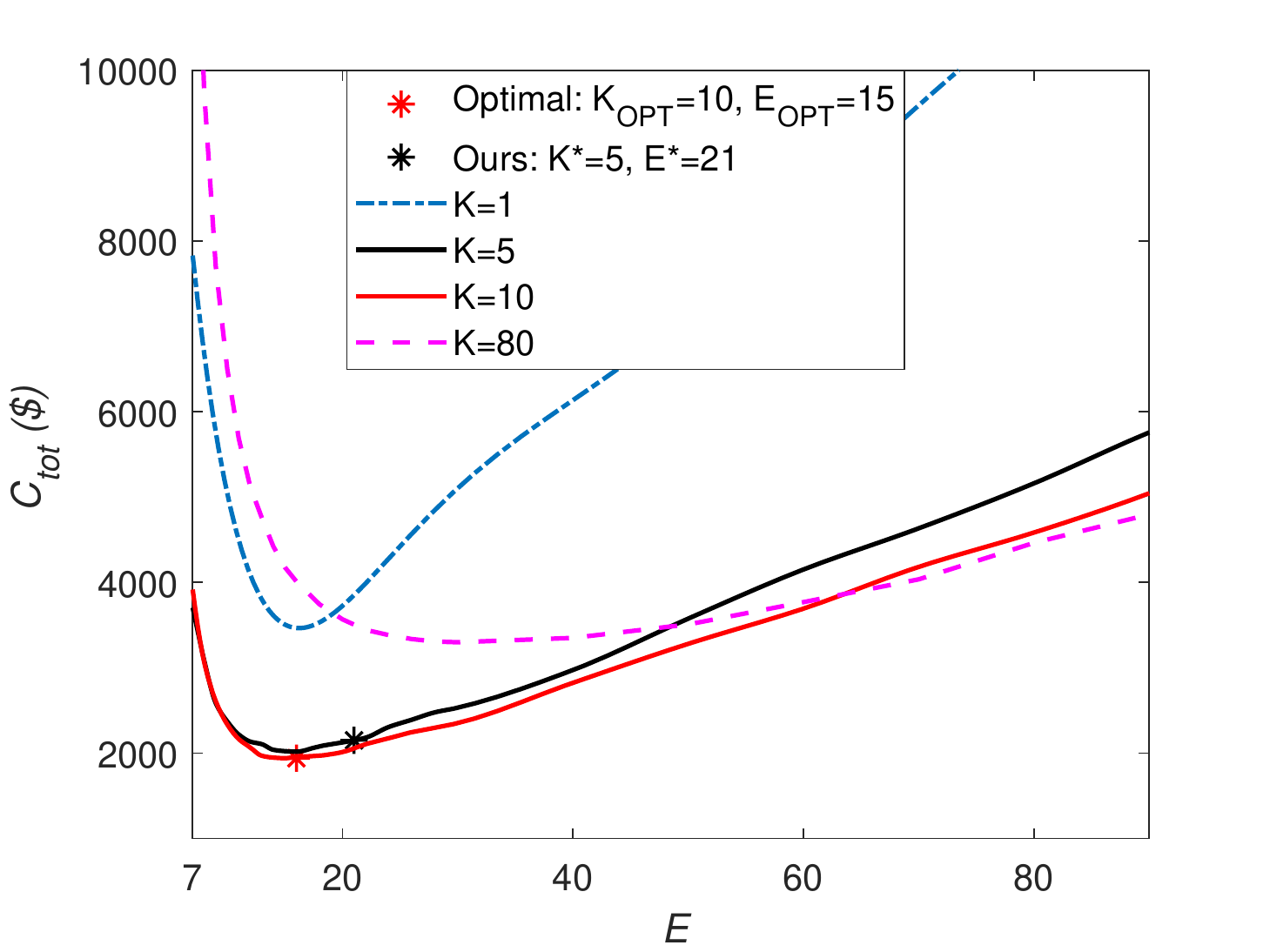}}
\subfigure[\label{ctot_gamma1}$C_\textnormal{tot}$ with $\left(K, E\right)$ for $\gamma=1$]{
\includegraphics[width=5.27cm,height=4.4cm]{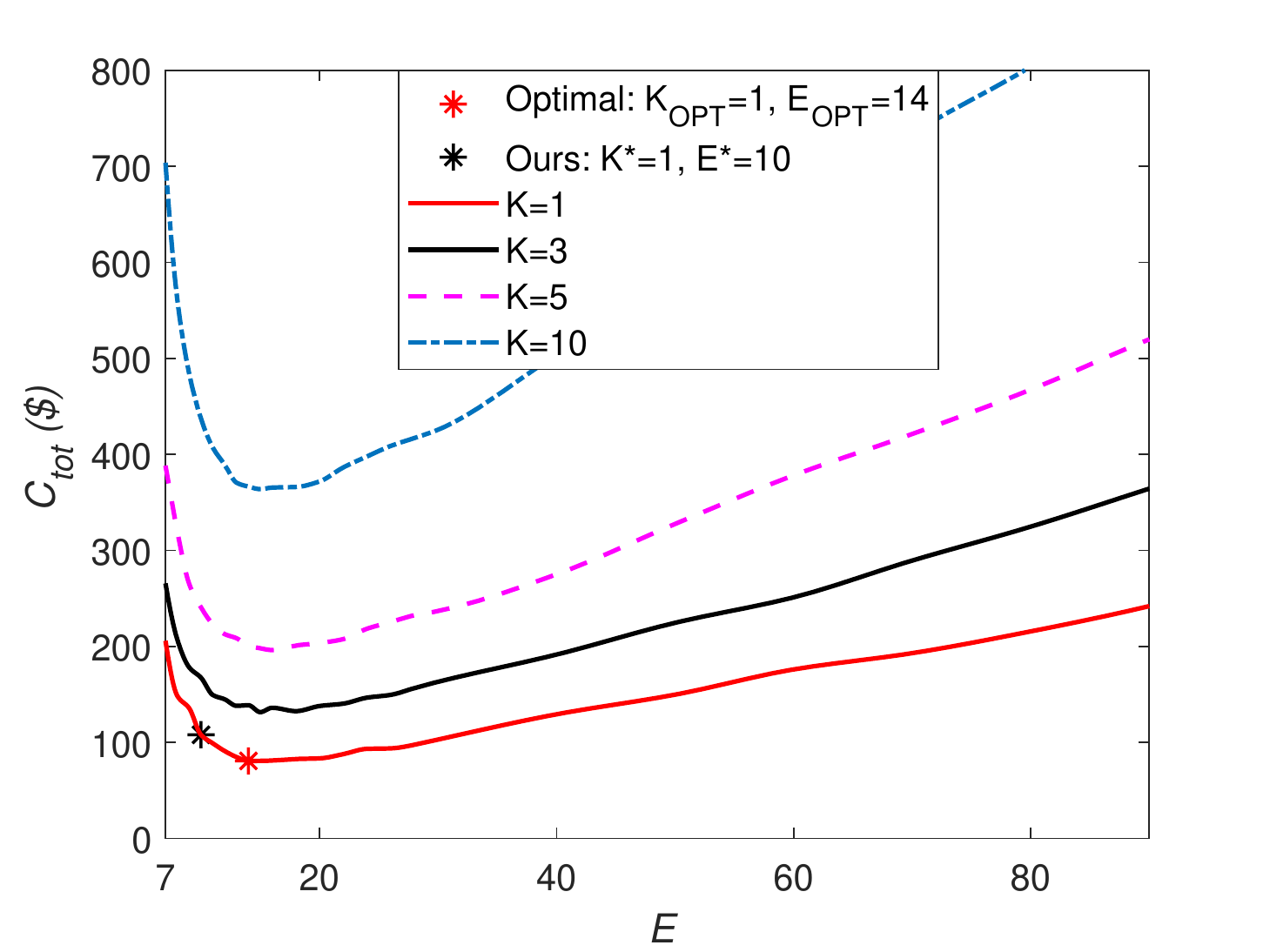}}
\subfigure[\label{tradeoffctot}$C_\textnormal{tot}$ with $\gamma$]{\includegraphics[width=5.3cm,height=4.41cm]{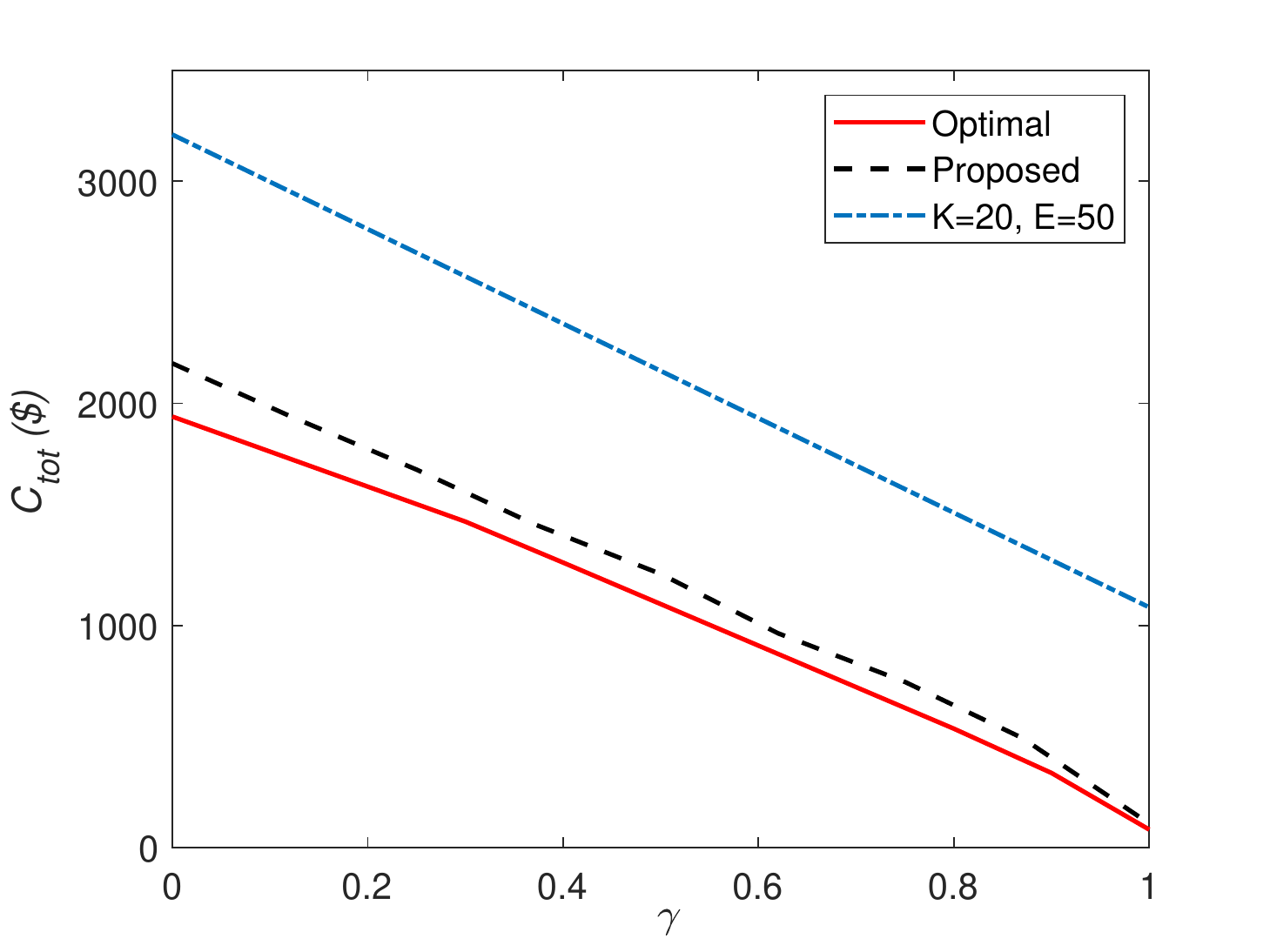}}
\caption{Performance of
$C_\textnormal{tot}$ for reaching the target loss 1.05  
for different $\gamma$ for 
\textbf{Simulation Setup 2} with logistic regression and Synthetic (1,1).  
(a)-(b): When 
$\gamma=0.5$
    our solution reaches the target loss $1.05$ using \$1221 compared to the empirical optimal \$1104. 
      (c)-(f): When $\gamma$ increases from 0 to 1, our proposed solutions reach the target loss using the similar total cost as the corresponding empirical optimal ones. 
}
\label{c_tot}
\end{figure*}

\subsubsection{Property Validation} 
\label{sec:propertyValidation}
We highlight that \textbf{our derived theoretical properties of $K$ and $E$ 
can be validated empirically} in both prototype and simulation experiments. 
Particularly,  we manually decrease $t_p$ and $e_p$ in Simulation Setup 2, and  
 shows how system parameters affect the design principles in  Fig.~\ref{property}.  
 We summarize the key results as follows.

  \begin{itemize}
  
   \item We observe from Figs.~\ref{ctot_gamma0}-\ref{ctot_gamma1} that the optimal $K$ decreases from 10 to 1 as $\gamma$ increases from 0 to 1. Particularly,  we see from Figs.~\ref{setup_femnist2} and \ref{ctot_gamma1} that, when $\gamma=1$, the energy cost  
    strictly increases in $K$, with $K_\textnormal{OPT}=1$. 
  These observations confirm our claim in Theorem~\ref{theorem:t_tot}.
  
 \item   Figs.~\ref{hd_t_tot}, \ref{setup_femnist2}, \ref{ctot_gamma05}--\ref{ctot_gamma1} demonstrate that 
  for any fixed value of  $K$,  the corresponding total cost first decreases and then increases as $E$ increases, which confirms our Theorem~\ref{corollary:t_tot_E1}.

 \item Comparing  Fig.~\ref{change_tp} with  Fig.~\ref{ctot_gamma0}, 
 we observe that for a fixed value of $E$, the optimal $K$ decreases as $t_p$ decreases. For example, for $E=26$, the optimal $K$ 
 decreases from $10$ to $2$ as $t_p$ decreases, 
 which confirms Theorem~\ref{corollary:t_tot_K}. 
 
 \item  Comparing  Fig.~\ref{change_ep} with  Fig.~\ref{ctot_gamma1}, 
 for fixed value of $K$, the optimal $E$ increases as $e_p$ decreases, e.g., for $K\!=\!1$, the optimal $E$  
   increases from 14 to 20, which confirms Theorem~\ref{corollary:t_tot_E2}.

\end{itemize}

\section{Conclusion}
\label{sec:conclusion}

In this work, we have studied the cost-effective design for federated learning in mobile edge networks. 
We proposed a new time-sharing scheduling scheme which captures system heterogeneity in terms of computation and wireless communication, and analyzed how to optimally choose the  number  of participating  clients  ($K$)  and  the  number  of  local  iterations ($E$), which are two essential control variables in FL, to minimize the total cost  
while ensuring convergence. 
We proposed a sampling-based control algorithm which efficiently solves the optimization problem with marginal overhead. We also derived 
insightful solution properties which helps identify the design principles for different optimization goals, e.g., reducing learning time or saving energy. 
Extensive experimentation results validated our theoretical analysis and demonstrated the effectiveness and efficiency of our control algorithm. 
Our optimization design is orthogonal to most works on resource allocation for FL systems, e.g, transmission power or CPU frequency, 
and can be used together with those techniques to further reduce the cost. 

\begin{figure*}[t]
\centering
\subfigure[\label{change_tp} $C_\textnormal{tot}$ with $\left(K, E\right)$ for $\gamma=0$ ]{\includegraphics[width=5.4cm,height=4.47cm]{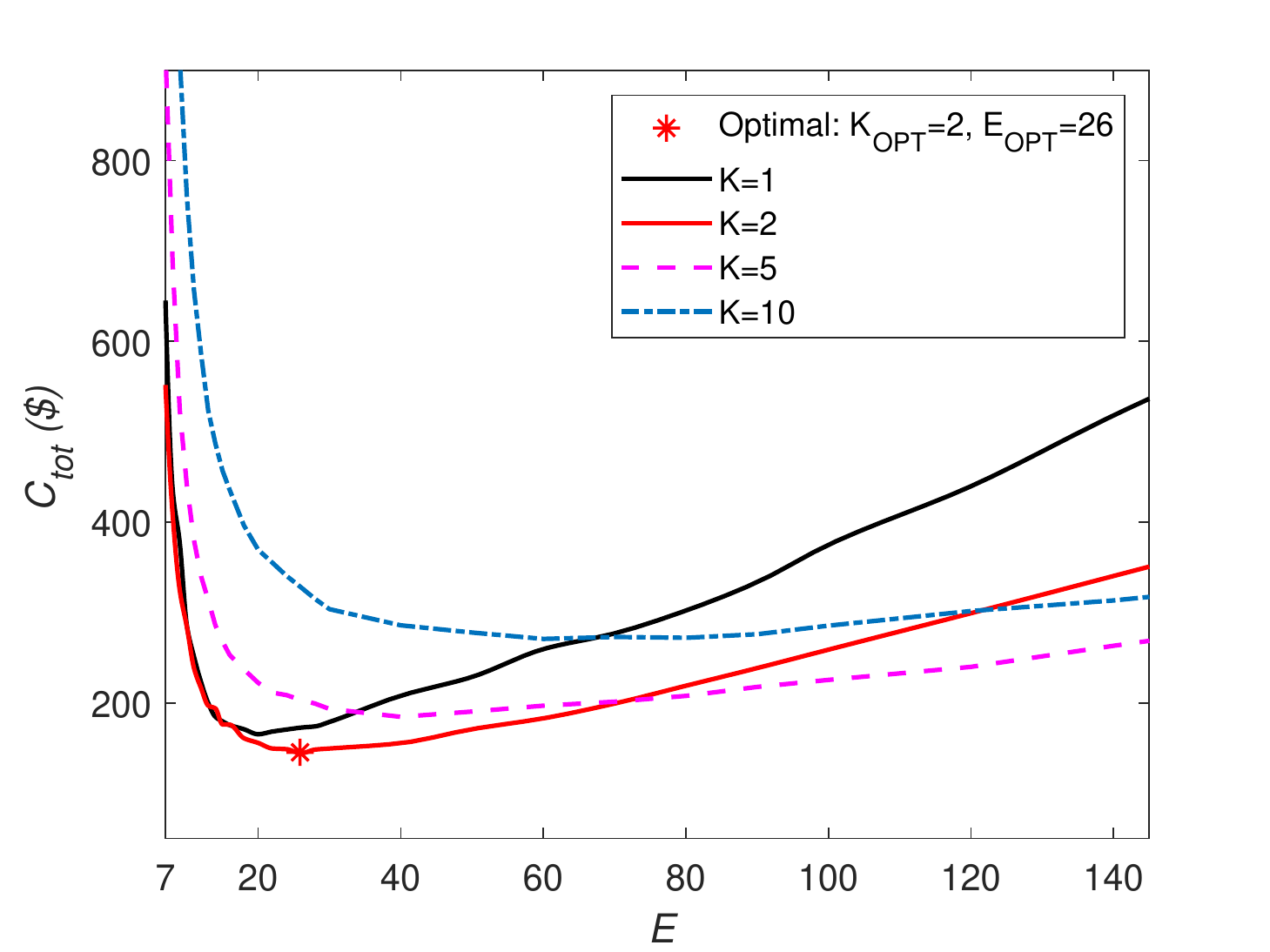}}
\subfigure[\label{change_ep} $C_\textnormal{tot}$ with $\left(K, E\right)$ for $\gamma=1$ ]{\includegraphics[width=5.4cm,height=4.47cm]{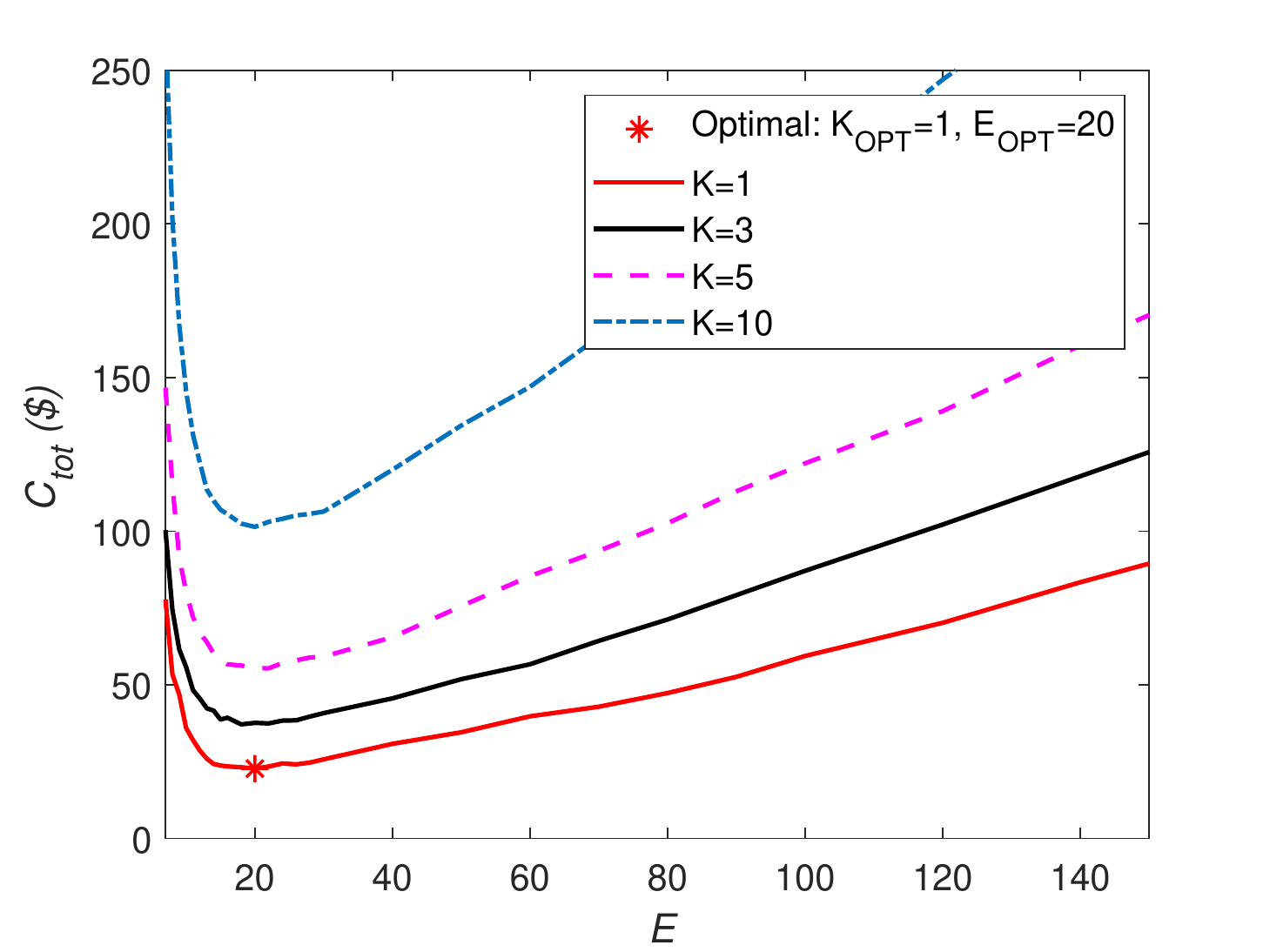}}
\caption{Performance of
$C_\textnormal{tot}$ for reaching the target loss 1.05  
for (a) $\gamma=0$ and (b) $\gamma=1$ for 
 \textbf{revised Simulation Setup 2}, where we manually decrease $t_p$ from $0.5$~s to $0.1$~s and  $e_p$ from $0.01$~J to $0.002$~J. 
}
\label{property}
\end{figure*}

 \appendices
\section {Proof of Theorem 1}
\label{theorem1app}
We prove by contradiction. For any reordered clients sequence $i\!-\!2$, $i\!-\!1$ and $i$ with  $t_{i-2,p}>t_{i-1,p}>t_{i,p}$, then the time after scheduling the three clients is:
\begin{equation}
\label{app_theo1}
\begin{aligned}
      T_i=&\max \left\{ {{t_{i,p}}, {T_{i - 1}}} \right\} + {t_{i,m}}\\
      =&\max \left\{ {{t_{i,p}}, {\max \left\{ {{t_{i-1,p}},{T_{i - 2}}} \right\} + {t_{i-1,m}}}} \right\} + {t_{i,m}}.
\end{aligned}
\end{equation}
Now, suppose we switch the scheduling order of $i\!-\!1$ and $i$, then the time after scheduling the three clients is:
\begin{equation}
\label{app_theo2}
\begin{aligned}
      T_{i-1}^\prime
      =\max \left\{ {{t _{i-1,p}}, {\max \left\{ {{t _{i,p}},{T_{i-2}}} \right\} + {t_{i,m}}}} \right\} + {t_{i-1,m}}.
\end{aligned}
\end{equation}
Case 1: if $t_{i,p}>t_{i-1,p}>T_{i-2}$, then $T_i$ in \eqref{app_theo1} reduces to
\begin{equation}
\begin{aligned}
T_i
=&\max \left\{ {{t _{i,p}}, \ {{t _{i-1,p}} \! +\! {t_{i-1,m}}}} \right\} + {t_{i,m}},
\end{aligned}
\end{equation}
and $T_{i-1}^\prime$ in \eqref{app_theo2} reduces to
\begin{equation}
\begin{aligned}
T_{i-1}^\prime
=&\max \left\{ {{t _{i-1,p}}, {{t _{i,p}} + {t_{i,m}}}} \right\} + {t_{i-1,m}}\\
=& {{t_{i,p}} + {t_{i,m}}}  + {t_{i-1,m}}\\
>& \max \left\{ {{t _{i,p}}, \ {{t _{i-1,p}} \! +\! {t_{i-1,m}}}} \right\} + {t_{i,m}}=T_i.
\end{aligned}
\end{equation}
Case 2: if $T_{i-2}>t_{i,p}>t_{i-1,p}$, then $T_i$ in \eqref{app_theo1} reduces to
\begin{equation}
\begin{aligned}
T_i
=&\max \left\{ {{t _{i,p}}, \ T_{i - 2} \! +\! {t_{i-1,m}}} \right\} + {t_{i,m}}\\
=&T_{i - 2} \! +\! {t_{i-1,m}} + {t_{i,m}},
\end{aligned}
\end{equation}
and $T_{i-1}^\prime$ in \eqref{app_theo2} reduces to
\begin{equation}
\begin{aligned}
T_{i-1}^\prime
=&\max \left\{ {{t _{i-1,p}}, {{T_{i - 2}} + {t_{i,m}}}} \right\} + {t_{i-1,m}}\\
=&T_{i - 2} \! +\! {t_{i-1,m}} + {t_{i,m}}=T_i.
\end{aligned}
\end{equation}
Case 3: if $t_{i,p}>T_{i-2}>t_{i-1,p}$, then $T_i$ in \eqref{app_theo1} reduces to
\begin{equation}
\begin{aligned}
T_i
=\max \left\{ {{t _{i,p}}, \ {{T _{i-2}}  + {t_{i-1,m}}}} \right\} + {t_{i,m}},
\end{aligned}
\end{equation}
and $T_{i-1}^\prime$ in \eqref{app_theo2} reduces to
\begin{equation}
\begin{aligned}
T_{i-1}
=&\max \left\{ {{t _{i-1,p}}, {{t _{i,p}} + {t_{i,m}}}} \right\} + {t_{i-1,m}}\\
=& {{t _{i,p}} + {t_{i,m}}}  + {t_{i-1,m}}\\
>& \max \left\{ {{t_{i,p}}, \ {{T_{i-2}}  + {t_{i-1,m}}}} \right\} + {t_{i,m}}=T_i.
\end{aligned}
\end{equation}
Therefore, we conclude that if any clients are not scheduled based on increasing order of $t_{k,p}$, scheduling the same sampled clients would result in a longer time.

\section{Proof of Lemma 2}
\label{lemma2app}
Since the $K$ clients are sampled uniformly at random in each  round \emph{without replacement}, the ``first" scheduling client is $\min_{k \in \mathcal{K}^{r}}\{t_{k,p}\}$.    
Thus, in the reordered sequence \eqref{reorder}, those who could be served as the first are the $\{1,2,\ldots,N\!-\!K\!+\!1\}$th clients.  

The probability of client $i\in\{1,2,\ldots,N-K+1\}$ being the ``first" client is $\frac{{C_{N-i}^{K-1}}}{C_{N}^{K}}$, where $C_{N}^{K}$ is the number of all possible $K$ clients combinations,  ${C_{N-k}^{K-1}}$ is the number of $K$ clients combinations with client $i$ being the ``first" (fast) client. In other words, the rest $K-1$ clients must be taken from the behind the $N-K$ clients who placed behind client $k$. 
Therefore, the expected time of the ``first" client is 
\begin{equation}
    \Expect[\min_{k \in \mathcal{K}^{r}}\{t_{k,p}E\}]=\frac{\sum\nolimits_{i=1}^{N-K+1}{C_{N-i}^{K-1}}t_{i,p} }{C_{N}^{K}}E.
\end{equation}

Then, similar to the proof in \emph{Lemma} 1, for uniform at random sampling, the probability of each client being sampled in each round is $\frac{K}{N}$. Thus. we have
\begin{equation}
\begin{aligned}
    \Expect[\sum_{k=1}^{K}t_{k,m}^{r}]=&\frac{K}{N} \Expect[\sum_{i=1}^{N}t_{i,m}^{r}]\\
    =&K\frac{\sum_{i=1}^{N}\Expect[t_{i,m}^{r}]}{N}
    =K\frac{\sum_{i=i}^{N}\overline{t}_{i,m}}{N}
    =t_mK.
\end{aligned}
\end{equation}

Given that the computation time and communication time are independent, the approximate per-round time in \eqref{app_tr_wireless} can be expressed as 
 \begin{equation}
 \begin{aligned}
\Expect[T^{r}] =&\Expect[\min_{k \in \mathcal{K}^{r}}\{t_{k,p}E\}]+ \Expect[\sum_{k=1}^{K}t_{k,m}^{r}]\\
=&\frac{\sum\nolimits_{i=1}^{N-K+1}{C_{N-i}^{K-1}}t_{i,p} }{C_{N}^{K}}E+t_mK.
 \end{aligned}
\end{equation}
Therefore,  \eqref{avgtime} can be obtained for $R$ round communications, which concludes this proof.

\section{Proof of Theorem 3}
\label{theorem3app}
Taking the first order derivative of $\tilde{\Expect}[C_\textnormal{tot}]$ over $K$ for any given $E$, we have
\begin{equation}
\label{partialK0}
\begin{aligned}
\frac{\partial\tilde{\Expect}[C_\textnormal{tot}]}{\partial K}=&\left[(1\!-\!\gamma)t_m+\gamma\left(e_pE \!+\!{e_m}\right)\right]\left(\frac{A_0}{E}+\frac{B_0(N\!-\!2)E}{N\!-\!1}\right)\\
&-\frac{(1\!-\!\gamma)B_0Nt_pE^2}{(N\!-\!1)K^2}.
\end{aligned}
\end{equation}
When $\gamma=1$, $\frac{\partial\tilde{\Expect}[C_\textnormal{tot}]}{\partial K}$ is always positive, thus, $K^\ast=1$. However, when $0\le \gamma<1$,  $\frac{\partial\tilde{\Expect}[C_\textnormal{tot}]}{\partial K}$ is negative for small $K$ and positive for large $K$. Thus, as $K$ increases, $\tilde{\Expect}[C_\textnormal{tot}]$ first decreases and then increases, thus $K^\ast \in [1,N]$.       
By letting $\frac{\partial\tilde{\Expect}[C_\textnormal{tot}]}{\partial K}=0$, 
and dividing $(1-\gamma)$, 
we concludes that $K^{\ast}$ decreases as $\gamma$ due to the fact that $\frac{\gamma}{(1-\gamma)}$ is an increasing function in $\gamma$.

\section{Proof of Theorem 4}
\label{theorem4app}
Following the proof in Appendix \ref{theorem3app}, we let $\frac{\partial\tilde{\Expect}[C_\textnormal{tot}]}{\partial K}=0$. Then, it is straightforward to obtain that, for any given $E$ and $0<\gamma<1$, the optimal $K$ increases in $t_p$ while decreases in  $t_m$, $e_m$, and $e_p$. In particular, according to Theorem \ref{theorem:t_tot}, when $\gamma=1$, $K^\ast=1$ and thus is independent with $e_m$ and $e_p$.

\section{Proof of Theorem 5}
\label{theorem5app}
Taking the first order derivative of $\tilde{\Expect}[C_\textnormal{tot}]$ over $E$ for any given $K$, we have
\begin{equation}
\begin{aligned}
\label{partialE_theorem4}
\frac{\partial\tilde{\Expect}[C_\textnormal{tot}]}{\partial E}&=B_{0}\left[(1\!-\!\gamma)\left( 2t_{p} E \!+\! t_{m}K \right)\!+\!\gamma K(2e_pE\!+\!e_m)\right] \\
&\cdot
\left(1\!+\!\frac{N\!-\!K}{K(N\!-\!1)}\!\right)-\frac{ A_{0}K[(1\!-\!\gamma)t_m\!+\!\gamma e_m]}{E^{2}}.
\end{aligned}\end{equation}
For any $0\le\gamma\le1$, and any feasible $K$, \eqref{partialE_theorem4} is negative when $E$ is small and positive when $E$ is large. Thus, as $E$ increases, $\tilde{\Expect}[C_\textnormal{tot}]$ first decreases and then increases. %

\section{Proof of Theorem 6}
\label{theorem6app}
Following the proof in Appendix \ref{theorem5app}, we let  $\frac{\partial\tilde{\Expect}[C_\textnormal{tot}]}{\partial E}=0$. Then, we have
\begin{equation}
\begin{aligned}
\label{partialE_appF}
\frac{\left[(1\!-\!\gamma)\left( 2t_{p} E^3 \!+\! t_{m}KE^2 \right)\!+\!\gamma K(2e_pE^3\!+\!e_mE^2)\right]}{K[(1\!-\!\gamma)t_m\!+\!\gamma e_m]} \\
=\frac{ A_{0}}{B_0\left(1\!+\!\frac{N\!-\!K}{K(N\!-\!1)}\!\right)}.
\end{aligned}
\end{equation}
Thus, it is straightforward to see that the solution of $E$ increases as $t_p$ or $e_p$ decreases. 

In particular, when $\gamma=0$, the solution of $E$ increases in $\frac{t_m}{t_p}$. 
Similarly,  when $\gamma=1$, the solution of $E$ increases in $\frac{e_m}{e_p}$.


\bibliographystyle{IEEEtran}
\bibliography{ref}
\end{document}